\newcommand{\forarxiv}[1]{#1}
\newcommand{\foraistats}[1]{}
\newcommand{\thp}{\theta}
\newcommand{\thprop}{\lambda}
\newcommand{\Norm}{N}
\newcommand{\nset}{\mathbb{N}}
\newcommand{\nsetpos}{\mathbb{N}_{>0}}
\newcommand{\rset}{\mathbb{R}}
\newcommand{\rsetnonneg}{\mathbb{R}_+}
\newcommand{\rsetpos}{\mathbb{R}_+^*}
\newcommand{\1}[1]{\mathbbm{1}_{#1}}
\newcommand{\E}{\mathbb{E}}
\newcommand{\prob}{\mathbb{P}}
\newcommand{\repfunc}[1]{f_{#1}}
\newcommand{\grad}[1]{H_{#1}}
\newcommand{\htil}[1]{\tilde{H}_{#1}}
\newcommand{\mf}[1]{h(#1)}
\newcommand{\set}[1]{\mathsf{#1}}
\newcommand{\alg}[1]{\mathcal{#1}}
\newcommand{\meas}[1]{\mathsf{M}(#1)}
\newcommand{\probmeas}[1]{\mathsf{M}_1(#1)}
\newcommand{\parspace}{\mathsf{\Theta}}
\newcommand{\partlaw}[1]{Q^N_{#1, \theta}}
\newcommand{\propspace}{\mathsf{\Lambda}}
\newcommand{\catdist}{\mathsf{cat}}
\newcommand{\tensprod}{\varotimes}
\newcommand{\eqdef}{\vcentcolon=}
\newcommand{\ie}{\emph{i.e.}\xspace}
\newcommand{\eg}{\emph{e.g.}\xspace}
\newcommand{\elbo}{\mathcal{L}}
\newcommand{\vi}{\texttt{VI}}
\newcommand{\iwae}{{\tiny{\texttt{IWAE}}}}
\newcommand{\ADAM}{\texttt{ADAM}}
\newcommand{\IWAE}{\texttt{IWAE}}
\newcommand{\smc}{\texttt{SMC}}
\newcommand{\VSMC}{\texttt{VSMC}}
\newcommand{\OVSMC}{\texttt{OVSMC}}
\newcommand{\OVF}{\texttt{OVF}}
\newcommand{\gradapprox}{\mathcal{G}}
\newcommand{\norm}[1]{\lVert #1 \rVert}
\newcommand{\abs}[1]{\left\lvert #1 \right\rvert}
\newcommand{\noteJO}[1]{#1}
\newcommand{\z}{Z}
\newcommand{\epart}[2]{\xi_{#1}^{#2}}
\newcommand{\epartil}[2]{\tilde{\xi}_{#1}^{#2}}
\newcommand{\epartb}[2]{x_{#1}^{#2}}
\newcommand{\epartilb}[2]{\tilde{x}_{#1}^{#2}}
\newcommand{\I}[2]{I_{#1}^{#2}}
\newcommand{\auxrv}{\varepsilon}
\newcommand{\auxrvb}{v}
\newcommand{\wgt}[2]{\omega_{#1}^{#2}}
\newcommand{\wgtsum}[1]{\Omega_{#1}}
\newcommand{\wgtfunc}[2]{\omega_{#1}^{#2}}
\newcommand{\wgtfuncb}[2]{w_{#1}^{#2}}
\newcommand{\tz}[1]{T_{#1}}
\newcommand{\tzc}[1]{\tilde{T}_{#1}}
\newcommand{\hidker}[1]{M_{#1}}
\newcommand{\hidkertrue}{\bar{M}}
\newcommand{\hiddens}[1]{m_{#1}}
\newcommand{\hiddenstrue}{\bar{m}}
\newcommand{\emker}[1]{G_{#1}}
\newcommand{\emkertrue}{\bar{G}}
\newcommand{\emdens}[1]{g_{#1}}
\newcommand{\emdenstrue}{\bar{g}}
\newcommand{\propdens}[1]{r_{#1}}
\newcommand{\propker}[1]{R_{#1}}
\newcommand{\kernel}[1]{#1}
\newcommand{\indmeas}{\nu}
\newcommand{\statmeas}[1]{\tau_{#1}}
\newcommand{\xinit}{\chi}
\newcommand{\xinitdens}[1]{m_{#1}}
\newcommand{\rhoq}{\varrho}
\newcommand{\cq}{\varsigma}
\newcommand{\kq}{\kappa}
\newcommand{\kqt}{\tilde{\kappa}}
\newcommand{\eq}{\epsilon}
\newcommand{\aq}{\tilde{\alpha}}
\newcommand{\ch}{c_h}
\theoremstyle{plain}
\newtheorem{theorem}{Theorem}[section]
\newtheorem{proposition}[theorem]{Proposition}
\newtheorem{lemma}[theorem]{Lemma}
\newtheorem{corollary}[theorem]{Corollary}
\theoremstyle{definition}
\newtheorem{assumption}[theorem]{Assumption}
\theoremstyle{remark}
\newtheorem{remark}[theorem]{Remark}
\title[Online Variational Sequential Monte Carlo]{\Large Online Variational Sequential Monte Carlo}
\author[A. Mastrototaro \and J. Olsson]{Alessandro Mastrototaro \and Jimmy Olsson}
\address{\small Department of Mathematics, KTH Royal Institute of Technology, Stockholm, Sweden}
\email{alemas@kth.se, jimmyol@kth.se}
\begin{document}

\begin{abstract}
%

Being the most classical generative model for serial data, state-space models (SSM) are fundamental in AI and statistical machine learning. In SSM, any form of parameter learning or latent state inference typically involves the computation of complex latent-state posteriors. In this work, we build upon the variational sequential Monte Carlo ({\VSMC}) method, which provides computationally efficient and accurate model parameter estimation and Bayesian latent-state inference by combining particle methods and variational inference. While standard {\VSMC} operates in the offline mode, by re-processing repeatedly a given batch of data, we distribute the approximation of the gradient of the {\VSMC} surrogate ELBO in time using stochastic approximation, allowing for online learning in the presence of streams of data. This results in an algorithm, online {\VSMC}, that is capable of performing efficiently, entirely on-the-fly, both parameter estimation and particle proposal adaptation. In addition, we provide rigorous theoretical results describing the algorithm's convergence properties as the number of data tends to infinity as well as numerical illustrations of its excellent convergence properties and usefulness also in batch-processing settings.

\end{abstract}
\begingroup
\let\MakeUppercase\relax 
\maketitle
\endgroup

\section{Introduction}\label{sec:intro}
\forarxiv{\let\thefootnote\relax\footnotetext{Published as a conference paper at ICML 2024.}}

Being the most classical structured probabilistic generative model for serial data, \emph{state-space models} (SSM), also known as general state-space \emph{hidden Markov models}, are fundamental and ubiquitous in AI and statistical machine learning \citep{cappe:moulines:ryden:2005,bishop:2016}. In SSM, any form of parameter learning or state inference typically involves the computation of complex joint posterior distributions of latent-state variables---the so-called \emph{joint-smoothing distributions}---given records of observations, which is a delicate problem whose analytical solution is intractable outside the limited cases of linear Gaussian models or models with finite state space. In a recent line of research \citep{le:2018,maddison:2017,naesseth:2018}, this problem is addressed by combining \emph{variational inference} \citep{blei:2017,kingma:welling:2014} and \emph{sequential Monte Carlo} (SMC) \emph{methods} \citep{gordon:salmond:smith:1993,doucet:defreitas:gordon:2001,chopin:papaspiliopoulos:2020} in order to design flexible families of variational joint-state distributions in the form of particle-trajectory laws. By optimizing the Kullback--Leibler  divergence from (KLD) the law of the particle trajectories to the joint-smoothing distribution, this \emph{variational SMC} ({\VSMC}) approach is killing two birds with one stone by learning not only the unknown model parameters but also, in parallel, an optimal particle proposal kernel, the latter being a problem that has received a lot of attention in the literature \citep{doucet:godsill:andrieu:2000,cornebise:moulines:olsson:2008,gu:2015}. The procedure can be viewed as a non-trivial extension of the \emph{importance-weighted auto-encoder} ({\IWAE}) proposed by \citet{burda:2016}, where standard self-normalized importance sampling has been replaced by sequential importance sampling with systematic resampling in order to obtain a tighter \emph{evidence lower bound} (ELBO). The objective of {\VSMC} is a similar ELBO whose gradient is approximated by the expectation, under the law of the random numbers generated by the SMC algorithm, of the gradient of the logarithm of the standard particle-based likelihood estimator, the latter being unbiased \citep{delmoral:2004,chopin:papaspiliopoulos:2020}, is obtained as a by-product of the particle approximation of the marginal-state posterior---or \emph{filter distribution}---flow;  thus, whereas traditional particle-based inference in SSM typically relies on particle approximation of the joint-smoothing distributions (with aim of approximating, \emph{e.g.}, the intermediate quantity of the expectation--maximization, EM, algorithm or the score function directly via the Fisher identity), which is cumbersome due to the so-called particle-path degeneracy phenomenon \citep[Section~8.3]{kitagawa:1996,cappe:moulines:ryden:2005}, {\VSMC} allows, using the reparameterization trick, the model parameters as well as an optimal particle proposal kernel to be simultaneously learned by processing repeatedly the given data batch using a standard particle filter. 

In its basic form, {\VSMC} is an offline inference technique, in the sense that it requires the full data batch to be processed between every update of the model and variational parameters. Still, in a wide range of AI and machine-learning contexts, observed data become available sequentially through a data stream, requiring learning to be performed in an online fashion. The increasing interest in online machine-learning technology also stems from the need of processing data batches of ever-increasing sizes. Thus, in this work we propose an online, stochastic approximation-based version of {\VSMC}, \emph{online variational SMC} ({\OVSMC}), which can be used for simultaneous online model learning and proposal adaptation. On the contrary to traditional approaches to particle-based online parameter learning in SSM such as particle-based recursive maximum-likelihood \citep[RML,][]{legland:mevel:1997,delmoral:doucet:singh:2015} or online EM \citep{mongillo:deneve:2008,cappe:2009}, which rely on particle-based online approximation of the tangent filter (filter derivative) and the EM-intermediate quantity, respectively, our approach does not involve any particle smoothing, which, in order to avoid the problem of collapsing particle ancestral genealogies, typically calls for backward-sampling techniques which can be computationally very costly. 
In a recent work, \citet{campbell:2021} used variational approximations of the backward kernels for online state and parameter learning. Although this, interestingly, provides a non-particle-based methodology, it is very computationally intensive (see Section~\ref{sec:experiments}).
In addition, our method allows for effective online adaptation of the particle proposal kernel in a way that differs from typical approaches in the literature; indeed, whereas traditional approaches aim to optimize the proposal time step by time step on the basis of local criteria \citep[see, \eg,][]{cornebise:moulines:olsson:2008,cornebise:moulines:olsson:2014,zhao:2021}, our approach is based on a global KLD-based criterion (described in detail in Section~\ref{sec:method}) which allows the particle cloud to be effectively guided toward state-space regions of high local likelihood, without running the risk of over-adapting locally the proposal to current (or temporarily neighboring) observations. 

Although our proposed method has similarities with the \emph{streaming variational Monte Carlo} methodology proposed by \citet{zhao:2021}, it is essentially different from the same. An important difference is that the aforementioned work focuses on local optimization of model and variational parameters, such that these are assumed to vary with time and are therefore optimized time step by time step, while {\OVSMC} estimate global, amortized parameters using stochastic approximation. 
In addition, in order to show that our approach is statistically well founded, we provide rigorous theoretical results describing its convergence (Theorem~\ref{thm:main}). Under certain strong mixing assumptions, the time-normalized gradient guiding the learning of batch {\VSMC} can, using Birkhoff's ergodic theorem, be shown to converge as the observation batch size increases towards infinity to a deterministic function depending on the parameter as well as the particle sample size; we show that as the number of observations tends to infinity, {\OVSMC} is solving the same problem as this ideal, `asymptotic' {\VSMC}, in the sense that the mean field targeted by {\OVSMC} coincides with the time-normalized asymptotic gradient of {\VSMC}. 

Finally, we illustrate {\OVSMC} numerically on a number of classical SSM and more complex generative models, for which the method exhibits fast parameter learning and efficient adaptation of the particle proposal kernel. In the same numerical study, we also show that {\OVSMC} is a strong challenger of {\VSMC} on batch problems. 

The paper is structured as follows. In Section~\ref{sec:background}, we review particle filtering in the context of SSM, and their relation with variational inference in some recent works. In Section~\ref{sec:method}, we present our methodology for online learning of proposal distributions and parameters and Section~\ref{sec:theory} and Section~\ref{sec:experiments} provide our theoretical results and numerical experiments, respectively. 

\section{Background}\label{sec:background}
An SSM is a bivariate, time-homogeneous Markov chain $(X_t,Y_t)_{t \in \nset}$ evolving on some general measurable product space $(\set{X} \times \set{Y}, \alg{X} \tensprod \alg{Y})$. In most applications, $(\set{X}, \alg{X})$ and $(\set{Y}, \alg{Y})$ are Euclidean and furnished with the corresponding Borel $\sigma$-fields. More specifically, the marginal process $(X_t)_{t \in \nset}$, referred to as the \emph{state process}, is itself assumed to be a time-homogeneous Markov with transition density $\hiddens{\thp}(x_{t + 1} \mid x_t)$ and initial-state density $\xinitdens{0}(x_0)$ (with respect to the same dominating measure $dx$, typically the Lebesgue measure) on $\set{X}$. The state process is latent but partially observed through the \emph{observation process} $(Y_t)_{t \in \nset}$, whose values are assumed conditionally independent given the state process such that the conditional distribution of each $Y_t$ depends on the corresponding $X_t$ only, and we denote by $\emdens{\thp}(y_t \mid x_t)$ the density (with respect to some dominating measure) on $\set{Y}$ of the latter. Using this notation, the joint density of a given vector $X_{0:t} = (X_0, \ldots, X_t)$ (this is our generic notation for vectors) of states and corresponding observations $Y_{0:t}$ is given by   
\foraistats{ \begin{multline} \label{eq:def:joint:density}
 p_\thp(x_{0:t}, y_{0:t}) = \xinitdens{0}(x_0)\emdens{\thp}(y_0\mid x_0)
	\\ \times \prod_{s=0}^{t-1}\hiddens{\thp}(x_{s+1}\mid x_s)\emdens{\thp}(y_{s+1}\mid x_{s+1}). 
 \end{multline}}
\forarxiv{ \begin{equation} \label{eq:def:joint:density}
		p_\thp(x_{0:t}, y_{0:t}) = \xinitdens{0}(x_0)\emdens{\thp}(y_0\mid x_0)\prod_{s=0}^{t-1}\hiddens{\thp}(x_{s+1}\mid x_s)\emdens{\thp}(y_{s+1}\mid x_{s+1}). 
\end{equation}}


The model dynamics is governed by some parameter vector $\thp$ belonging to some parameter space $\parspace$. 
In using these models, the focus is generally on inferring the hidden states given data, typically by determining the joint-smoothing distributions $p_\thp(x_{0:t} \mid y_{0:t}) = p_\thp(x_{0:t}, y_{0:t}) / p_\thp(y_{0:t})$ or their marginals,  such as the filter distributions $p_\thp(x_{t}\mid y_{0:t})$. Here the joint density $p_\thp(x_{0:t}, y_{0:t})$ is given by \eqref{eq:def:joint:density}, whereas the likelihood $p_\thp(y_{0:t})$ of the observations is the marginal of \eqref{eq:def:joint:density} with respect to $y_{0:t}$. 
As the calculation of the likelihood requires the computation of complex integrals in high dimensions, the joint-smoothing and filter distributions are generally---except in the cases where the model is linear Gaussian or the state space $\set{X}$ is a finite set---intractable. The calculation of the joint-smoothing distributions is of critical importance also when the parameter $\thp$ is unknown and must be estimated using maximum-likelihood or Bayesian methods \citep[see][and the references therein]{cappe:moulines:ryden:2005,kantas:doucet:singh:chopin:2015}. In the framework of SSMs we may distinguish between \emph{batch} methods, where parameter and state inference is carried through given a fixed record of observations, and \emph{online} methods, where inference is carried through in real time as new data becomes available through a data stream. In the online mode, SMC algorithms are particularly well suited for state inference; furthermore, these methods also provide a basis for online parameter estimation \citep{poyiadjis:doucet:singh:2011,delmoral:doucet:singh:2015,olsson:westerborn:2018}. We next review briefly SMC and the principles of variational inference. 

\subsection{Sequential Monte Carlo Methods}
\label{subsec:smc}
In the context of SSM, SMC methods approximate the smoothing distribution flow $(p_\thp(x_{0:t}\mid y_{0:t}))_{t \in \nset}$ by forming iteratively a sequence $(\epart{0:t}{i}, \wgt{t}{i})_{i=1}^N$, $t \in \nset$, of random samples of particles (the $\epart{0:t}{i}$'s) with associated weights (the $\wgt{t}{i}$'s). For $t \in \nset$, let $\set{X}^t \eqdef \set{X} \times \cdots \times \set{X}$ ($t$ times); 
 then for any real-valued measurable function $h_t$ on $\set{X}^{t+1}$ that is  integrable with respect to $p_\thp(x_{0:t}\mid y_{0:t})$, it holds, letting $\wgtsum{t}\eqdef\sum_{i=1}^{N}\wgt{t}{i}$,
\begin{equation}
	\sum_{i=1}^{N}\frac{\wgt{t}{i}}{\wgtsum{t}}h_t(\epart{0:t}{i})\backsimeq  \int h_t(x_{0:t})p_\thp(x_{0:t}\mid y_{0:t})\,dx_{0:t}.
\end{equation}
See \citet{delmoral:2004} for a comprehensive treatment of the theory of SMC. The SMC procedure consists of two core operations performed alternately: a \emph{selection step}, which resamples the particles with replacement according to their weights, and a \emph{mutation step}, which propagates randomly the selected particles to new locations. After importance sampling-based initialization of $(\epart{0}{i}, \wgt{0}{i})_{i=1}^N$, the random sample is updated according to Algorithm~\ref{algo:pf}, in which mutation (Line~4) is executed using some generic proposal Markov transition kernel, possibly depending on $y_{t+1}$, with density $\propdens{\thprop}(x_{t+1} \mid x_t, y_{t+1})$ parameterized by some vector $\thprop$ belonging to some parameter space $\propspace$. Line~3 corresponds to the selection step, where indices $(\I{t+1}{i})_{i = 1}^N$ guiding the resampling are drawn from the categorical distribution on $\{1,\dots,N\}$ induced by the particle weights. 

\begin{algorithm}[htb]
	\caption{Particle filter}\label{algo:pf}
	\begin{algorithmic}[1]
		\STATE {\bfseries Input:} $(\epart{0:t}{i},\wgt{t}{i})_{i=1}^N, y_{t+1}$.
		\FOR{$i=1,\dots,N$}
		\STATE draw $\I{t+1}{i}\sim \catdist((\wgt{t}{\ell})_{\ell=1}^N)$;\label{line:res}
		\STATE draw $\epart{t+1}{i}\sim\propdens{\thprop}(\cdot\mid \epart{t}{\I{t+1}{i}},y_{t+1})$;\label{line:pfprop}
		\STATE set $\epart{0:t+1}{i}\gets(\epart{0:t}{\I{t+1}{i}},\epart{t+1}{i})$;\label{line:attach_part}
		\STATE set $\wgt{t+1}{i}\gets  \dfrac{\hiddens{\thp}(\epart{t+1}{i}\mid \epart{t}{\I{t+1}{i}})\emdens{\thp}(y_{t+1}\mid \epart{t+1}{i})}{\propdens{\thprop}(\epart{t+1}{i}\mid \epart{t}{\I{t+1}{i}},y_{t+1})}$;
		\ENDFOR
		\STATE {\bfseries return} $(\epart{0:t+1}{i},\wgt{t+1}{i})_{i=1}^N$.
	\end{algorithmic}
\end{algorithm}

Determining a good proposal distribution $\propdens{\thprop}$ is crucial for the performance of the particle filter  \citep[see, \eg,][]{cornebise:moulines:olsson:2008,cornebise:moulines:olsson:2014,gu:2015,zhao:2021}, and the particles should be guided towards state-space regions of non-vanishing likelihood 
in order to avoid computational waste. For instance, if the latent process is diffuse 
while the observations are highly informative, letting naively, as in the so-called \emph{bootstrap particle filter} \citep{gordon:salmond:smith:1993}, $\propdens{\thprop} \equiv \hiddens{\thp}$ may result in many particles being assigned a negligible weight and thus significant sample depletion. A more appealing, data-driven option is to use the 
\emph{locally optimal proposal} satisfying $\propdens{\thprop}(x_{t + 1} \mid x_t, y_{t + 1})\propto\hiddens{\thp}(x_{t+1}\mid x_t)\emdens{\thp}(y_{t + 1} \mid x_{t + 1})$ (and minimizing, \eg, the Kullback--Leibler divergence \cite{cornebise:moulines:olsson:2008}); however, this proposal is available in a closed form only in a few cases \citep[Section~7.2.2.2]{doucet:godsill:andrieu:2000,cappe:moulines:ryden:2005}. In Section~\ref{sec:method} we will present a technique that allows to learn simultaneously, in an online fashion, both unknown model parameters and an efficient proposal. The proposed method is based on variational inference, which is briefly reviewed in the next section. 

\subsection{Variational Inference}\label{subsec:varinf}
Let $T \in \nset$ be a fixed time horizon. To approximate $p_\thp(x_{0:T}\mid y_{0:T})$ by a variational-inference procedure, a family $\{ q_\thprop(x_{0:T}\mid y_{0:T}) : \thprop \in \propspace \}$ of \emph{variational distributions} is designed, whereby the ELBO 
\foraistats{\begin{multline}
	\elbo^\vi(\thprop, \thp)\eqdef\E_{q_\thprop}\left[\log\left(\frac{p_\thp(X_{0:T},y_{0:T})}{q_\thprop(X_{0:T}\mid y_{0:T})}\right)\right]
	\\\le\log \E_{q_\thprop}\left[ \frac{p_\thp(X_{0:T},y_{0:T})}{q_\thprop(X_{0:T}\mid y_{0:T})} \right]= \log p_\thp(y_{0:T})
\end{multline}}\forarxiv{\begin{equation}
		\elbo^\vi(\thprop, \thp)\eqdef\E_{q_\thprop}\left[\log\left(\frac{p_\thp(X_{0:T},y_{0:T})}{q_\thprop(X_{0:T}\mid y_{0:T})}\right)\right]
		\le\log \E_{q_\thprop}\left[ \frac{p_\thp(X_{0:T},y_{0:T})}{q_\thprop(X_{0:T}\mid y_{0:T})} \right]= \log p_\thp(y_{0:T})
\end{equation}}
is maximized with respect to $(\thprop,\thp)$. Here the bound follows from Jensen's inequality. We note that the equality is satisfied in the ideal case  $q_\thprop(x_{0:T}\mid y_{0:T}) = p_\thp(x_{0:T}\mid y_{0:T})$. The optimization is performed by a combination of Monte Carlo sampling and stochastic gradient ascent. 

The \emph{importance weighted autoencoder} \citep[\IWAE,][]{burda:2016} extends this idea further by optimizing a similar but improved ELBO, where the expectation is taken with respect to the law of $N$ independent and $q_\thprop$-distributed random variables, denoted by $q_\thprop^{\tensprod N}$: 
\foraistats{\begin{multline} 
	\elbo^\iwae(\thprop, \thp)\eqdef\E_{q_\thprop^{\tensprod N}}\left[\log\left(\frac{1}{N}\sum_{i=1}^{N}\frac{p_\thp(X_{0:T}^i,y_{0:T})}{q_\thprop(X_{0:T}^i\mid y_{0:T})}\right)\right] \label{eq:sis}
	\\\le\log\E_{q_\thprop^{\tensprod N}}\left[\frac{1}{N}\sum_{i=1}^{N}\frac{p_\thp(X_{0:T}^i,y_{0:T})}{q_\thprop(X_{0:T}^i\mid y_{0:T})}\right]
	\\=\log \E_{q_\thprop}\left[ \frac{p_\thp(X_{0:T},y_{0:T})}{q_\thprop(X_{0:T}\mid y_{0:T})} \right]= \log p_\thp(y_{0:T}).
\end{multline}}
\forarxiv{\begin{multline} 
		\elbo^\iwae(\thprop, \thp)\eqdef\E_{q_\thprop^{\tensprod N}}\left[\log\left(\frac{1}{N}\sum_{i=1}^{N}\frac{p_\thp(X_{0:T}^i,y_{0:T})}{q_\thprop(X_{0:T}^i\mid y_{0:T})}\right)\right] \label{eq:sis}
		\le\log\E_{q_\thprop^{\tensprod N}}\left[\frac{1}{N}\sum_{i=1}^{N}\frac{p_\thp(X_{0:T}^i,y_{0:T})}{q_\thprop(X_{0:T}^i\mid y_{0:T})}\right]
		\\=\log \E_{q_\thprop}\left[ \frac{p_\thp(X_{0:T},y_{0:T})}{q_\thprop(X_{0:T}\mid y_{0:T})} \right]= \log p_\thp(y_{0:T}).
\end{multline}}
Again, the bound follows from Jensen's inequality and the fact that the average that is logarithmised is an unbiased estimator of the likelihood. The {\IWAE} provides an improvement on standard VI as $\elbo^\iwae$ provides a tighter lower bound on the likelihood $\log p(y_{0:T})$ and gets, assuming bounded weights, arbitrarily close to the same as $N$ tends to infinity \citep[Theorem~1]{burda:2016}. 
Still, 
once $T$ is reasonably large and $q_\thprop(x_{0:T} \mid y_{0:T})$ is not sufficiently close to $p_\thp(x_{0:T}\mid y_{0:T})$, approximating $\elbo^\iwae$ by standard Monte Carlo implies generally high variance; indeed, in the extreme case, the highly skewed distribution of the terms of the likelihood estimator will effectively reduce {\IWAE} to standard VI. 
This degeneracy problem can be counteracted by replacing standard Monte Carlo approximation with resampling-based SMC.   
For this reason, \citet{le:2018}, \citet{maddison:2017} and \citet{naesseth:2018} all define an ELBO similar to \eqref{eq:sis}, but where the expectation is taken with respect to the law $\partlaw{\thprop}$ of the random variables $(\epart{0}{i})_{i=1}^N$ and $(\epart{t}{i}, \I{t}{i})_{i = 1}^N$, $t \in \{1, \ldots, T \}$, generated by the particle filter: 
$$
\elbo^\smc(\thprop, \thp) \eqdef \E_{\partlaw{\thprop}} \left[ \log\left(  \prod_{t = 0}^T \frac{1}{N} \wgtsum{t} \right) \right],  
$$
which is again a lower bound on $\log p_\thp(x_{0:T})$ 
as the argument of the logarithm is an unbiased estimator of the likelihood \citep[see, \eg,][Proposition~16.3]{chopin:papaspiliopoulos:2020}. 

Like in standard VI, the maximization of $\elbo^\smc$ is carried out by alternately (1) processing the given data batch $y_{0:t}$ with the particle filter and (2) taking a stochastic gradient ascent step. The latter involves the differentiation of $\elbo^\smc$ with respect to $(\thprop,\thp)$, which can be carried through using the reparameterization trick \citep{kingma:welling:2014}. More specifically, it is assumed that $\propdens{\thprop}(\cdot \mid x, y)$ is reparameterizable in the sense that there exists some auxiliary random variable $\auxrv$, taking on values in some measurable space $(\set{E}, \alg{E})$ and having distribution $\indmeas$ on $(\set{E}, \alg{E})$ (the latter not depending on $\thprop$), and some function $\repfunc{\thprop}$ on $\set{X} \times \set{Y} \times \set{E}$, parameterized by $\thprop$, such that 
for every $(x, y) \in \set{X} \times \set{Y}$, the pushforward distribution $\indmeas \circ \repfunc{\thprop}^{-1}(x, y, \cdot)$ coincides with that governed by $\propdens{\thprop}(\cdot \mid x, y)$. In addition, importantly, for any given argument $(x, y, \varepsilon)$, $\repfunc{\thprop}(x, y, \varepsilon)$ is assumed to be differentiable with respect to $\thprop$. A similar reparameterization assumption is made for some initial proposal distribution denoted by $\propdens{0,\thprop}$. The previous assumptions allow us to reparameterize Algorithm~\ref{algo:pf} by splitting the procedure on Line~4 into two suboperations: first sampling $\auxrv_{t+1}^i \sim \indmeas$ and then letting $\epart{t+1}{i} \gets \repfunc{\thprop}(\epart{t}{\I{t+1}{i}}, y_{t+1}, \auxrv_{t+1}^i)$. After this, the importance weights can be calculated as explicit functions of $(\thprop,\thp)$ according to 
\foraistats{\begin{multline}\label{eq:wgtfunc}
	\wgtfunc{t+1}{i}(\thprop,\thp)\eqdef\hiddens{\thp}(\repfunc{\thprop}(\epart{t}{\I{t+1}{i}},y_{t+1},\auxrv_{t+1}^i)\mid\epart{t}{\I{t+1}{i}})
	\\\times\frac{\emdens{\thp}(y_{t+1}\mid \repfunc{\thprop}(\epart{t}{\I{t+1}{i}},y_{t+1},\auxrv_{t+1}^i))}{\propdens{\thprop}(\repfunc{\thprop}(\epart{t}{\I{t+1}{i}},y_{t+1},\auxrv_{t+1}^i)\mid\epart{t}{\I{t+1}{i}},y_{n+1})}
\end{multline}}
\forarxiv{\begin{equation}\label{eq:wgtfunc}
		\wgtfunc{t+1}{i}(\thprop,\thp)\eqdef\hiddens{\thp}(\repfunc{\thprop}(\epart{t}{\I{t+1}{i}},y_{t+1},\auxrv_{t+1}^i)\mid\epart{t}{\I{t+1}{i}})\frac{\emdens{\thp}(y_{t+1}\mid \repfunc{\thprop}(\epart{t}{\I{t+1}{i}},y_{t+1},\auxrv_{t+1}^i))}{\propdens{\thprop}(\repfunc{\thprop}(\epart{t}{\I{t+1}{i}},y_{t+1},\auxrv_{t+1}^i)\mid\epart{t}{\I{t+1}{i}},y_{n+1})}
\end{equation}}
and, at initialization, $\epart{0}{i} \gets \repfunc{\thprop}(y_0,\auxrv_{0}^i) $ and $\wgtfunc{0}{i}(\thprop,\thp)\eqdef\hiddens{0}(\repfunc{\thprop}(y_0,\auxrv_{0}^i))\emdens{\thp}(y_0\mid\repfunc{\thprop}(y_0,\auxrv_{0}^i))/\propdens{0,\thprop}(\repfunc{\thprop}(y_0,\auxrv_{0}^i)\mid y_0)$.


\citet{le:2018}, \citet{maddison:2017} and \citet{naesseth:2018} have shown that the Monte Carlo approximation of $\nabla_{\thprop,\thp} \elbo^\smc(\thprop,\thp)$ can, in order to avoid unmanageable variance, be advantageously carried through by targeting the ``surrogate'' gradient 
\begin{equation}\label{eq:def_grad}
\gradapprox_T(\thprop,\thp) \eqdef \E_{\partlaw{\thprop}} \left[ \nabla_{(\thprop,\thp)} \log\left(  \prod_{t = 0}^T \frac{1}{N} \wgtsum{t}(\thprop,\thp) \right) \right], 
\end{equation}
where $\partlaw{\thprop}$ now corresponds to the law of the random variables $(\auxrv_0^i)_{i=1}^N$ and $(\auxrv_t^i, \I{t}{i})_{i = 1}^N$, $t \in \{1, \ldots, T \}$, generated by the particle filter using the reparameterization trick.  
The approximate gradient $\gradapprox_T(\thprop,\thp)$ is indeed different from $\nabla_{(\thprop,\thp)} \elbo^\smc(\thprop, \thp)$ in that the latter contains one additional term corresponding to the expectation of the product of $\nabla_{(\thprop,\thp)} \log \partlaw{\thprop}$ and the logarithm of the unbiased likelihood estimator \citep[see][Appendix~A, for details]{le:2018}.
Nonetheless, since, as demonstrated by \citet{le:2018}, \citet{maddison:2017} and \citet{naesseth:2018}, this term generally does not make a significant contribution to the gradient and is also very difficult to estimate with reasonable accuracy, we proceed as in the mentioned works and simply discard the same. \citet{roeder:2017}, \citet{tucker:2018} and \citet{finke:thiery:2019} discussed biased gradient approximation in the framework of the {\IWAE}, but we are not aware of any in-depth analyses in the context of {\VSMC}.

As mentioned earlier, the variational SMC ({\VSMC}) approach described above is an offline method in the sense that each parameter update requires $\gradapprox_T(\thprop,\thp)$ to be estimated by processing the entire data batch $y_{0:T}$ with the particle filter. 
However, in many applications it is of utmost importance to be able to update both parameter estimates and proposal distributions in real time, and in the next section we will therefore provide an online extension of {\VSMC} to the setting where the data become available sequentially.   

\section{Online Implementation of \VSMC}\label{sec:method}
In this section our primary goal is to learn, for a given stream $(y_t)_{t\ge 0}$ of observations, the amortized proposal and model parameters as the particles evolve and new observations become available. By rewriting \eqref{eq:def_grad} as 
$$
		\gradapprox_T(\thprop,\thp) = \E_{\partlaw{\thprop}}\left[\sum_{t=0}^{T}\nabla_{(\thprop,\thp)}\log\wgtsum{t}(\thprop,\thp)\right] 
$$
we notice that the computation of the gradient is distributed over time, making it possible to adapt the method to the online setting. More precisely, in our scheme, the given parameters $(\thp_t, \thprop_t)$ are updated as 
\begin{align}
\thprop_{t+1} &\gets \thprop_t + \gamma_{t+1}^\thprop \nabla_{\thprop}\log\wgtsum{t+1}(\thprop_t,\thp_t), \label{eq:update:lambda} \\ 
\thp_{t+1} &\gets \thp_t + \gamma_{t+1}^\thp \nabla_\thp \log \wgtsum{t+1} (\thprop_{t+1},\thp_t), \label{eq:update:theta}
\end{align}
where $(\gamma_t^\thprop)_{t \in \nset_{>0}}$ and $(\gamma_t^\thp)_{t \in \nset_{>0}}$ are given step sizes (learning rates). 

A pseudocode for our algorithm, which we refer to as \emph{online variational SMC} ({\OVSMC}), is displayed in Algorithm~\ref{algo:ovpf}, from which it can be seen that the updates of $\thprop$ and $\thp$ on Line~7 and Line~14 are based on two distinct sampling steps with different sample sizes $L$ and $N$, respectively.  
Indeed, as pointed out by \citet{le:2018} and \citet{zhao:2021}, the quantity $\log \wgtsum{t + 1}(\thprop,\thp)$ is a biased but consistent estimator of the log-predictive likelihood $\log p_\thp(y_{t + 1} \mid y_{0:t})$ and will therefore, regardless of the proposal used, be arbitrarily close to this quantity as the number of particles increases. In contrast to the estimation of $\nabla_\thp \log \wgtsum{t}(\thprop, \thp)$, this is generally problematic in the estimation of $\nabla_\thprop \log \wgtsum{t}(\thprop, \thp)$. 
In fact, a large number of particles reduces the signal-to-noise ratio of the estimator of the latter gradient, up to a point where it reduces to pure noise. For this reason, we take, in the spirit of the \emph{alternating ELBOs} strategy of \citet[Section~4.1, in which the authors consider {\IWAE} and {\VSMC} ELBOs with alternating sample sizes]{le:2018} an approach where  $\thprop$ and $\thp$ are updated through two distinct optimization steps, the one for $\thprop$ with a  small number $L$ of particles, typically less than ten, and the one for $\thp$ with a possibly large sample size $N$.  

Appealingly, as clear from Algorithm~\ref{algo:ovpf}, the method is based only on particle approximation of the filter distribution flow and  therefore does not require saving the trajectories of the particles. This 
results in an online algorithm with memory requirements that remain uniformly limited in time and are, just like the computational complexity of the algorithm, linear in the number $N$ of particles. In Section~\ref{sec:theory} we provide a rigorous theoretical justification of (a slightly modified version of) Algorithm~\ref{algo:ovpf}. 
In particular, we show that \eqref{eq:update:lambda}--\eqref{eq:update:theta}  form a classical Robbins--Monro scheme \citep{robbins:monro:1951} with state-dependent Markov noise targeting a mean field corresponding to an `asymptotic' (as the number $t$ of data tends to infinity) {\VSMC}. 


\begin{algorithm}[htb]
	\caption{Online Variational SMC ({\OVSMC})}\label{algo:ovpf}
	\begin{algorithmic}[1]
		\STATE {\bfseries Input:} $(\epart{t}{i},\wgt{t}{i})_{i=1}^N, y_{t+1}, \thp_t,\thprop_t$.
		\FOR{$i \gets 1,\dots, L$}
		\STATE draw $\I{t+1}{i}\sim \catdist((\wgt{t}{\ell})_{\ell=1}^N)$;
		\STATE draw $\auxrv_{t+1}^i\sim \indmeas$;
		\STATE compute $\wgtfunc{t+1}{i}(\thprop_t,\thp_t)$ according to \eqref{eq:wgtfunc};
		\ENDFOR
		\STATE set $\thprop_{t+1}\gets\thprop_t+\gamma_{t+1}^\thprop\nabla_{\thprop}\log\wgtsum{t+1}(\thprop_t,\thp_t)$;\label{line:thprop}
		\FOR{$i \gets 1,\dots,N$}\label{lin:startloop}
		\STATE draw $\I{t+1}{i}\sim \catdist((\wgt{t}{\ell})_{\ell=1}^N)$;
		\STATE draw $\auxrv_{t+1}^i\sim \indmeas$;\label{line:eps}
		\STATE set $\epart{t+1}{i}\gets\repfunc{\thprop_{t+1}}(\epart{t}{\I{t+1}{i}},y_{t+1},\auxrv_{t+1}^i)$;
		\STATE compute $\wgtfunc{t+1}{i}(\thprop_{t+1},\thp_t)$ according to \eqref{eq:wgtfunc};
		\ENDFOR
		\STATE set $\thp_{t+1}\gets\thp_t+\gamma_{t+1}^\thp\nabla_\thp\log\wgtsum{t+1}(\thprop_{t+1},\thp_t)$;\label{line:endalgo}
		\STATE {\bfseries return} $(\epart{t+1}{i},\wgt{t+1}{i})_{i=1}^N,\thp_{t+1},\thprop_{t+1}$.
	\end{algorithmic}
\end{algorithm}

\section{Theoretical Results}\label{sec:theory}
In this section we study the limiting behavior of {\OVSMC} and discuss its connection with batch {\VSMC}.
As explained in Section~\ref{sec:method}, the purpose of the double optimization steps in Algorithm~\ref{algo:ovpf} is to improve the performance of the algorithm in practical use;  
however, here, for simplicity, we move the parameters $\thprop$ into $\thp$, resulting in the latter also containing algorithmic parameters. Therefore, the algorithm that we theoretically analyze corresponds to Lines~8--14, with reparameterization function $\repfunc{\thp_t}$ and $\wgtfunc{t+1}{i}$ depending only on $\thp_t$.  Furthermore, for technical reasons related to the ergodicity of an extended Markov chain that we will define below and the Lipschitz continuity (in $\thp$) of its transition kernel, the analyzed algorithm repeats twice the generation of the auxiliary variable on Line~10, where one variable is used to estimate the gradient and the other to particle propagation.  
All details are provided in Appendix~\ref{appendix:a}, where the modified procedure is displayed in Algorithm~\ref{algo:ovpf_theory} and we also provide all proofs.

Let, for $t \in \nset$, 
$
\z_t \coloneqq (X_t, Y_t, (\epart{t-1}{\I{t}{i}})_{i=1}^N, (\auxrv_{t}^i)_{i=1}^N) 
$, 
where $(\epart{t-1}{\I{t}{i}})_{i=1}^N$ and  $(\auxrv_t^i)_{i=1}^N$ are generated according to the modified version of Algorithm~\ref{algo:ovpf}. Then $(\z_t)_{t \in \nset}$ is a state dependent Markov chain with transition kernel $T_\thp$ (described in the supplement), in the sense that given $\z_{0:t}$, $\z_{t + 1}$ is distributed according to $T_{\thp_t}(\z_t, \cdot)$ (note that $\thp_t$ is deterministic function of $\z_{0:t}$). Let $\prob$ denote the law of $(\z_t)_{t \in \nset}$ when initialized as described previously.  
\begin{assumption}\label{assum:ssm}
	The data $(y_t)_{t \in \nset}$ is the output of an SSM $(X_t, Y_t)_{t \in \nset}$ on $(\set{X} \times \set{Y}, \alg{X} \tensprod \alg{Y})$ with state and observation transition densities $\hiddenstrue(x_{t + 1} \mid x_t)$ and $\emdenstrue(y_t \mid x_t)$, respectively.  
\end{assumption}
\begin{assumption}\label{assum:strongmixing}
	The transition densities $\hiddenstrue$, $\hiddens{\thp}$, $\emdens{\thp}$ and $\propdens{\thp}$ are uniformly bounded from above and below (in all their arguments as well as in $\thp$). 
\end{assumption}

The strong mixing assumptions of Assumption~\ref{assum:strongmixing} are standard in the literature and point to applications where the state space $\set{X}$ is a compact set.  

\begin{proposition}\label{prop:ergomain}
	Let Assumptions~\ref{assum:ssm}--\ref{assum:strongmixing} hold. Then for every $\thp \in \Theta$, the canonical Markov chain $(\z_t^\thp)_{t \in \nset}$ induced by $T_\thp$ is uniformly ergodic and admits a stationary distribution $\statmeas{\thp}$. 
\end{proposition}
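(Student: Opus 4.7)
The plan is to establish a Doeblin-type minorization for an iterate of $\tz{\thp}$ that holds uniformly in the starting point $\z$ and in the parameter $\thp$; uniform ergodicity and the existence of a unique invariant measure $\statmeas{\thp}$ then follow by standard Markov chain results.

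First, I would decompose the transition $\z_t \mapsto \z_{t+1}$ into pieces that are conditionally independent given $\z_t$: the hidden-state and observation update $(X_{t+1}, Y_{t+1})$, driven by the true densities $\hiddenstrue$ and $\emdenstrue$; the resampling step that produces $(\epart{t}{\I{t+1}{i}})_{i=1}^N$ from the time-$t$ particles (which are deterministic functions of $\z_t$ under the reparameterization convention); and the fresh auxiliary draws $\auxrv_{t+1}^i \sim \indmeas$. By Assumption~\ref{assum:strongmixing}, the joint density $\hiddenstrue(x_{t+1} \mid X_t)\emdenstrue(y_{t+1} \mid x_{t+1})$ is uniformly bounded from below on $\set{X} \times \set{Y}$, and the auxiliary block $\indmeas^{\otimes N}$ does not depend on $\z_t$; both pieces readily yield uniform minorizations of their own marginal transitions, with bounds independent of $\thp$.

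The main obstacle is the resampled-particle component, whose conditional law given $\z_t$ is discrete (supported on the $N$ propagated particles) and therefore has no Lebesgue density, so a one-step Doeblin condition on the full state cannot hold. I would resolve this by working instead with the two-step kernel $\tz{\thp}^2$. At the second step, each resampled particle equals $\repfunc{\thp}(\epart{t}{\I{t+1}{\I{t+2}{i}}}, y_{t+1}, \auxrv_{t+1}^{\I{t+2}{i}})$, and since the auxiliary variables $\auxrv_{t+1}^\ell$ are fresh $\indmeas$-draws whose push-forward under $\repfunc{\thp}(\epart{t}{\I{t+1}{\ell}}, y_{t+1}, \cdot)$ is the proposal density $\propdens{\thp}(\cdot \mid \epart{t}{\I{t+1}{\ell}}, y_{t+1})$, marginalizing over these auxiliary variables and over the random indices $\I{t+1}{\cdot}$ and $\I{t+2}{\cdot}$ yields a finite mixture of densities on $\set{X}$, each bounded uniformly from below by Assumption~\ref{assum:strongmixing}. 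Combining this with the minorizations for the other two blocks produces $\tz{\thp}^2(\z, \cdot) \geq \epsilon\, \nu(\cdot)$ for some $\epsilon > 0$ and some probability measure $\nu$ that does not depend on $(\z, \thp)$, which is the required Doeblin condition.

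I expect the main difficulty to be the bookkeeping for the particle block: one must carefully unpack the two-step resampling-propagation cycle to exhibit a genuine density (with respect to a fixed reference measure on the extended state space) bounded uniformly from below, and ensure that the bound is independent of $\z_t$, of the random parent resampling indices, and of $\thp$; the fact that Assumption~\ref{assum:strongmixing} gives uniform upper bounds on the weights is what decouples the lower bound on the mixture from the (random) resampling probabilities. Once the minorization is in place, uniform ergodicity and the existence of a unique stationary distribution $\statmeas{\thp}$ follow by classical arguments.
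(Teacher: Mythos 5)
Your overall strategy---a Doeblin/small-set minorization of (an iterate of) $\tz{\thp}$, uniform in the starting point and in $\thp$, followed by standard uniform-ergodicity results---is exactly the paper's. The substantive difference is that the paper obtains the minorization in \emph{one} step, and the reason is a structural point you have missed: the proposition concerns the chain generated by the \emph{modified} algorithm analyzed in the supplement, in which the transition out of $\z_t$ \emph{freshly} draws $N$ proposals $\epartb{t}{i} \sim \propker{\thp}((\epartilb{t-1}{i}, y_t),\cdot)$ and only then resamples among them (the auxiliary variables stored in the state are used for the gradient, not for propagation). Hence the conditional law of the new resampled block given $\z_t$ is a continuous mixture, obtained by integrating the resampling step against $\prod_k \propdens{\thp}(\epartilb{t-1}{k},\epartb{t}{k},y_t)\,\mu^{\tensprod N}(d\epartb{t}{1:N})$, and not the discrete, state-dependent measure you describe. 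The ``obstacle'' you work around by passing to $\tz{\thp}^2$ is precisely what this decoupling removes; your two-step detour is sound for the unmodified chain of Algorithm~2 but unnecessary here, and it buys nothing beyond extra bookkeeping.

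A second, smaller point that would bite you when executing either version: the minorizing object on the resampled-particle coordinates cannot be ``a genuine density \ldots\ bounded uniformly from below'' with respect to a fixed product reference measure, because with positive probability several resampled coordinates coincide, so the joint law of the $N$-tuple is singular with respect to $\mu^{\tensprod N}$. What one actually bounds from below, using $\propdens{\thp}\ge\eq$ and normalized weights $\ge \eq^6/N$, is the full mixture
\begin{equation}
\tz{\thp}(z,\cdot) \;\ge\; \eq^{7N+1}\,\emdenstrue(x',y')\,\mu(dx')\,\refg(dy')\int \prod_{j=1}^{N}\Bigl(\frac{1}{N}\sum_{i=1}^{N}\delta_{\epartb{}{i}}(d\epartilb{}{j})\Bigr)\,\mu^{\tensprod N}(d\epartb{}{1:N})\,\indmeas^{\tensprod N}(d\auxrvb^{1:N}),
\end{equation}
a fixed measure (a mixture of occupation-measure products) independent of $(z,\thp)$. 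A Doeblin condition requires only such a minorizing \emph{measure}, not a density; with this adjustment your argument goes through, and the conclusion then follows, as you say, from classical results.
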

 Now, letting $\grad{\thp}(\z_t) \eqdef \nabla_{\thp} \log \wgtsum{t}(\thp)$, we may define the mean field 
$
	\mf{\thp}\eqdef\int \grad{\thp}(z)\,\statmeas{\thp}(dz)
$
(depending implicitly on the sample size $N$). 
Note that by the law of large numbers for ergodic Markov chains it holds, a.s., that $\lim_{t\to \infty} \sum_{s=0}^{t}\grad{\thp}(\z_s^{\thp}) / t =\mf{\thp}$.
Thus, since ${\VSMC}$ estimates the gradient $\gradapprox_t(\thp)$ by $\sum_{s=0}^t \grad{\thp}(\z_s^{\thp})$, finding a zero of the mean field $\mf{\thp}$ can be considered equivalent to applying {\VSMC} to an infinitely large batch of observations from the model.  
From this perspective, it is interesting to note that our proposed {\OVSMC} algorithm is a Robbins--Monro scheme targeting $\mf{\thp}$, with updates given by $\thp_{t + 1} \gets \thp_t+\gamma_{t+1}^\thp \grad{\thp_t}(\z_{t + 1})$ and $Z_{t + 1} \sim T_{\thp_t}(Z_t, \cdot)$, $t \in \nset$. Moreover, as established by our main result, Theorem~\ref{thm:main}, the scheme produces a parameter sequence $(\theta_t)_{t \in \nset}$ making the gradient arbitrarily close to zero (in the $L_2$ sense). We preface this result by some further assumptions. 
\begin{assumption}\label{assum:lipsch}
	The gradients $\nabla_\thp \hiddens{\thp}$, $\nabla_\thp \emdens{\thp}$ and $\nabla_\thp \propdens{\thp}$, and their compositions with the reparameterization function $\repfunc{\thp}$, are all uniformly bounded and Lipschitz. 
\end{assumption}


\begin{assumption}\label{assum:lyap}
	There exists a bounded function $V$ on $\Theta$ (the \emph{Lyapunov} function) such that $h=\nabla_\thp V$.
\end{assumption}

\begin{assumption}\label{assum:step}
For every $t \in \nsetpos$, $\gamma_{t+1}^\thp \le \gamma_t^\thp$. In addition, there exist constants $a > 0$, $a' > 0$ and $c \in (0, 1)$ such that for every $t$, $
		\gamma_t^\thp \le a \gamma_{t + 1}^\thp$, $ \gamma_t^\thp - \gamma_{t + 1}^\thp \le a' (\gamma_{t+1}^\thp)^2$ and $ \gamma_1 \le c$.
\end{assumption}

\begin{theorem} \label{thm:main}
	Let Assumptions~\ref{assum:ssm}, \ref{assum:strongmixing}, \ref{assum:lipsch}, \ref{assum:lyap} and \ref{assum:step} hold. Then there exist constants $b > 0$ and $b' > 0$, possibly depending on $V$, 
	such that for every $t \in \nset$, 
	\begin{equation}
		\E[\norm{\mf{\thp_\tau}}^2] \le \frac{b + b'\sum_{s=0}^t(\gamma_{s+1}^\thp)^2}{\sum_{s=0}^t\gamma_{s+1}^\thp}, 
	\end{equation}
	where $\tau \sim \catdist((\gamma_{s + 1}^\thp)_{s = 0}^t)$ is a random variable on $\{0, \dots, t\}$. 
\end{theorem}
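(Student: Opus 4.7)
The plan is a Lyapunov-based analysis combined with a Poisson-equation decomposition of the state-dependent Markov noise, in the spirit of the classical treatment of stochastic approximation with Markovian perturbations. The Lyapunov function $V$ supplied by Assumption~\ref{assum:lyap} will play a double role: its boundedness yields the $O(1)$ contribution to the numerator of the final bound, while the identity $\mf{\cdot} = \nabla_\thp V$ makes $\norm{\mf{\thp_s}}^2$ appear naturally along the iterates through a first-order expansion of $V$.

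Concretely, I would first write, for each $s \leq t$,
\begin{equation*}
V(\thp_{s+1}) - V(\thp_s) = \gamma_{s+1}^\thp \langle \mf{\thp_s}, \grad{\thp_s}(\z_{s+1}) \rangle + r_{s+1},
\end{equation*}
with $|r_{s+1}| \leq C (\gamma_{s+1}^\thp)^2$ uniformly in the randomness, by Lipschitz continuity of $\nabla_\thp V = \mf{\cdot}$ (inherited from Assumptions~\ref{assum:lipsch} and~\ref{assum:lyap}) together with uniform boundedness of $\grad{\thp}$ in Assumption~\ref{assum:lipsch}. Splitting $\grad{\thp_s}(\z_{s+1}) = \mf{\thp_s} + (\grad{\thp_s}(\z_{s+1}) - \mf{\thp_s})$, summing over $s$, and using $|V(\thp_{t+1}) - V(\thp_0)| \leq 2\norm{V}_\infty$ yields
\begin{equation*}
\sum_{s=0}^t \gamma_{s+1}^\thp \norm{\mf{\thp_s}}^2 \leq 2\norm{V}_\infty + C \sum_{s=0}^t (\gamma_{s+1}^\thp)^2 - \sum_{s=0}^t \gamma_{s+1}^\thp \langle \mf{\thp_s}, \grad{\thp_s}(\z_{s+1}) - \mf{\thp_s} \rangle,
\end{equation*}
so that, after taking expectations, it remains to bound the expectation of the last noise sum by $O(1) + O\bigl(\sum_s (\gamma_{s+1}^\thp)^2\bigr)$.

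For this I would invoke the Poisson equation associated with $T_\thp$: the uniform ergodicity established in the preceding proposition guarantees, for every $\thp$, the existence of a bounded solution $\htil{\thp}$ to $\htil{\thp} - T_\thp \htil{\thp} = \grad{\thp} - \mf{\thp}$, with bound uniform in $\thp$. Substituting this and writing
\begin{equation*}
\grad{\thp_s}(\z_{s+1}) - \mf{\thp_s} = \bigl[\htil{\thp_s}(\z_{s+1}) - T_{\thp_s}\htil{\thp_s}(\z_s)\bigr] + \bigl[T_{\thp_s}\htil{\thp_s}(\z_s) - T_{\thp_s}\htil{\thp_s}(\z_{s+1})\bigr],
\end{equation*}
the first bracket is a martingale increment with respect to the filtration generated by $(\z_r, \auxrv_r)_{r \leq s}$, so its inner product with the past-measurable factor $\gamma_{s+1}^\thp \mf{\thp_s}$ vanishes in expectation. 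For the telescoping correction in the second bracket I would apply Abel summation on the shifted index, which yields $O(1)$ boundary terms together with residuals controlled by the step-size decrements $|\gamma_s^\thp - \gamma_{s+1}^\thp| = O((\gamma_{s+1}^\thp)^2)$ from Assumption~\ref{assum:step} and by the parameter increments $\norm{\thp_{s+1} - \thp_s} = O(\gamma_{s+1}^\thp)$ (from the boundedness in Assumption~\ref{assum:lipsch}). I expect the main technical hurdle to be the accompanying Lipschitz-in-$\thp$ estimate $\norm{T_\thp \htil{\thp} - T_{\thp'} \htil{\thp'}}_\infty \leq L \norm{\thp - \thp'}$, which must be extracted from the strong-mixing Assumption~\ref{assum:strongmixing} together with Assumption~\ref{assum:lipsch}; this is precisely where the modification that duplicates the draw of $\auxrv_{t+1}^i$ on Line~\ref{line:eps} pays off, by ensuring a smooth dependence of $T_\thp$ on $\thp$ that can be propagated to the Poisson solution.

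Putting the pieces together produces constants $b, b' > 0$, possibly depending on $V$, such that
\begin{equation*}
\sum_{s=0}^t \gamma_{s+1}^\thp \E[\norm{\mf{\thp_s}}^2] \leq b + b' \sum_{s=0}^t (\gamma_{s+1}^\thp)^2.
\end{equation*}
Dividing by $\sum_{s=0}^t \gamma_{s+1}^\thp$ and identifying the resulting left-hand side with $\E[\norm{\mf{\thp_\tau}}^2]$ for $\tau \sim \catdist((\gamma_{s+1}^\thp)_{s=0}^t)$ (by conditioning first on the sample path and then on $\tau$) concludes the proof.
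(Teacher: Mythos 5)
Your proposal is correct and follows essentially the same route as the paper: the paper's own proof simply verifies that Proposition~A.6 (boundedness and Lipschitz continuity of $\grad{\thp}$, $h$, and the Poisson solution $\htil{\thp}$, $\tz{\thp}\htil{\thp}$) together with Assumptions~\ref{assum:lyap}--\ref{assum:step} meet the hypotheses of Theorem~2 of \citet{karimi:2019} and then invokes that result. What you have written out---the Lyapunov drift expansion, the Poisson-equation splitting of the Markov noise into a martingale increment plus a telescoping remainder handled by Abel summation and the step-size/parameter-increment controls---is precisely the argument inside that cited theorem, and you correctly identify the Lipschitz-in-$\thp$ estimate for $\tz{\thp}\htil{\thp}$ (and the role of the duplicated noise draw in making $\tz{\thp}$ non-collapsed) as the substantive technical input, which the paper supplies via Lemmas~A.9--A.10.
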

\begin{corollary}\label{corollary:decay}
	Let the assumptions of Theorem~\ref{thm:main} hold and let, for every $t \in \nsetpos$, $\gamma_t^\thp = c / \sqrt{t}$, where $c > 0$ is given in Assumption~\ref{assum:step}. Then $\E[\norm{\mf{\thp_\tau}}^2] = \mathcal{O}(\log t / \sqrt{t})$. 
\end{corollary}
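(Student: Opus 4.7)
The plan is to reduce the corollary to a direct substitution into the bound of Theorem~\ref{thm:main}, after first verifying that the stepsize sequence $\gamma_t^\thp = c/\sqrt{t}$ falls within the scope of the theorem. Concretely, I would check each condition in Assumption~\ref{assum:step}: monotonicity is immediate since $t \mapsto 1/\sqrt{t}$ is decreasing; the first ratio bound $\gamma_t^\thp \le a \gamma_{t+1}^\thp$ holds with $a = \sqrt{2}$, because $\sqrt{(t+1)/t} \le \sqrt{2}$ for $t \ge 1$; the bound $\gamma_1^\thp \le c$ holds with equality. For the quadratic bound, a telescoping estimate gives
\begin{equation}
\gamma_t^\thp - \gamma_{t+1}^\thp = \frac{c(\sqrt{t+1}-\sqrt{t})}{\sqrt{t(t+1)}} = \frac{c}{\sqrt{t(t+1)}\,(\sqrt{t}+\sqrt{t+1})} = O(t^{-3/2}),
\end{equation}
while $(\gamma_{t+1}^\thp)^2 = c^2/(t+1) = \Theta(t^{-1})$, so the difference is dominated by $(\gamma_{t+1}^\thp)^2$ and a suitable $a'$ exists.

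Given that the hypotheses of Theorem~\ref{thm:main} apply, the second step is to estimate the numerator and denominator of the bound. For the numerator, I would use the standard harmonic-series estimate
\begin{equation}
\sum_{s=0}^t (\gamma_{s+1}^\thp)^2 = c^2 \sum_{k=1}^{t+1} \frac{1}{k} = c^2(\log(t+1) + O(1)) = O(\log t).
\end{equation}
For the denominator, the integral comparison yields
\begin{equation}
\sum_{s=0}^t \gamma_{s+1}^\thp = c \sum_{k=1}^{t+1} \frac{1}{\sqrt{k}} \ge c \int_1^{t+2} \frac{dx}{\sqrt{x}} = 2c(\sqrt{t+2}-1) = \Theta(\sqrt{t}).
\end{equation}

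Plugging these two estimates into the conclusion of Theorem~\ref{thm:main} gives
\begin{equation}
\E[\norm{\mf{\thp_\tau}}^2] \le \frac{b + b'\,O(\log t)}{\Theta(\sqrt{t})} = \mathcal{O}\!\left(\frac{\log t}{\sqrt{t}}\right),
\end{equation}
which is the claimed rate. There is no real obstacle here; the only thing to be careful about is verifying Assumption~\ref{assum:step} (in particular the quadratic bound on the increment), which is a routine but easy-to-miss check. Everything else is a textbook asymptotic estimate of a harmonic sum and a $p$-series with $p = 1/2$.
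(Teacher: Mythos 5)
Your proof is correct and follows essentially the same route as the paper: one checks that the sequence $c/\sqrt{t}$ satisfies Assumption~\ref{assum:step} (the paper records the admissible constants explicitly, e.g.\ $a=2$) and then substitutes the standard estimates $\sum_{k=1}^{t+1} k^{-1} = O(\log t)$ and $\sum_{k=1}^{t+1} k^{-1/2} = \Theta(\sqrt{t})$ into the bound of Theorem~\ref{thm:main}. The paper's own proof is terser, leaving these sum estimates implicit, but the content is the same.
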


%
%
%

\section{Experiments}\label{sec:experiments}
In this section we illustrate numerically the performance of {\OVSMC}. We illustrate the method's capability of learning online good amortized proposals and model parameters and more complex generative models, but also show that it can serve as a strong competitor to {\VSMC} in batch scenarios. Stochastic gradients are passed to the {\ADAM} optimizer \citep{kingma:ba:2015} in Tensorflow~2\footnote{The Python code may be found at \url{https://bitbucket.org/amastrot/ovsmc}.}. All the experiments are run on an Apple MacBook Pro M1 2020, memory 8GB.

\subsection{Linear Gaussian SSM}\label{subsec:lg}
Our first example is a standard linear Gaussian SSM 
with 
$\set{X} = \rset^{d_x}$ and $\set{Y} = \rset^{d_y}$ for some $(d_x, d_y)\in \nsetpos^2$. More precisely, we let $\hiddens{\thp}(\cdot \mid x)$ and $\emdens{\thp}(\cdot \mid x)$ be $\Norm_{d_x}(Ax, S_u^\intercal S_u)$ and $\Norm_{d_y}(Bx, S_v^\intercal S_v)$, respectively, where $A$ and $S_u$ are $d_x \times d_x$ matrices, $B \in\rset^{d_y \times d_x}$ and $S_v \in \rset^{d_y \times d_y}$. We start with the simplest case $d_x=d_y=1$ and generate data under $A=0.8$, $B=1$ and $S_u=0.5$. We consider two cases for $S_v \in \{0.2, 1.2\}$ corresponding to informative and more non-informative observations, respectively. The state process is initialized with its stationary distribution $\Norm(0, S_u^2/(1-A^2))$. Our aim is to estimate $A$ and $S_u$ and, in parallel, optimizing the particle proposal. For this purpose, we let 
$\propdens{\thprop}(\cdot\mid x_t,y_{t+1})$ be $\Norm(\mu_\thprop(x_t,y_{t+1}),\sigma_\thprop^2(x_t,y_{t+1}))$, where $\mu_\thprop$ and $\sigma_\thprop^2$ are two distinct neural networks with one dense hidden layer having three and two nodes, respectively, and relu activation functions. In Figure~\ref{fig:lg_params} we observe the convergence of the unknown parameters for a set of distinct starting values, noticing that more informative observations yields, as expected, faster convergence. Even though values close to the true parameters are obtained already after about $10000$ to 20000 iterations, the algorithm is kept running until $t=50000$ for the purpose of learning the proposal.
\begin{figure}[htb]
	\centering
		\includegraphics[width=\foraistats{0.49}\forarxiv{0.4}\columnwidth]{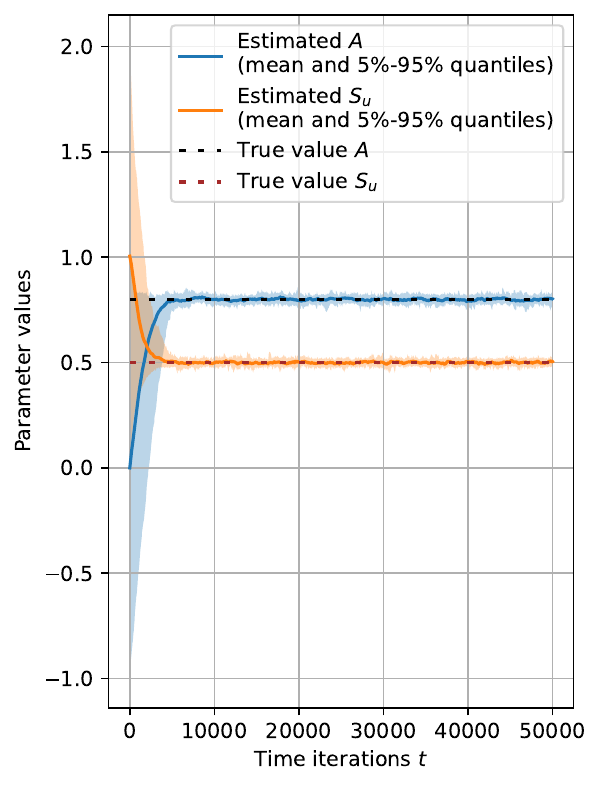}
		\includegraphics[width=\foraistats{0.49}\forarxiv{0.4}\columnwidth]{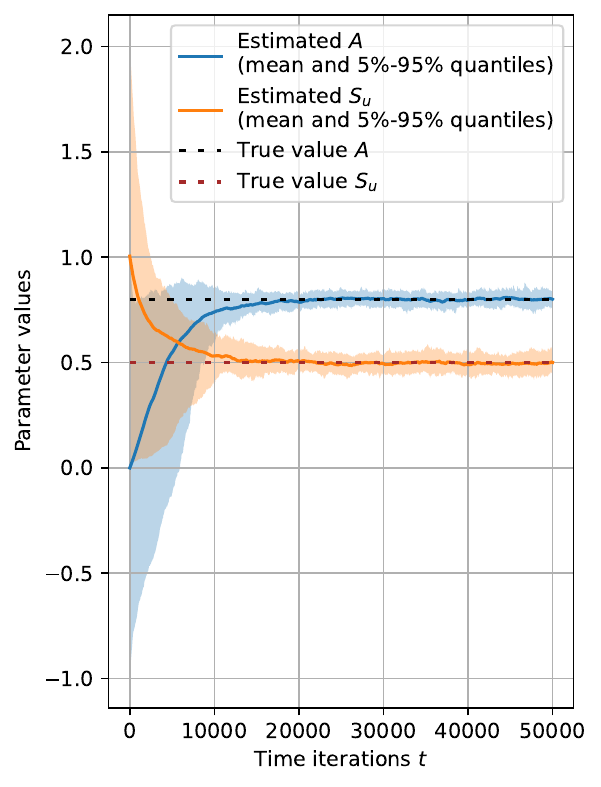}
	\caption{Parameter learning curves for the one-dimensional linear Gaussian SSM in Section~\ref{subsec:lg}, obtained using algorithm~\ref{algo:ovpf} with $L=5$ and $N=10000$ for $S_v=0.2$ (left) and $S_v=1.2$ (right). The means and the quantiles are calculated on the basis of 100 learning curves, each  starting with a different initial value and based on independently generated observation data.
	}\label{fig:lg_params}
\end{figure}
We observe an improvement of the effective sample size (ESS) of the particle cloud as $\propdens{\thprop}$ is converging towards the optimal proposal, as shown in Figure~\ref{fig:lg_essall} in Appendix~\ref{sec:ess}. Figure~\ref{fig:lg_prop} displays
conditional proposal densities for some given $(x_t, y_{t+1})$ in a single run; we see that for both choices of $S_v$, the learned proposal is generally close to the locally optimal one, except in the presence of unlikely outliers in the data, in which case the learned proposal tend to move, as is desirable, its mode towards that of the prior (bootstrap) kernel. 
\begin{figure}[htb]
	\centering
		\includegraphics[width=\forarxiv{0.8}\columnwidth]{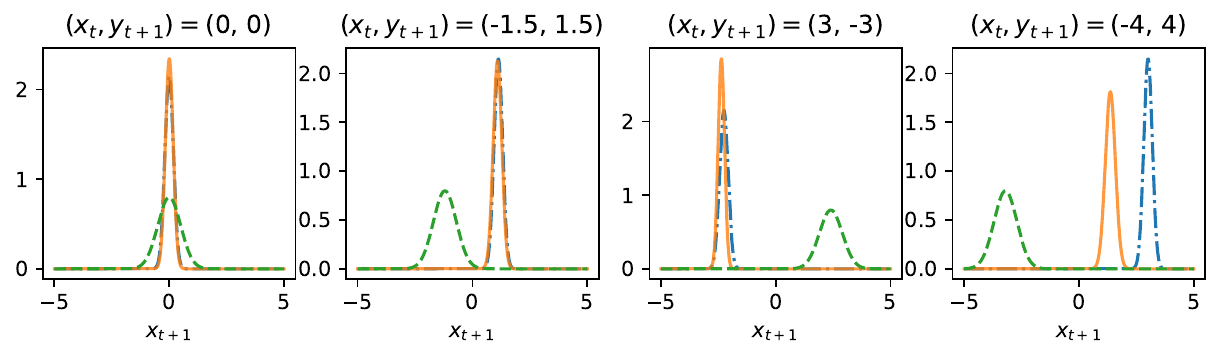}
		\includegraphics[width=\forarxiv{0.8}\columnwidth]{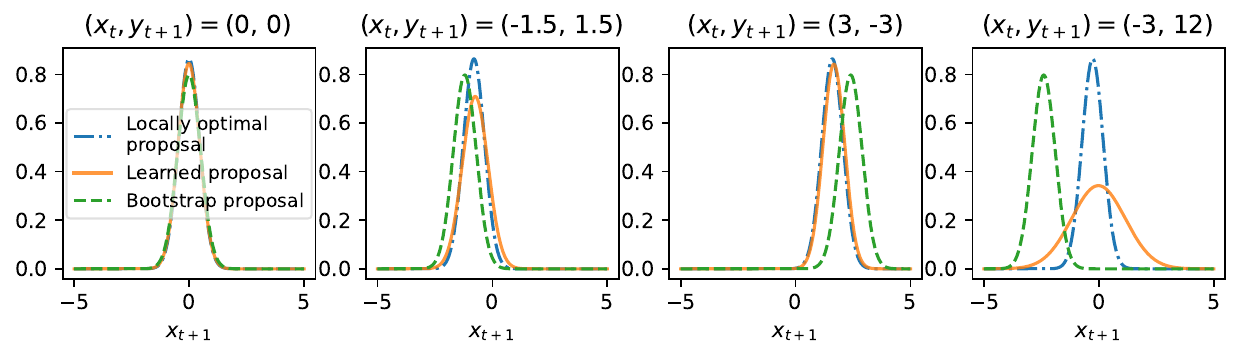}
	\caption{Comparisons of the bootstrap, locally optimal and learned proposals for different $(x_t, y_{t+1})$. Here {\OVSMC} was run for 50000 iterations with $L=5$ and $N=10000$  for $S_v=0.2$ (top) and $S_v=1.2$ (bottom).}\label{fig:lg_prop}
\end{figure}

Next, we show that our method can also be usefully applied in the batch-data mode. 
Following \citet{naesseth:2018} and \citet{zhao:2021}, 
let $d_x=d_y=10$ and let the matrix $A$ have elements $A_{ij}=0.42^{\lvert i-j\rvert + 1}$. Furthermore, let $S_u=I$ and $S_v=0.5I$. We consider two parameterizations of $B$, one sparse, $B = I$, and one dense where its elements are random, independent and standard normally distributed. 
 With these parameterizations, a data record $y_{0:T}$, $T = 100$, was generated starting from an $\Norm(0,I)$-distributed initial state. 
In this part, we view the model parameters $\thp$ as being known and focus on learning of the proposal. Instead of learning, as done by \citet{naesseth:2018} and \citet{zhao:2021}, different local proposals for each time step, we learn, in both batch and online mode, an amortized Gaussian proposal where the mean vector and diagonal covariance matrix are functions of $x_t$ and $y_{t+1}$, modeled by neural networks.    
 These neural networks, which are time invariant, have each a single hidden layer with $16$ nodes each and the relu activation function. 
 In this setting, we (1) processed the given observation batch $M$ times using standard {\VSMC}, with gradient-based parameter updates between every sweep of the data, and (2) compared the result with the output of {\OVSMC} when executed on a data sequence of length $(T + 1)M$ formed by repeating the given data record $M$ times.  
 The number of particles was equal to $L=5$ for both methods. 
 Figure~\ref{fig:lgmult} displays the resulting ELBO evolutions for the two methods and some different {\ADAM} learning rates, and it is clear from this plot that {\OVSMC}, with its appealing convergence properties and reduced noise, is indeed a challenger of ${\VSMC}$ in this batch context. 
 
%

\begin{figure}[htb]
	\centering
		\includegraphics[width=\forarxiv{0.8}\columnwidth]{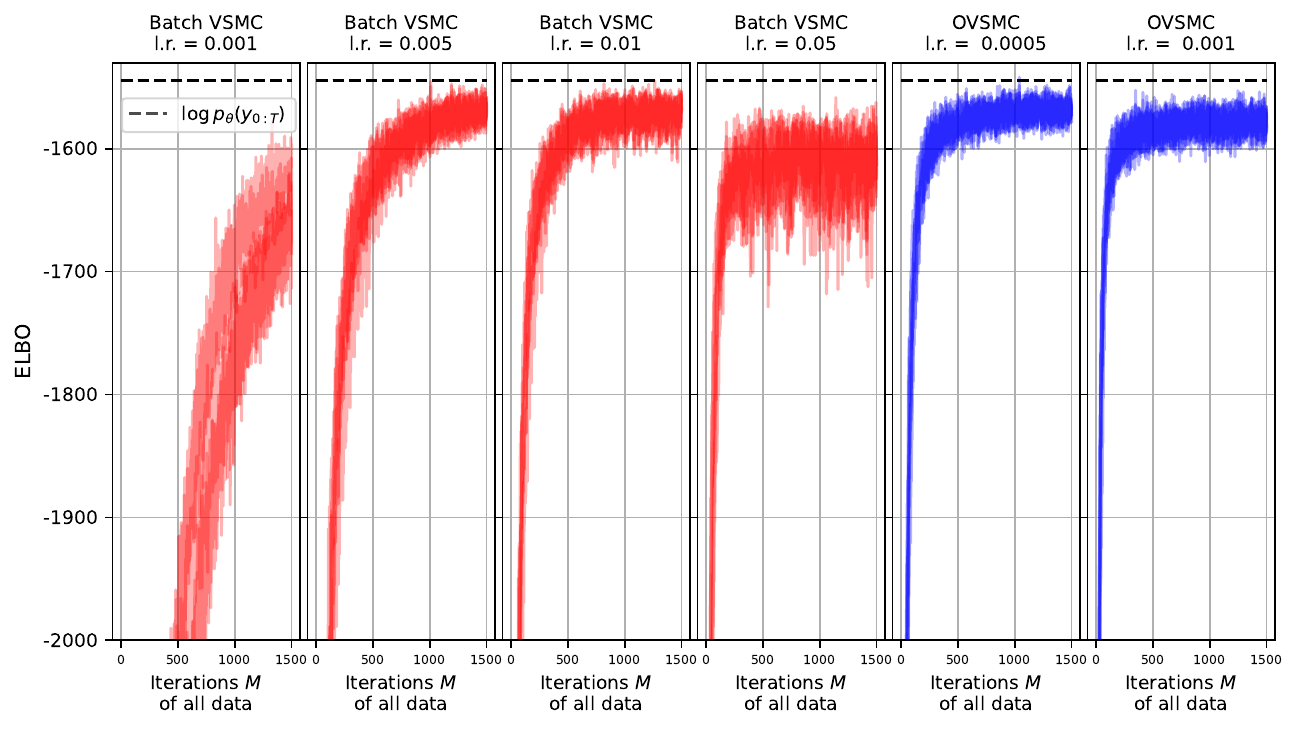}
		\includegraphics[width=\forarxiv{0.8}\columnwidth]{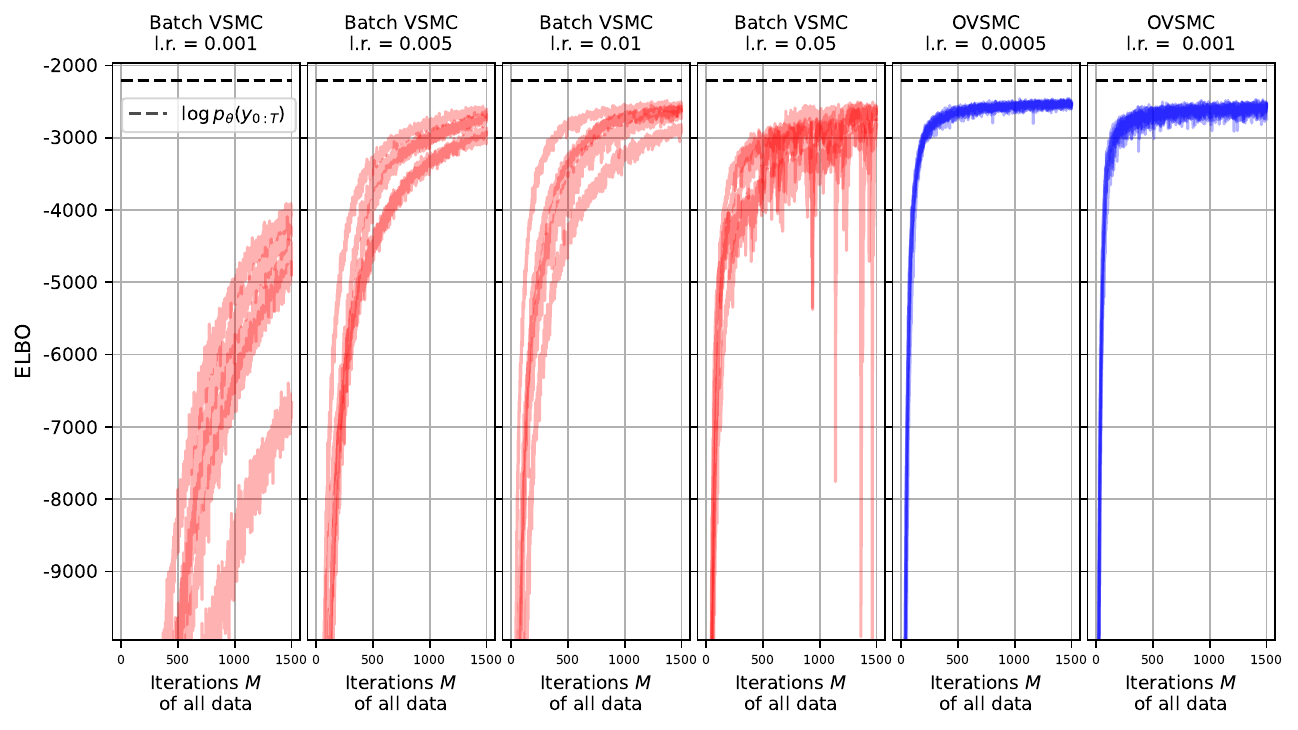}
	\caption{ELBO evolutions of {\VSMC} and {\OVSMC} (each running with $L=5$) for the multivariate linear Gaussian SSM in Section~\ref{subsec:lg} (top: $B$ sparse; bottom: $B$ dense). 
    In each plot, which corrsponds to a particular learning rate, five independent runs och each algorithm are displayed on top of each other and compared with the target log-likelihood. 
	}\label{fig:lgmult}
\end{figure}


\subsubsection*{Comparison to \citet{campbell:2021}}
	As anticipated in Section~\ref{sec:intro}, the \emph{online variational filtering} ({\OVF}) approach of \citet{campbell:2021} performs online parameter learning, although considering a variational family that is not particle-based and thus not directly comparable to the {\VSMC} one. Even if it is algorithmically and computationally complex and includes a large number of hyperparameters to be tuned, {\OVF} is a relevant recent work in the context of online variational learning.  Thus, we used it as a benchmark for {\OVSMC}, and let the latter solve the same problem as in Figure~1b in \citet[Section~5.1]{campbell:2021}. The result is displayed in Figure~\ref{fig:lgcampbell}, where we demonstrate the comparability between the two methods for learning $A$ and $B$; however, we note a significant difference in the computational time, since using the code provided by \citet{campbell:2021} {\OVF} takes about 17 hours while {\OVSMC} uses less than one hour for a single run. Even though we are aware that part of this difference may be due to the particular implementation used, the complexity gap between the two methods is definitely non-negligible.
\begin{figure}[h]
	\centering
	\includegraphics[width=\columnwidth]{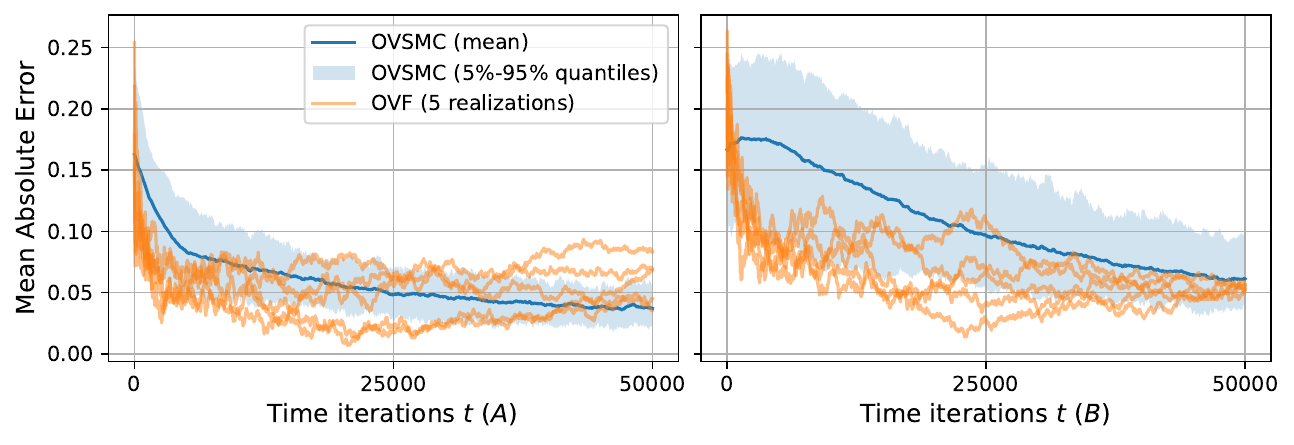}
	\caption{Mean absolute errors estimating $A$ (left) and $B$ (right) of five independent runs of {\OVF}, along with the distribution of 40 independent runs of {\OVSMC}, with $L=5$ and $N=10^4$, proposal kernel as described in Section~\ref{subsec:lg}, with 64 nodes in the hidden layer of each neural network, and {\ADAM} learning rates $10^{-3}$. Here $d_x=d_y=10$, $S_u=0.1I$, and $S_v=0.25I$ and matrices $A$ and $B$ are diagonal with i.i.d.  $\operatorname{Unif}(0.5,1)$-distributed elements. }\label{fig:lgcampbell}
\end{figure}

\subsection{Stochastic Volatility and RML}\label{subsec:sv_rml}
In this example we focus on parameter estimation and compare {\OVSMC} with \texttt{PaRIS}-based RML \citep{olsson:westerborn:2018}. We consider a univariate \textit{stochastic volatility model} \citep{hull:white:1987}, where for $x\in \set{X} = \rset$, $\hiddens{\thp}(\cdot\mid x)$ and $\emdens{\thp}(\cdot\mid x)$ are $\Norm(\alpha x, \sigma^2)$ and $\Norm(0, \beta^2 \exp(x))$, respectively, with $\alpha\in(0,1)$, $\sigma > 0$ and $\beta > 0$. The state process is initialized according to its $\Norm(0,\sigma^2/(1-\alpha^2))$ stationary distribution. Figure~\ref{fig:sv_rml_conv} reports the learning curves of the estimated parameters $\thp = (\alpha, \sigma, \beta)$ obtained with {\OVSMC} and particle-based RML, \noteJO{starting from different parameter vectors}, for 27 independent data sequences
generated under $\theta=(0.975,0.165,0.641)$. In both algorithms, the learning rate was $10^{-3}$.
\begin{figure}[h]
	\centering
	\includegraphics[width=\forarxiv{0.8}\columnwidth]{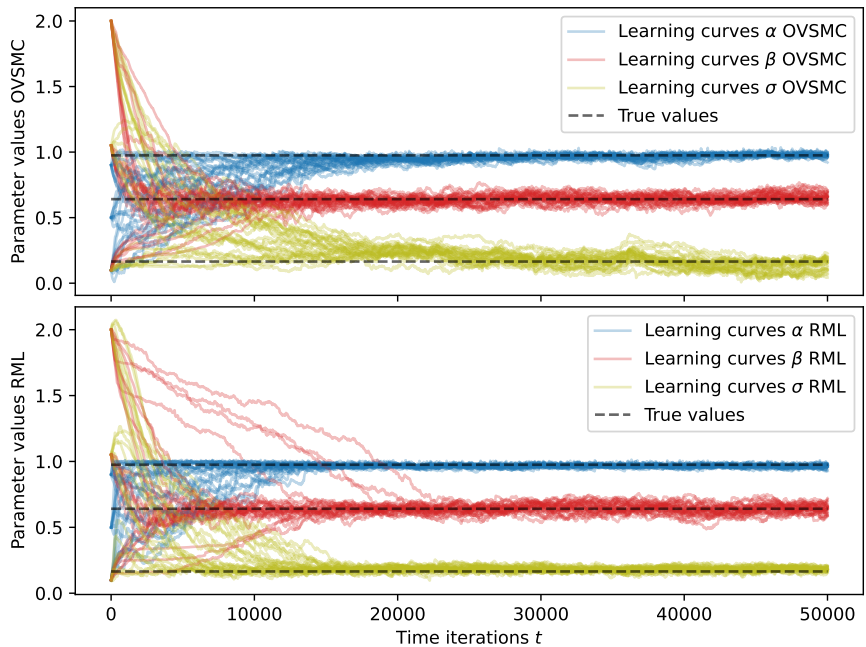}
	\caption{Parameter learning curves obtained with {\OVSMC} (with $L = 5$ and $N = 1000$) and \texttt{PaRIS}-based RML  \citep{olsson:westerborn:2018} (with $N = 1000$), for the stochastic volatility SSM in Section~\ref{subsec:sv_rml}, with learning rate $10^{-3}$.}\label{fig:sv_rml_conv}
\end{figure}
\noteJO{The learning curves show similar behavior, with {\OVSMC} converging faster than RML in several cases when estimating $\beta$, but fluctuating somewhat more when learning $\alpha$ and $\sigma$. This is to be expected, since the RML procedure is based on particle smoothing. However, we should remember that {\OVSMC} does not use any backward sampling and in addition to learning $\thp$ also has the ability to optimize online the proposal kernel over the same family of deep Gaussian proposals as in Section~\ref{subsec:lg}.}
As the number of particles grows, the computational speed is also in favor of {\OVSMC}, since the \texttt{PaRIS}-based RML has a computational bottleneck stemming from its inherent backward-sampling mechanism; with our implementation, {\OVSMC} was three times faster than \texttt{PaRIS}-based RML in this specific case, and this ratio grows even more in favor of {\OVSMC} as $N$ increases.

\subsection{Deep Generative Model of Moving Agent}\label{subsec:ball}
In this section we study the applicability of {\OVSMC} to more complex and high-dimensional models. Inspired by \citet[Section~5.3]{le:2018}, we produced a long and partially observable video sequence, with frames represented by $32\times 32$ arrays, showing a moving agent. The motion is similar as the one described in the referenced paper, with the difference that the agent exhibits a stationary behavior by bouncing against the edges of the image. The agent is occluded from the image whenever it moves into the central region of the frame, covered by a $16\times 30$ horizontal rectangle. The data generation is described in more detail in Appendix~\ref{subsec:datagenball}. 
Following \citet[Section~C.1]{le:2018}, we model the generative process and the proposal through the framework of \textit{variational recurrent neural networks} \citep{chung:2015}, with a similar architecture described in more detail in Appendix~\ref{subsec:vrnnball}. 

In this setting, we generated a single video sequence with $T = 10^5$ frames, and processed the same by iterating Algorithm~\ref{algo:ovpf}, with $N=20$ and $L=5$, $T$ times. We remark that since we have to run the algorithm for a large number of iterations, it is clearly infeasible---and possibly not even useful---to input the whole history of frames and latent states to the recurrent neural network of the model; instead, we include only a fixed window (of length $40$) of recent frames and states and discard all the previous information. This allows the average time and memory requirement per iteration to be kept at a constant value. 

\noteJO{Figures~\ref{fig:video2}--\ref{fig:video4} show a visual illustration of the method's ability to generate probable videos. The goal is to analyze the quality of the learned generative model by visualizing some independently sampled videos and comparing them to a reference video generated from a different seed than the one used for training. We consider a 150 frames long reference video. For each sample we use the proposal distribution to generate the latent states for the first 60 iterations, \ie, we access the true frame to sample and then decode the states, and after $t=60$ we continue with the generative model only. The images show every third frame for the true video and five independent sample videos. We note that the first 60 frames of the sample videos look similar to the true video, as expected since the model has access to the true observations. 
After that, the sample videos continue similarly in an initial phase, but start to move at different speeds when the agent disappears behind the rectangle after $t=75$.
As the generated frame sequences show, the model predicts a smooth movement pattern of the agent, with some random perturbations, as in the second sample, where the agent leaves the rectangle after $t=96$ from the top instead of below.}

\begin{figure}[htb]
	\centering
	\includegraphics[width=\forarxiv{0.8}\columnwidth]{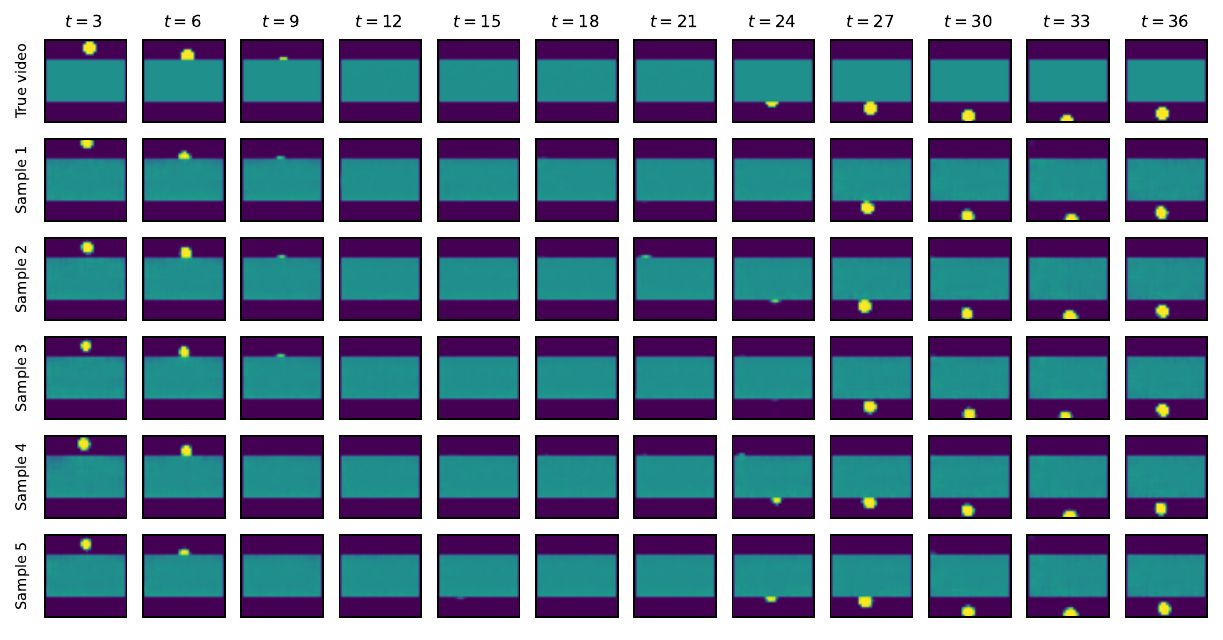}
	\includegraphics[width=\forarxiv{0.8}\columnwidth]{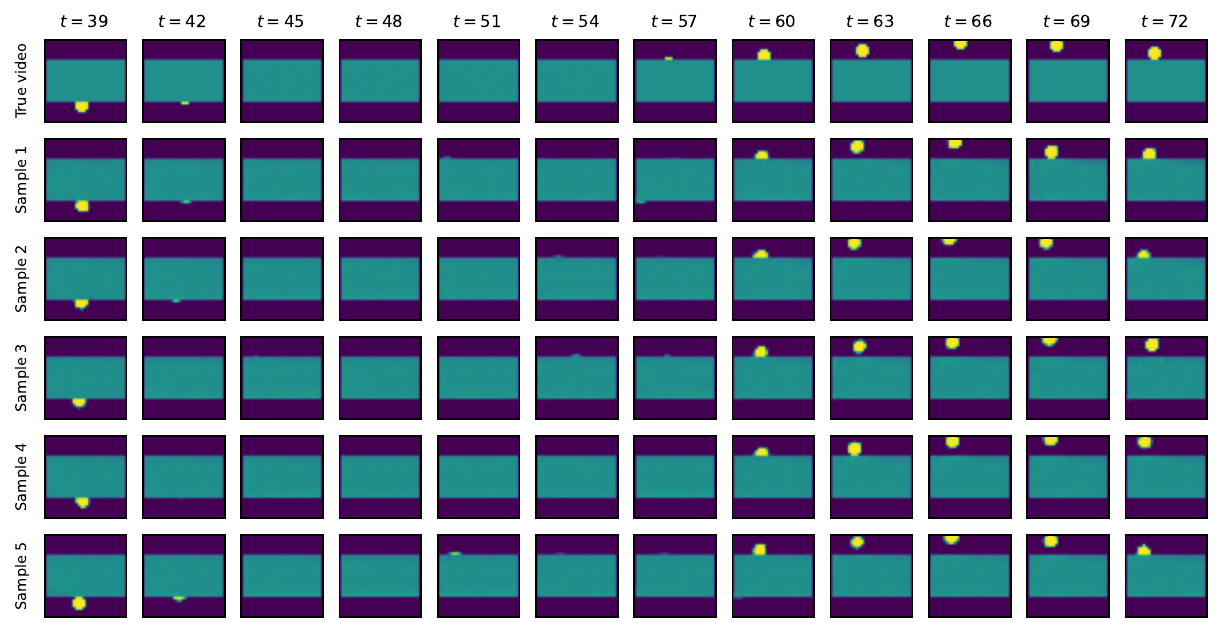}
	\caption{See caption in Figure~\ref{fig:video4}.}\label{fig:video2}
\end{figure}
\begin{figure}[htb]
	\includegraphics[width=\forarxiv{0.8}\columnwidth]{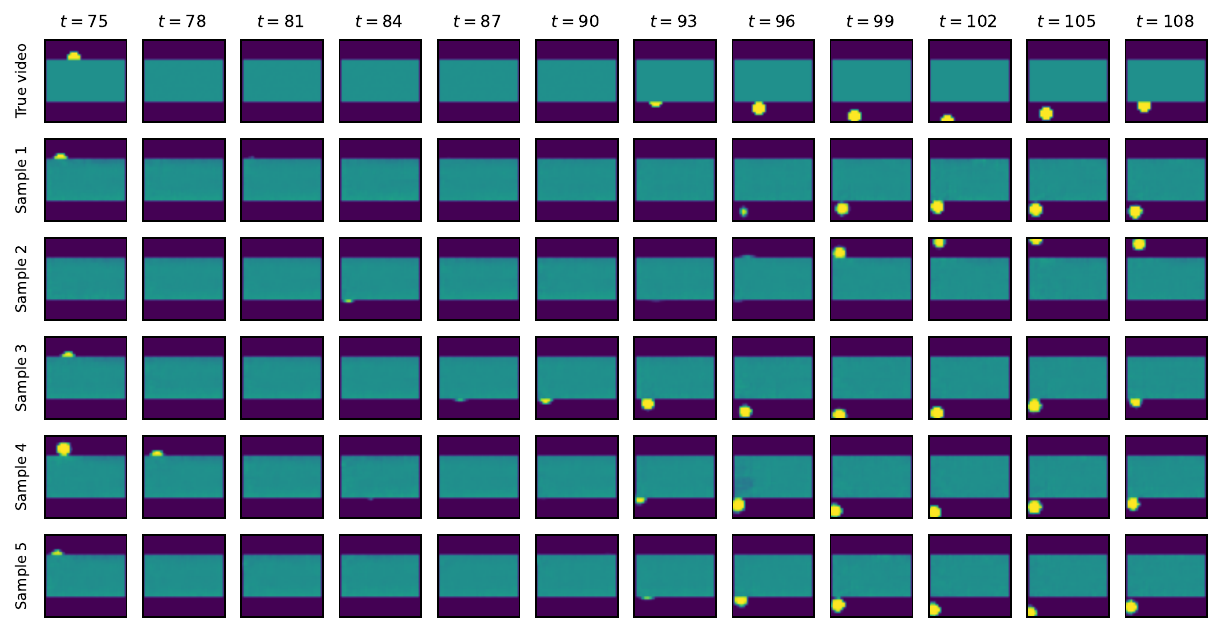}
	\includegraphics[width=\forarxiv{0.8}\columnwidth]{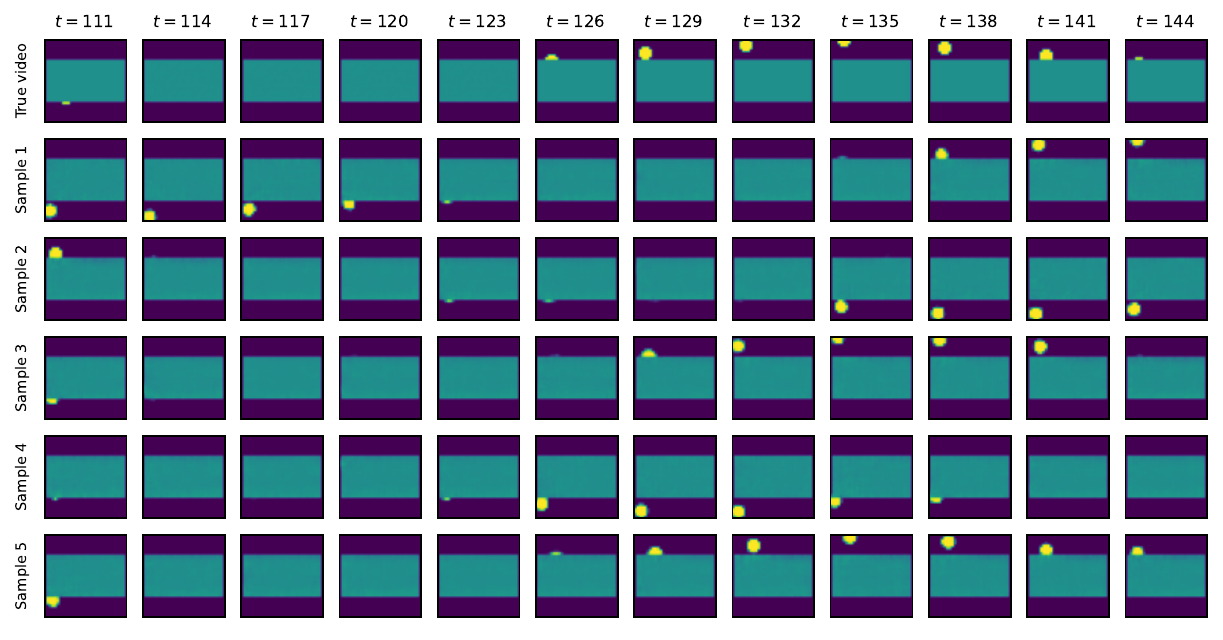}
	\caption{Comparison between original video and five samples under the learned model. Until $t=60$, frames are generated using the learned proposal and the given data to encode the latent states; after this, the frames are produced using the generative model. 
 }\label{fig:video4}
\end{figure}

\section{Conclusions}
We have presented 
the {\OVSMC} algorithm, a procedure that extends the batch {\VSMC} to the context of streaming data. The proposed methodology allows us to learn, on-the-fly, unknown model parameters and optimal particle proposals in both standard SSM as well as and more complex generative models. Under strong mixing assumptions, which are standard in the literature, we provide theoretical support showing that the stochastic approximation scheme of {\OVSMC} solves the same problem as {\VSMC} for an infinite batch size. The interesting question of whether the estimates produced by {\OVSMC} are still consistent for the true parameter in the case of correctly specified SSM despite the biased approximation of the ELBO gradient remains open for future work. 
\noteJO{Moreover, from an application point of view, it would be appealing to explore, empirically and theoretically, the case where the parameters vary through time (\ie when dealing with non-stationary data), with the aim of identifying the conditions under which {\OVSMC} would still work.}

\newpage

\newpage
\appendix

\section{Proof of Theorem~\ref{thm:main}}\label{appendix:a}
\newcommand{\refg}{\eta}

In the first part of this supplement we provide a detailed proof of Theorem~\ref{thm:main}, proceeding as follows.  
\begin{itemize}
	\item\textbf{Preliminaries (Section~\ref{subsec:prel}).} For technical reasons, the proof of Theorem~\ref{thm:main} to be presented calls for some more sophisticated notation, introduced in Section~\ref{sec:notation}. In addition, in Section~\ref{sec:algorithm} we present the slight modification of Algorithm~\ref{algo:ovpf} that is the subject of our theoretical analysis and in which we optimize a unique parameter vector $\thp$ containing both model and proposal parameters.
	\item \textbf{Intermediate results (Section~\ref{subsec:int_res}).} In this part, we redefine the extended state-dependent Markov chain $(Z_t)_{t \in \nset}$ (discussed in Section~\ref{sec:theory}), including the original SSM as well as the random variables generated by Algorithm~\ref{algo:ovpf_theory}. Under strong mixing assumptions, we establish, in Proposition~\ref{prop:ergod} (corresponding to Proposition~\ref{prop:ergomain}) uniform ergodicity of the Markov transition kernel $T_\thp$ governing $(Z_t)_{t \in \nset}$ and the existence of a stationary distribution. 
	
	After this, we consider the Robbins--Monroe scheme with state-dependent Markov noise targeting the mean field $\mf{\thp}$ defined (as in Section~\ref{sec:theory}) as the expectation of the noisy measurement function $\grad{\thp}$ under the stationary distribution of $T_\thp$, and prove that it is bounded. Under the assumption that the state-process and emission transition densities as well as their gradients (with respect to $\thp$) are bounded and Lipschitz continuous in $\thp$, Proposition~\ref{prop:poisson} then establishes the existence of a solution to the Poisson equation associated with the Markov kernel $T_\thp$.
	\item \textbf{Main proofs (Section~\ref{subsec:main_proofs}).} Next, we introduce further assumptions regarding the Lyapunov function associated with the mean field and the step-size sequence of the stochastic update, under which we restate and prove Theorem~\ref{thm:main} and Corollary~\ref{corollary:decay}.
	\item \textbf{Auxiliary results (Section~\ref{subsec:otherproofs}).} Finally, we prove Proposition~\ref{prop:poisson} using Lemmas~\ref{lemma:boundswgt} and \ref{lemma:1}. The approach to these proofs is partially inspired by the work of \citet{tadic:doucet:2018}.
\end{itemize}
We remark that our analysis is carried through for a fixed particle sample size $N \in \nsetpos$ in the SMC algorithm, and it is beyond the scope of our paper to optimize the derived theoretical bounds with respect to $N$. 

\subsection{Preliminaries}\label{subsec:prel}
\subsubsection{Notation}
\label{sec:notation}
We let $\rsetnonneg$ and $\rsetpos$ be the sets of nonnegative and positive real numbers, respectively, and denote vectors by $x_{m:n}\eqdef(x_m,x_{m+1},\dots,x_{n-1},x_n)$ or, alternatively, by $x^{m:n}\eqdef(x^m,x^{m+1},\dots,x^{n-1},x^n)$, depending on the situation. For some general state space $(\set{S}, \alg{S})$, we let $\meas{\alg{S}}$ be the set of measures on $\alg{S}$, and $\probmeas{\alg{S}}\subset\meas{\alg{S}}$ the probability measures.  For a signed measure $\sigma(ds)$ on $(\set{S},\alg{S})$, we denote by $\abs{\sigma}(ds)$ its total variation. 

We now reintroduce the SSM $(X_t, Y_t)_{t \in \mathbb{N}}$, specified in Section~\ref{sec:background}, in a somewhat more rigorous way. More specifically, $(X_t, Y_t)_{t \in \mathbb{N}}$ is defined as a 
bivariate Markov chain evolving on $(\set{X} \times \set{Y}, \alg{X} \tensprod \alg{Y})$ according to the Markov transition kernel 
$$
\kernel{K}_\thp : (\set{X}\times\set{Y}) \times (\alg{X} \tensprod \alg{Y}) \ni ((x,y), A) \mapsto \iint \1{A}(x',y') \, \hidker{\thp}(x,dx') \,\emker{\thp}(x', dy'), 
$$
where 
\begin{align}
\hidker{\thp} : \set{X} \times \alg{X} &\ni (x, A) \mapsto \int  \1{A}(x') \, \hiddens{\thp}(x, x') \, \mu(dx'), \\
\emker{\thp} : \set{X} \times \alg{Y} &\ni (x, B) \mapsto \int  \1{B}(y) \, \emdens{\thp}(x, y) \, \refg(dy),
\end{align}
with $\hiddens{\thp}: \set{X} \times \set{X} \to \rsetnonneg$ and $\emdens{\thp}:\set{X}\times\set{Y}\to\rsetnonneg$ being the state and emission transition densities and $\mu \in \meas{\alg{X}}$ and $\refg \in \meas{\alg{Y}}$ reference measures. (Here we have slightly modified the notation in Section~\ref{sec:background}, by using the short-hand notation $\hiddens{\thp}(x, x') = \hiddens{\thp}(x' \mid x)$ and $\emdens{\thp}(x, y) = \emdens{\thp}(y \mid x)$.)
The chain is initialized according to $\xinit\varotimes\emker{\thp}:\alg{X} \tensprod \alg{Y} \ni A \mapsto \int_A \xinit(dx) \,\emker{\thp}(x, dy)$, where $\xinit$ is some probability measure on $(\set{X}, \alg{X})$ having density $m_0(x)$ with respect to $\mu$. As specified in Section~\ref{sec:background}, only the process $(Y_t)_{t \in \nset}$ is observed, whereas the state process $(X_t)_{t \in \nset}$ is unobserved and hence referred to as \emph{latent} or \emph{hidden}. 
Moreover, $\thp$ is a parameter belonging to some vector space $\parspace$ and governing the dynamics of the model. 

\subsubsection{Algorithm}
\label{sec:algorithm}

We let $\propker{\thp}: \set{X} \times \set{Y} \times \alg{X} \to [0,1]$ be some Markov kernel, the so-called \emph{proposal kernel}, parameterized by $\thp \in \parspace$ as well and having transition density $\propdens{\thp}:\set{X}\times\set{X}\times\set{Y}\to \rsetnonneg$ with respect to $\mu$, such that for every $(x, y, A) \in \set{X} \times \set{Y} \times \alg{X}$, 
$$
\propker{\thp}((x, y), A) = 0 \Rightarrow \int \1{A}(x') \emdens{\thp}(x', y) \, \hidker{\thp}(x, dx') = 0. 
$$ 
On the basis of the proposal kernel, define the weight function $\wgtfuncb{\thp}{}(x,x',y)\eqdef\hiddens{\thp}(x,x')\emdens{\thp}(x',y)/\propdens{\thp}(x,x',y)$ for $(x,x',y)\in\set{X}\times\set{X}\times\set{Y}$ such that $\propdens{\thp}(x, x', y) > 0$. In order to express the particles as explicit differentiable functions of $\thp$, the proposal is assumed to be reparameterizable. More precisely, we assume that there exist some state-space $(\set{E},\alg{E})$, an easily sampleable probability measure $\indmeas\in\probmeas{\alg{E}}$ not depending on $\thp$, and a function $\repfunc{\thp}: \set{X} \times \set{Y} \times \set{E}\to \set{X}$ such that for all $(x,y)\in\set{X}\times\set{Y}$ and $\thp \in \parspace$, it holds that $\int h(\repfunc{\thp}(x, y, v)) \, \indmeas(d v) = \int h(x') \, \propker{\thp} ((x,y),dx')$ for all bounded real-valued measurable functions $h$ on $\set{X}$; in other words, the pushforward distribution $\indmeas \circ \repfunc{\thp}^{-1}(x, y, \cdot)$ coincides with $\propker{\thp}((x, y), \cdot)$. 

Algorithm~\ref{algo:ovpf_theory} displays the procedure studied in our analysis. In this slightly modified version of Algorithm~\ref{algo:ovpf}, the particle system will be represented by the \emph{resampled} particles, denoted here as $(\epartil{t}{i})_{i=1}^N$, $t\ge 0$. In order to avoid introducing further notation, the resampled particles are conventionally all initialized at time $-1$ by some arbitrary value $u\in\set{X}$ such that  $\{\repfunc{\thp}(u,y_0,\auxrv_{0}^i)\}_{i=1}^N$, corresponding to $(\epart{0}{i})_{i=1}^N$, are i.i.d. according to some initial proposal distribution $\propker{\thp}((u,y_0), \cdot)$ depending on $u$, where $(\auxrv_{0}^i)_{i=1}^N\sim\indmeas^{\tensprod N}$. As we will see, the cloud of resampled particles will be included in the state-dependent Markov chain $(Z_t)_{t \in \nset}$ governing the perturbations of the stochastic-approximation scheme under consideration; the initialization according to the constant $u$ is described in \eqref{eq:init_dist}.

In Algorithm~\ref{algo:ovpf_theory}, we operate with \emph{two} samples of mutated particles, one used for the propagation of the particle cloud and one used for approximation of the gradient, in the sense that the noise variables $(\auxrv_{t+1}^i)_{i = 1}^N$ generated on Line~9 are not used to propagate the particles. Consequently, $\repfunc{\thp_t}(\epartil{t}{i}, y_{t+1}, \auxrv_{t+1}^i)$ is generally different from $\epart{t + 1}{i}$.   
Importantly, this decoupling makes the Markov transition kernel $T_\thp$ non-collapsed (free from Dirac components), which, as we will see, facilitates significantly the theoretical analysis of the algorithm. 
\begin{algorithm}[h]
	\caption{{\OVSMC} (modified version)}\label{algo:ovpf_theory}
	\begin{algorithmic}[1]
		\STATE {\bfseries Input:} $(\epartil{t-1}{i})_{i=1}^N$, $y_{t:t+1}$, $\thp_t$
		\FOR{$i=1,\dots,N$}
		\STATE draw $\epart{t}{i}\sim \propker{\thp_{t}}((\epartil{t-1}{i},y_{t}),\cdot)$;
		\STATE set $\wgt{t}{i}\gets\wgtfuncb{\thp_{t}}{}(\epartil{t-1}{i},\epart{t}{i},y_{t})$;
		\ENDFOR
		\FOR{$i=1,\dots,N$}
		\STATE draw $\I{t+1}{i}\sim \catdist((\wgt{t}{\ell})_{\ell=1}^N)$;
		\STATE set $\epartil{t}{i}\gets \epart{t}{\I{t+1}{i}}$;
		\STATE draw $\auxrv_{t+1}^{i}\sim \indmeas$;
		\ENDFOR
		\STATE set $\thp_{t+1}\gets\thp_t+\gamma_{t+1}\nabla\log\left(\sum_{i=1}^{N}\wgtfuncb{\thp_t}{}(\epartil{t}{i},\repfunc{\thp_t}(\epartil{t}{i}, y_{t+1}, \auxrv_{t+1}^i),y_{t+1})\right)$;
		\STATE {\bfseries return} $(\epartil{t}{i})_{i=1}^N$, $\thp_{t+1}$.
	\end{algorithmic}
\end{algorithm}
\subsection{Intermediate results}\label{subsec:int_res}
\subsubsection{Construction of $(Z_t)_{t \in \nset}$}
We first provide a more detailed statement of Assumption~\ref{assum:ssm}, which assumes that the law of the data is governed by an unspecified SSM.
\begin{assumption}\label{assum:ssmapp}
	The observed data stream $(Y_t)_{t \in \nset}$ is the output of an SSM $(X_t, Y_t)_{t \in \nset}$ on $(\set{X} \times \set{Y}, \alg{X} \tensprod \alg{Y})$ with some state and observation transition kernels $\hidkertrue(x,dx')$ and $\emkertrue(x,dy)$, respectively. These kernels have transition densities $\hiddenstrue(x,x')$ and $\emdenstrue(x,y)$ with respect to $\mu$ and $\refg$, respectively.
\end{assumption}
As explained in Section~\ref{sec:theory}, the stochastic process $(\z_t)_{t \in \nset}$, evolving on the product space $(\set{Z}, \alg{Z})\eqdef(\set{X}\times \set{Y} \times \set{X}^N \times \set{E}^N, \alg{X} \tensprod\alg{Y}\tensprod\alg{X}^{\tensprod N} \tensprod \alg{E}^{\tensprod N})$, includes the data-generating SSM well as the random variables generated by Algorithm~\ref{algo:ovpf_theory}, \ie, for $t\in\nset$, 
$\z_t \coloneqq (X_t, Y_t, \epartil{t-1}{1:N}, \auxrv_{t}^{1:N})$. Let $\prob$ and $(\mathcal{F}_t)_{t \in \nset}$ be the law and natural filtration of $(Z_t)_{t \in \nset}$; then, as described in Section~\ref{sec:theory}, $(\z_t)_{t\in\nset}$ is a state-dependent Markov chain with transition kernel $\tz{\thp}$, in the sense that for any bounded measurable function $h$ on $\set{Z}$, $\prob$-a.s., $\mathbb{E}[h(Z_{t + 1}) \mid \mathcal{F}_t] = T_{\thp_t} h (Z_t)$. The kernel $T_\thp$ is given by, with 
$z_t = (x_t, y_t, \tilde{x}_{t - 1}^{1:N}, v_t^{1:N})$, 
\begin{multline}
	\tz{\thp}(z_{t},dz_{t+1}) \eqdef \hidkertrue(x_t, dx_{t+1})\,\emkertrue(x_{t+1},dy_{t+1}) \int_{\epartb{t}{1:N}}
	\prod_{k=1}^{N}\propdens{\thp}(\epartilb{t-1}{k},\epartb{t}{k},y_t) \\ \times\prod_{j=1}^{N}\left(\sum_{i=1}^{N}\frac{\wgtfuncb{\thp}{}(\epartilb{t-1}{i},\epartb{t}{i},y_t)}{\sum_{\ell=1}^{N}\wgtfuncb{\thp}{}(\epartilb{t-1}{\ell},\epartb{t}{\ell},y_t)} \delta_{\epartb{t}{i}}(d\epartilb{t}{j}) \right)\,\indmeas^{\tensprod N}(d\auxrvb_{t+1}^{1:N})\,\mu^{\tensprod N}(d\epartb{t}{1:N}), 
\end{multline}
where we have written $\int_{\epartb{t}{1:N}}$ to indicate that the integral is with respect to $(x_t^i)_{i = 1}^N$. The initial state $\z_0^\thp$ is initialized according to the probability measure
\begin{equation}\label{eq:init_dist}
	\tau_{0}(dz_0) \eqdef \xinit_{}(dx_0)\,\bar{\emker{}}(x_{0},dy_{0})\,\delta_u^{\tensprod N}(d\epartilb{-1}{1:N}) \, \indmeas^{\tensprod N}(d\auxrvb_0^{1:N}),
\end{equation}
where the dummy particles $\epartilb{-1}{1:N}$ we are initialized at an arbitrary point $u \in \set{X}$. Under the following assumption, we will establish that the kernel $T_\thp$ is uniformly ergodic and has an invariant distribution.
\begin{assumption}\label{assum:2}
	There exists $\eq \in(0,1)$ such that for every $\thp\in \parspace$, $(x,x')\in\set{X}^2$ and $y\in\set{Y}$, 
	$$\eq\le \hiddenstrue(x,x')\le\frac{1}{\eq},\qquad\eq\le \hiddens{\thp}(x,x')\le\frac{1}{\eq},\qquad \eq\le\emdens{\thp}(x,y)\le\frac{1}{\eq},\qquad \eq\le\propdens{\thp}(x,x',y)\le\frac{1}{\eq}.
	$$ 
\end{assumption}
The strong mixing assumptions of Assumption~\ref{assum:2} (as well as Assumption~\ref{assum:3} introduced later) are standard in the literature and point to applications where the state and parameter space are compact sets. Note that from Assumption~\ref{assum:2} it follows directly that $\eq^3\le \wgtfuncb{\thp}{}(x,x',y)\le\eq^{-3}$ for all $\thp\in \parspace$, $(x,x')\in\set{X}^2$ and $y\in\set{Y}$. Let $(\z_t^\thp)_{t \in \nset}$ and $\prob_\thp$ denote the canonical Markov chain and its law induced by the Markov transition kernel $T_\thp$. 
	\begin{remark} \label{rem:mu:probability}
	Note that by Assumption~\ref{assum:2} it follows that the reference measure $\mu$ is a finite measure on $(\set{X}, \alg{X})$. Thus, without loss of generality we assume in the following that $\mu$ is a probability measure. 
	\end{remark}

We now rewrite and prove Proposition~\ref{prop:ergomain}.
\begin{proposition}[Proposition~\ref{prop:ergomain}] \label{prop:ergod}
	Let Assumptions~\ref{assum:ssmapp} and \ref{assum:2} hold. Then for every $\thp\in\parspace$, $(\z_t^\thp)_{t \in \nset}$ is geometrically uniformly ergodic and admits a unique stationary distribution $\statmeas{\thp}\in\probmeas{\alg{Z}}$.
\end{proposition}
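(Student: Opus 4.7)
The plan is to establish a uniform one-step Doeblin minorization of the kernel $T_\thp$ on the whole space $\set{Z}$, with constants that do not depend on $\thp$, and then invoke the standard Meyn--Tweedie result \citep[see, \eg,][Theorem~18.2.4]{douc:moulines:priouret:soulier:2018} to obtain geometric uniform ergodicity together with the existence and uniqueness of an invariant probability measure $\statmeas{\thp}$.

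First I would extract the lower bounds implied by Assumption~\ref{assum:2}. The two ``external'' factors $\hidkertrue(x_t,dx_{t+1})$ and $\emkertrue(x_{t+1},dy_{t+1})$ are immediately minorized by $\eq\,\mu(dx_{t+1})$ and $\eq\,\refg(dy_{t+1})$, respectively. Each proposal density $\propdens{\thp}(\epartilb{t-1}{k},\epartb{t}{k},y_t)$ is bounded below by $\eq$, and each normalized weight
\[
\frac{\wgtfuncb{\thp}{}(\epartilb{t-1}{i},\epartb{t}{i},y_t)}{\sum_{\ell=1}^{N}\wgtfuncb{\thp}{}(\epartilb{t-1}{\ell},\epartb{t}{\ell},y_t)}
\]
is bounded below by $\eq^{6}/N$ since $\eq^{3}\le\wgtfuncb{\thp}{}\le\eq^{-3}$. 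The auxiliary-variable factor $\indmeas^{\tensprod N}$ is intrinsic. Recall also (Remark~\ref{rem:mu:probability}) that $\mu$ is a probability measure and, by integrating $\emdenstrue$, that $\refg$ has positive finite mass, so $\tilde{\refg}\eqdef\refg/\refg(\set{Y})$ is a probability measure.

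The main step is to handle the product of the $N$ Dirac resampling measures, which are random via the $\epartb{t}{1:N}$ being integrated. The trick is to restrict, inside each factor $\sum_{i=1}^{N}\delta_{\epartb{t}{i}}(d\epartilb{t}{j})$, to the diagonal index $i=j$, which yields
\[
\prod_{j=1}^{N}\Bigl(\sum_{i=1}^{N}\frac{\wgtfuncb{\thp}{}(\cdot,\epartb{t}{i},\cdot)}{\sum_{\ell}\wgtfuncb{\thp}{}(\cdot,\epartb{t}{\ell},\cdot)}\delta_{\epartb{t}{i}}(d\epartilb{t}{j})\Bigr)\ge\frac{\eq^{6N}}{N^{N}}\prod_{j=1}^{N}\delta_{\epartb{t}{j}}(d\epartilb{t}{j}).
\]
Plugging this into the definition of $T_\thp$ and collapsing the Dirac factors against $\mu^{\tensprod N}(d\epartb{t}{1:N})$ transforms that inner integral into $\prod_{k=1}^{N}\propdens{\thp}(\epartilb{t-1}{k},\epartilb{t}{k},y_t)\,\mu^{\tensprod N}(d\epartilb{t}{1:N})$, which is in turn minorized by $\eq^{N}\mu^{\tensprod N}(d\epartilb{t}{1:N})$. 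Putting the pieces together yields, for every $z\in\set{Z}$ and $\thp\in\parspace$,
\[
T_\thp(z,\cdot)\ge\delta\,\indmeas_{\mathrm{ref}}(\cdot),\qquad\delta\eqdef\eq^{2}\,\refg(\set{Y})\,\frac{\eq^{7N}}{N^{N}}>0,
\]
where $\indmeas_{\mathrm{ref}}(dz')\eqdef\mu(dx')\,\tilde{\refg}(dy')\,\mu^{\tensprod N}(d\epartilb{}{1:N})\,\indmeas^{\tensprod N}(d\auxrvb^{1:N})$ is a probability measure on $(\set{Z},\alg{Z})$ independent of $z$ and $\thp$.

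Finally, a one-step uniform Doeblin minorization with a constant $\delta\in(0,1)$ directly implies that $\set{Z}$ itself is a ``small set'' of order one, hence $T_\thp$ is uniformly (and geometrically) ergodic with contraction rate at most $1-\delta$ in total variation, and possesses a unique invariant distribution $\statmeas{\thp}\in\probmeas{\alg{Z}}$. The only mildly delicate bookkeeping is the diagonal-restriction step above, which is exactly what allows one to turn the convex combination of Diracs produced by multinomial resampling into a lower bound by a fixed reference product measure; the remainder of the argument is a routine application of classical minorization-based ergodicity theory.
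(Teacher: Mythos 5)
Your overall strategy---a one-step, $\thp$-uniform Doeblin minorization of $T_\thp$ on all of $\set{Z}$ followed by the standard small-set ergodicity theorems---is exactly the paper's, so the architecture of the argument is sound. There is, however, one concrete unjustified step: you minorize $\emkertrue(x_{t+1},dy_{t+1})$ by $\eq\,\refg(dy_{t+1})$, i.e.\ you use a uniform lower bound $\emdenstrue \ge \eq$ on the \emph{true} emission density. Assumption~\ref{assum:2} bounds $\hiddenstrue$, $\hiddens{\thp}$, $\emdens{\thp}$ and $\propdens{\thp}$, but deliberately \emph{not} $\emdenstrue$, so this bound is not available (and your normalization $\tilde\refg=\refg/\refg(\set{Y})$ leans on the same unavailable bound to get $\refg(\set{Y})<\infty$; finiteness of $\refg$ does follow, but from $\emdens{\thp}\ge\eq$ and $\int\emdens{\thp}(x,y)\,\refg(dy)=1$, not from integrating $\emdenstrue$). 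The repair is cheap and is what the paper does: since $x_{t+1}$ is an integration variable of the minorizing measure rather than part of the starting state, you can simply keep the factor $\emdenstrue(x_{t+1},y_{t+1})\,\mu(dx_{t+1})\,\refg(dy_{t+1})$ \emph{inside} the reference measure $\nu_1$, which is then still a fixed (sub-probability) measure independent of $z$ and $\thp$ with $\nu_1(\set{Z})>0$.

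On the resampling product your diagonal-restriction trick (keeping only $i=j$ in each Dirac mixture) is valid and has the aesthetic advantage of collapsing the minorizing measure onto the genuine product measure $\mu^{\tensprod N}(d\epartilb{}{1:N})$, but it costs a factor $N^{-N}$: you obtain $\delta\propto\eq^{7N}/N^{N}$, whereas the paper lower-bounds each normalized weight by $\eq^{6}/N$ while retaining the full convex combination, yielding the occupation-measure reference $\prod_{j}\bigl(N^{-1}\sum_{i}\delta_{\epartb{}{i}}(d\epartilb{}{j})\bigr)$ of total mass one and the sharper constant $\eq^{7N+1}$. For the qualitative statement of the proposition either constant suffices, but the paper's explicit rate $\rhoq=1-\eq^{7N+1}$ is reused quantitatively in Lemma~\ref{lemma:1} and Proposition~\ref{prop:poisson}, so the sharper bookkeeping is not purely cosmetic there.
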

\begin{proof}
	In order to establish uniform ergodicity we show that $T_\theta$ allows the state space $\set{Z}$ as a $\nu_1$-small set for some $\nu_1\in\meas{\alg{Z}}$. Indeed for any $z_t = (x_t, y_t, \tilde{x}_{t - 1}^{1:N}, v_t^{1:N}) \in \set{Z}$ and $A \in \alg{Z}$, 
	
\begin{align}
		\tz{\thp}(z_{t}, A)&=\idotsint
		\1{A}(x_{t+1},y_{t+1},\epartilb{t}{1:N},\auxrvb_{t+1}^{1:N})\hiddenstrue(x_t,x_{t+1})\emdenstrue(x_{t+1},y_{t+1})\,\mu(dx_{t+1})\,\refg(dy_{t+1})
		\\&\qquad\times	\int_{\epartb{t}{1:N}}\prod_{k=1}^{N}\propdens{\thp}(\epartilb{t-1}{k},\epartb{t}{k},y_t)\prod_{j=1}^{N}\left(\sum_{i=1}^{N}\frac{\wgtfuncb{\thp}{}(\epartilb{t-1}{i},\epartb{t}{i},y_t)}{\sum_{\ell=1}^{N}\wgtfuncb{\thp}{}(\epartilb{t-1}{\ell},\epartb{t}{\ell},y_t)} \delta_{\epartb{t}{i}}(d\epartilb{t}{j})\right)\foraistats{\,}\forarxiv{\\&\hspace{8cm}}\times\mu^{\tensprod N}(d\epartb{t}{1:N})\,\indmeas^{\tensprod N}(d\auxrvb_{t+1}^{1:N})
		\\&\geq\eq^{7N+1}\int\cdots\int\1{A}(x_{t+1},y_{t+1},\epartilb{t}{1:N},\auxrvb_{t+1}^{1:N})\emdenstrue(x_{t+1},y_{t+1})\,\mu(dx_{t+1})\,\refg(dy_{t+1})
		\\&\qquad\times \int_{\epartb{t}{1:N}}
		\prod_{j=1}^{N}\left(\frac{1}{N}\sum_{i=1}^{N}\delta_{\epartb{t}{i}}(d\epartilb{t}{j})\right)\,\mu^{\tensprod N}(d\epartb{t}{1:N})\,\indmeas^{\tensprod N}(d\auxrvb_{t+1}^{1:N}).
	\end{align}
	Thus, we may conclude that  $T_\thp$ allows  $\set{Z}$ as a $\nu_1$-small set with 
	\begin{equation}\label{eq:nu_1}
	\nu_1(dz)=\eq^{7N+1}\emdenstrue(x,y)\,\mu(dx)\,\refg(dy)\,\int_{\epartb{}{1:N}}
	\prod_{j=1}^{N}\left(\frac{1}{N}\sum_{i=1}^{N}\delta_{\epartb{}{i}}(d\epartilb{}{j})\right)\,\mu^{\tensprod N}(d\epartb{}{1:N})\,\indmeas^{\tensprod N}(d\auxrvb^{1:N}).
	\end{equation}
	Then, by \citet[Theorem 16.0.2(\emph{v})]{meyn:tweedie:2009} or \citet[Theorem~15.3.1(\emph{iii})]{douc:moulines:priouret:soulier:2018} it follows that $(\z_t^\thp)_{t\ge 0}$ is geometrically uniformly ergodic. Then the Dobrushin coefficient of $\tz{\thp}$ is strictly less than one for all $\thp\in\parspace$, which implies that $\tz{\thp}$ admits a unique stationary distribution $\statmeas{\thp}$ \citep[see, \eg,][Theorems~18.2.4--5]{douc:moulines:priouret:soulier:2018}. 
\end{proof}

\subsubsection{Stochastic approximation update and mean field}

Now, define, for $z = (x, y, \tilde{x}^{1:N}, v^{1:N}) \in \set{Z}$,  
\begin{equation}
	\grad{\thp}(z)\coloneqq \nabla\ln \left(\sum_{i=1}^{N}\wgtfuncb{\thp}{}(\epartilb{}{i},\repfunc{\thp}(\epartilb{}{i},y,\auxrvb_{}^i),y)\right)=\frac{\sum_{i=1}^{N}\nabla\wgtfuncb{\thp}{}(\epartilb{}{i},\repfunc{\thp}(\epartilb{}{i},y,\auxrvb_{}^i),y)}{\sum_{i=1}^{N}\wgtfuncb{\thp}{}(\epartilb{}{i},\repfunc{\thp}(\epartilb{}{i},y,\auxrvb_{}^i),y)},
\end{equation}
where, here and everywhere in the following, gradients are with respect to $\thp$, \ie, $\nabla = \nabla_\thp$.  
Then Algorithm~\ref{algo:ovpf_theory} is equivalent with the Robbins--Monro \citep{robbins:monro:1951} stochastic-approximation scheme 
$$
\thp_{t+1}\gets\thp_t+\gamma_{t+1}\grad{\thp_t}(\z_{t+1}), \quad t\in\nset,
$$
where $\z_{t+1}\sim \tz{\thp_t}(Z_t,\cdot)$, initialized by some starting guess $\thp_{0}$. This procedure aims to find a zero of the mean field
$$
\mf{\thp}\eqdef\int \grad{\thp}(z)\,\statmeas{\thp}(dz).
$$
The next result, Proposition~\ref{prop:poisson}, establishes the boundedness of the mean field $\mf{\thp}$, the convergence of the expectation of $\grad{\thp}(\z_t^\thp)$ to the same  and the existence of a solution to the Poisson equation associated with $\tz{\thp}$. This intermediate result will be instrumental in the proofs of Section~\ref{subsec:main_proofs}. Proposition~\ref{prop:poisson} will be proven under the assumption that the gradients of the model and proposal densities are bounded and Lipschitz in $\thp$.

\begin{assumption}\label{assum:3}
	There exists $\kq\in[1,\infty)$ such that for all $(\thp,\thp') \in \parspace^2$, $(x,x')\in\set{X}^2$, $y\in\set{Y}$ and $\auxrvb\in\set{E}$, 
	\begin{align}
		&\max \{\norm{\nabla \hiddens{\thp}(x,\repfunc{\thp}(x,y,\auxrvb))},\norm{\nabla \emdens{\thp}(\repfunc{\thp}(x,y,\auxrvb),y)},\norm{\nabla \propdens{\thp}(x,\repfunc{\thp}(x,y,\auxrvb),y)}\}\le \kq,
		\\& \max \{\abs{ \hiddens{\thp}(x,x')- \hiddens{\thp'}(x,x')},\abs{ \hiddens{\thp}(x,\repfunc{\thp}(x,y,\auxrvb))- \hiddens{\thp'}(x,\repfunc{\thp'}(x,y,\auxrvb))}, 
		\\&\hspace{3cm}\norm{\nabla \hiddens{\thp}(x,\repfunc{\thp}(x,y,\auxrvb))-\nabla \hiddens{\thp'}(x,\repfunc{\thp'}(x,y,\auxrvb))}\}\le \kq \norm{\thp-\thp'},
		\\& \max \{\abs{\emdens{\thp}(x,y)- \emdens{\thp'}(x,y)}, \abs{\emdens{\thp}(\repfunc{\thp}(x,y,\auxrvb),y)- \emdens{\thp'}(\repfunc{\thp'}(x,y,\auxrvb),y)}, \\&\hspace{3cm}\norm{\nabla \emdens{\thp}(\repfunc{\thp}(x,y,\auxrvb),y)-\nabla \emdens{\thp'}(\repfunc{\thp'}(x,y,\auxrvb),y)}\}\le \kq \norm{\thp-\thp'},
		\\&\max\{\abs{\propdens{\thp}(x,x',y)-\propdens{\thp'}(x,x',y)},\abs{\propdens{\thp}(x,\repfunc{\thp}(x,y,\auxrvb),y)-\propdens{\thp'}(x,\repfunc{\thp'}(x,y,\auxrvb),y)},
		\\&\hspace{3cm}\norm{\nabla\propdens{\thp}(x,\repfunc{\thp}(x,y,\auxrvb),y)-\nabla\propdens{\thp'}(x,\repfunc{\thp'}(x,y,\auxrvb),y)}\}\le \kq \norm{\thp-\thp'}. 
	\end{align}
	
\end{assumption}

\begin{proposition}\label{prop:poisson}
	Under Assumptions \ref{assum:2} and \ref{assum:3} the following holds true.
	\begin{enumerate}[label=(\roman*)]
		\item $\mf{\theta}$ is well-defined and bounded on $\parspace$.\label{lemmapart:1}
		\item For every $\thp\in\parspace$, $\mf{\thp}=\lim_{t\to \infty}\mathbb{E}_\thp[\grad{\thp}(\z_t^\thp)]$. \label{lemmapart:2}
		\item \label{lemmapart:3}There exists a measurable function $\htil{\thp}$ on $\set{Z}$ that satisfies the Poisson equation 
		\begin{equation}
			\grad{\thp}(z)-\mf{\thp}=\htil{\thp}(z)-\tz{\thp}\htil{\thp}(z),
		\end{equation}
		for every $\thp\in \parspace$ and $z\in\set{Z}$.
		\item There exists a real number $\aq\in[1,\infty)$ such that for every $(\thp,\thp') \in \parspace^2$ and $z\in\set{Z}$, 
		\begin{align}
			&\max \{\norm{\grad{\thp}(z)},\norm{\htil{\thp}(z)}, 	\norm{\tz{\thp}\htil{\thp}(z)} \}\le \aq, 
			\\&\max \{\norm{\grad{\thp}(z)-\grad{\thp'}(z)},\norm{\tz{\thp}\htil{\thp}(z)-\tz{\thp'}\htil{\thp'}(z)}, \norm{\mf{\thp}-\mf{\thp'}}\}\le 	\aq\norm{\thp-\thp'}. 
		\end{align}
		\label{lemmapart:4}
	\end{enumerate}
\end{proposition}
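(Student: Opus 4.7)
My plan is to treat the four statements in order, relying on the strong mixing assumptions (Assumption~\ref{assum:2}), the regularity assumptions on the gradients (Assumption~\ref{assum:3}), and the uniform ergodicity already established in Proposition~\ref{prop:ergod}. For (i), I would first write
$$
	\grad{\thp}(z) = \frac{\sum_{i=1}^{N}\nabla\wgtfuncb{\thp}{}(\epartilb{}{i},\repfunc{\thp}(\epartilb{}{i},y,\auxrvb^i),y)}{\sum_{i=1}^{N}\wgtfuncb{\thp}{}(\epartilb{}{i},\repfunc{\thp}(\epartilb{}{i},y,\auxrvb^i),y)}
$$
and control the denominator from below using $\wgtfuncb{\thp}{}\ge\eq^3$ (Assumption~\ref{assum:2}). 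The numerator is handled via the quotient rule applied to $\wgtfuncb{\thp}{}=\hiddens{\thp}\emdens{\thp}/\propdens{\thp}$, using the uniform bounds on $\hiddens{\thp}$, $\emdens{\thp}$, $\propdens{\thp}$ from Assumption~\ref{assum:2} and on their $\thp$-gradients from Assumption~\ref{assum:3}. This produces a finite constant $c$ (depending only on $\eq$, $\kq$, $N$) such that $\norm{\grad{\thp}(z)}\le c$, yielding both well-definedness and boundedness of $\mf{\thp}$ and also the bound on $\norm{\grad{\thp}(z)}$ in (iv). The analogous Lipschitz estimate on $\grad{\thp}(z)-\grad{\thp'}(z)$ (second line of (iv)) follows by writing out the quotient-rule expression and applying the Lipschitz bounds from Assumption~\ref{assum:3} term by term; this is precisely what I expect Lemma~\ref{lemma:boundswgt} to make precise.

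For (ii), the small set identified in the proof of Proposition~\ref{prop:ergod} is the full space $\set{Z}$ with the same minorising measure $\nu_1$ and minorising constant $\eq^{7N+1}$ for every $\thp$. Applying the standard coupling bound (or \citep[Theorem~18.2.4]{douc:moulines:priouret:soulier:2018}) gives a geometric rate $\rho\in(0,1)$ and a constant $C$, both independent of $\thp$, such that $\sup_{z}\norm{\tz{\thp}^{t}(z,\cdot)-\statmeas{\thp}}_{\mathrm{TV}}\le C\rho^{t}$. Combining this with the uniform bound on $\grad{\thp}$ from (i) yields $\abs{\mathbb{E}_\thp[\grad{\thp}(\z_t^\thp)]-\mf{\thp}}\le c\,C\rho^{t}\to 0$.

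For (iii), I set
$$
	\htil{\thp}(z)\eqdef\sum_{t=0}^{\infty}\bigl(\tz{\thp}^{t}\grad{\thp}(z)-\mf{\thp}\bigr),
$$
which converges absolutely and uniformly in $(z,\thp)$ by the previous geometric bound, so that $\norm{\htil{\thp}(z)}\le c\,C/(1-\rho)$. Telescoping $\tz{\thp}\htil{\thp}(z)=\sum_{t=1}^{\infty}(\tz{\thp}^{t}\grad{\thp}(z)-\mf{\thp})=\htil{\thp}(z)-(\grad{\thp}(z)-\mf{\thp})$ verifies the Poisson equation and also yields the same uniform bound on $\norm{\tz{\thp}\htil{\thp}(z)}$ required by (iv).

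The remaining, and hardest, part of (iv) is the Lipschitz continuity of $\tz{\thp}\htil{\thp}(z)$ and $\mf{\thp}$ in $\thp$. Here I would decompose
$$
	\tz{\thp}^{t}\grad{\thp}(z)-\tz{\thp'}^{t}\grad{\thp'}(z) = \tz{\thp}^{t}(\grad{\thp}-\grad{\thp'})(z)+\bigl(\tz{\thp}^{t}-\tz{\thp'}^{t}\bigr)\grad{\thp'}(z),
$$
and similarly $\mf{\thp}-\mf{\thp'}=\int(\grad{\thp}-\grad{\thp'})\,d\statmeas{\thp}+\int\grad{\thp'}\,d(\statmeas{\thp}-\statmeas{\thp'})$. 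The first summands are handled by the Lipschitz estimate from part (i)–(iv) just proved. The delicate step, which I expect to be the main obstacle and which I conjecture is the content of Lemma~\ref{lemma:1}, is to control the kernel difference. I would do this by first showing that $\thp\mapsto\tz{\thp}(z,\cdot)$ is Lipschitz in total variation, uniformly in $z$: expanding $\tz{\thp}(z,\cdot)-\tz{\thp'}(z,\cdot)$ term by term and invoking Assumption~\ref{assum:3} for $\propdens{\thp}$ and $\wgtfuncb{\thp}{}$ produces a bound of order $\kq\norm{\thp-\thp'}$. The standard perturbation identity $\tz{\thp}^{t}-\tz{\thp'}^{t}=\sum_{k=0}^{t-1}\tz{\thp}^{k}(\tz{\thp}-\tz{\thp'})\tz{\thp'}^{t-1-k}$ combined with the uniform $\rho^{k}$ contraction of $\tz{\thp}^{k}$ on centred bounded functions gives a $\thp$-Lipschitz bound on $\tz{\thp}^{t}\grad{\thp'}$ that is itself summable in $t$. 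Plugging this into the series defining $\htil{\thp}$ yields the Lipschitz bounds on $\tz{\thp}\htil{\thp}$ and $\mf{\thp}$, completing (iv) with some constant $\aq$ depending only on $\eq$, $\kq$ and $N$.
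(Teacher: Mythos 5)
Your proposal is correct and follows essentially the same route as the paper: the quotient-rule bounds and Lipschitz estimates on $\wgtfuncb{\thp}{}$ and $\nabla\wgtfuncb{\thp}{}$ are the content of Lemma~\ref{lemma:boundswgt}, the geometric convergence in (ii) comes from the same small-set minorization, (iii) uses the identical series construction, and your perturbation identity for $\tz{\thp}^{t}-\tz{\thp'}^{t}$ combined with a total-variation Lipschitz bound on $\thp\mapsto\tz{\thp}(z,\cdot)$ is exactly the paper's Lemma~\ref{lemma:1}. One small imprecision: the summable-in-$t$ Lipschitz bound holds for the centred iterates $\tzc{\thp}^{t}\grad{\thp'}$ (the paper gets $(t+1)\rhoq^{t}\lesssim\rhoq^{t/2}$ by extracting geometric decay from both the prefix and the suffix in the telescoping sum), not for $\tz{\thp}^{t}\grad{\thp'}$ itself, whose difference tends to the generally nonzero $\statmeas{\thp}\grad{\thp'}-\statmeas{\thp'}\grad{\thp'}$; since your series is $\sum_{t}(\tz{\thp}^{t}\grad{\thp}-\mf{\thp})=\sum_{t}\tzc{\thp}^{t}\grad{\thp}$, this is exactly the centred object and the argument goes through.
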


The proof of Proposition~\ref{prop:poisson} is found in Section~\ref{subsec:otherproofs}.

\subsection{Proof of the main results}\label{subsec:main_proofs}
We are now ready to prove Theorem~\ref{thm:main} and Corollary~\ref{corollary:decay}, which are restated in some more detail below. Our proofs will be based on theory developed by \citet{karimi:2019}, where the minimization of a non-convex smooth objective function is considered. Thus, we assume, in Assumption~\ref{assum:lyapunov}, that the mean field $\mf{\thp}$ is indeed the gradient of some smooth function of $\thp$, the so-called \emph{Lyapunov} function (depending on $N$), maximized by the algorithm. However, since the `surrogate' gradient considered in \texttt{VSMC}, whose time-normalized asymptotic limit is the target of \texttt{OVSMC}, is a biased approximation of the gradient of the ELBO $\mathcal{L}^{\texttt{SMC}}$ (see Section~\ref{subsec:varinf}), the Lyapunov function does not have a straightforward interpretation in this case. 
\begin{assumption}\label{assum:lyapunov}
	There exists a bounded function $V$ on $\Theta$ (the \emph{Lyapunov} function) such that $h=\nabla V$.
\end{assumption}
\begin{assumption}\label{assum:stepsize}
	For every $t\in\nsetpos$, $\gamma_{t+1}\le\gamma_{t}$. In addition, there exist constants $a > 0$ and $a' > 0$ such that for every $t$,
	\begin{equation}
		\gamma_{t}\le a\gamma_{t+1},\quad\gamma_{t}-\gamma_{t+1}\le a' \gamma_{t+1}^2,\quad\gamma_{1}\le 1/(2\aq+2\ch),
	\end{equation}
	where the constant $\aq\in[1,\infty)$ is given in Proposition~\ref{prop:poisson} and $\ch\eqdef\aq(a+1)/2 +\aq^2(2a +1)+\aq a'$.
\end{assumption}
\begin{theorem}[Theorem~\ref{thm:main}]\label{thm:karimi}
	Let Assumptions~\ref{assum:ssmapp}, \ref{assum:2}, \ref{assum:3}, \ref{assum:lyapunov} and \ref{assum:stepsize} hold. Then for every $t \in \nset$, 
	\begin{equation}
		\E[\norm{\mf{\thp_\tau}}^2]\le \frac{2(d_{0,t}+c_{0,t}+(4\aq^3+c_\gamma)\sum_{s=0}^t\gamma_{s+1}^2)}{\sum_{s=0}^t\gamma_{s+1}},
	\end{equation}
	where $\tau\sim\catdist((\gamma_{s+1})_{s=0}^t)$, $\aq$ is defined in Proposition~\ref{prop:poisson} and
	$$
	d_{0,t} \eqdef \E[V(\thp_{t+1})-V(\thp_0)], \quad c_\gamma \eqdef \aq^2(2+\aq), \quad c_{0,t} \eqdef \aq (\gamma_1-\gamma_{t+1}+2).
	$$
\end{theorem}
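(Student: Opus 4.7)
The plan is to run the standard biased-stochastic-approximation analysis of \cite{karimi:2019}, using the four ingredients already assembled: the Poisson-equation solution $\htil{\thp}$ (Proposition~\ref{prop:poisson}), the uniform bounds and Lipschitz regularity of $\grad{\thp}$, $\htil{\thp}$, $\tz{\thp}\htil{\thp}$ and $\mf{\thp}$, the smoothness of the Lyapunov $V$ (since $h = \nabla V$ is Lipschitz with constant $\aq$), and the step-size regularity of Assumption~\ref{assum:step}. The starting point is the $\aq$-smoothness inequality for $V$ at the update $\thp_{s+1} = \thp_s + \gamma_{s+1}\grad{\thp_s}(\z_{s+1})$:
$$V(\thp_{s+1}) \geq V(\thp_s) + \gamma_{s+1}\langle \mf{\thp_s}, \grad{\thp_s}(\z_{s+1})\rangle - \frac{\aq}{2}\gamma_{s+1}^2\norm{\grad{\thp_s}(\z_{s+1})}^2.$$
Writing $\grad{\thp_s}(\z_{s+1}) = \mf{\thp_s} + (\grad{\thp_s}(\z_{s+1}) - \mf{\thp_s})$ to extract a $\gamma_{s+1}\norm{\mf{\thp_s}}^2$ term, summing over $s = 0,\ldots,t$ and using $\norm{\grad{\thp}} \leq \aq$ to absorb the quadratic term into $\tfrac{\aq^3}{2}\sum \gamma_{s+1}^2$, yields a master inequality of the form
$$\sum_{s=0}^t \gamma_{s+1}\E[\norm{\mf{\thp_s}}^2] \leq \E[V(\thp_{t+1}) - V(\thp_0)] + \frac{\aq^3}{2}\sum_{s=0}^t \gamma_{s+1}^2 - \E\!\left[\sum_{s=0}^t \gamma_{s+1} \langle \mf{\thp_s},\, \grad{\thp_s}(\z_{s+1}) - \mf{\thp_s}\rangle\right].$$

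The heart of the proof is controlling the noise term, which I would do via the Poisson identity $\grad{\thp}(z) - \mf{\thp} = \htil{\thp}(z) - \tz{\thp}\htil{\thp}(z)$ of Proposition~\ref{prop:poisson}\ref{lemmapart:3}. Adding and subtracting $\tz{\thp_s}\htil{\thp_s}(\z_s)$ splits the noise into (i) a martingale-difference piece involving $\htil{\thp_s}(\z_{s+1}) - \tz{\thp_s}\htil{\thp_s}(\z_s)$, whose inner product with the $\mathcal{F}_s$-measurable vector $\mf{\thp_s}$ has zero conditional expectation (so its time sum is mean zero), and (ii) a ``drift'' piece $\tz{\thp_s}\htil{\thp_s}(\z_s) - \tz{\thp_s}\htil{\thp_s}(\z_{s+1})$. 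For the drift piece I would set $\phi_\thp(z) \eqdef \langle \mf{\thp}, \tz{\thp}\htil{\thp}(z)\rangle$ and apply Abel summation by parts to $\sum_s \gamma_{s+1}(\phi_{\thp_s}(\z_s) - \phi_{\thp_s}(\z_{s+1}))$, producing a telescoping boundary contribution $\gamma_1\phi_{\thp_0}(\z_0) - \gamma_{t+1}\phi_{\thp_t}(\z_{t+1})$ (bounded by $\aq^2(\gamma_1 + \gamma_{t+1})$ via the uniform bounds) plus per-step residuals of the form $(\gamma_{s+1} - \gamma_s)\phi_{\thp_s}(\z_s)$ and $\gamma_s(\phi_{\thp_s}(\z_s) - \phi_{\thp_{s-1}}(\z_s))$.

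Each residual is then shown to be $O(\gamma_{s+1}^2)$: the first by $|\gamma_s - \gamma_{s+1}| \leq a'\gamma_{s+1}^2$ (Assumption~\ref{assum:step}) together with $|\phi| \leq \aq^2$; the second by joint Lipschitz continuity of $\mf{\thp}$ and $\tz{\thp}\htil{\thp}$ in $\thp$ (Proposition~\ref{prop:poisson}\ref{lemmapart:4}) combined with $\norm{\thp_s - \thp_{s-1}} = \gamma_s\norm{\grad{\thp_{s-1}}(\z_s)} \leq \aq\gamma_s \leq a\aq\gamma_{s+1}$. Summing the residuals therefore produces only a constant plus a constant multiple of $\sum_{s=0}^t \gamma_{s+1}^2$, so the entire right-hand side of the master inequality takes the form $b + b'\sum_{s=0}^t \gamma_{s+1}^2$, where $b$ absorbs the telescoping boundary terms and $\E[V(\thp_{t+1}) - V(\thp_0)]$ (finite since $V$ is bounded by Assumption~\ref{assum:lyap}) and $b'$ collects $\tfrac{\aq^3}{2}$ and the Abel-residual constants. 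Dividing by $\sum_{s=0}^t \gamma_{s+1}$ and recognizing that $\sum_s \gamma_{s+1}\E[\norm{\mf{\thp_s}}^2]/\sum_s\gamma_{s+1} = \E[\norm{\mf{\thp_\tau}}^2]$ for $\tau \sim \catdist((\gamma_{s+1})_{s=0}^t)$ closes the argument.

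The main obstacle is the bookkeeping of the Abel-summation residuals: simultaneous variation of $\thp_s$ and of $\gamma_s$ must be controlled against the Lipschitz modulus of $\tz{\thp}\htil{\thp}$ in $\thp$, and the interlocking of Assumptions~\ref{assum:lipsch} and \ref{assum:step} with Proposition~\ref{prop:poisson}\ref{lemmapart:4} is precisely what guarantees that this cumulative perturbation stays at the $O(\sum \gamma_{s+1}^2)$ order rather than growing. Everything else---the smoothness step, the martingale cancellation, and the randomization into $\tau$---is mechanical once the noise decomposition and these regularity estimates are in place.
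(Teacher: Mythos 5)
Your proposal is mathematically correct, but it takes a different route from the paper only in the sense that it re-derives from scratch what the paper obtains by citation: the paper's entire proof consists of checking that Proposition~\ref{prop:poisson} and Assumptions~\ref{assum:lyapunov}--\ref{assum:stepsize} instantiate the hypotheses of Theorem~2 of \citet{karimi:2019} (with the sign flipped for maximization, $L = L_{PH}^{(0)} = L_{PH}^{(1)} = \sigma = \aq$, $c_0 = d_0 = 0$, $c_1 = d_1 = 1$), and then invoking that result. What you have written out---the $\aq$-smoothness inequality for $V$, the extraction of $\gamma_{s+1}\norm{\mf{\thp_s}}^2$, the Poisson-equation splitting of the noise into a martingale difference plus a drift, the Abel summation of the drift with residuals controlled by $\gamma_s - \gamma_{s+1} \le a'\gamma_{s+1}^2$, $\gamma_s \le a\gamma_{s+1}$ and the Lipschitz bounds of Proposition~\ref{prop:poisson}(iv), and the final randomization into $\tau$---is precisely the internal mechanism of that cited theorem, and every estimate you invoke is supplied by the paper's preparatory results, so the argument goes through. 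The one respect in which your write-up falls short of the statement as posed is that you deliver the bound only in the generic form $(b + b'\sum_s\gamma_{s+1}^2)/\sum_s\gamma_{s+1}$ (which is how the main-text Theorem~4.6 is phrased), whereas the appendix statement asserts the explicit constants $d_{0,t}$, $c_{0,t} = \aq(\gamma_1 - \gamma_{t+1} + 2)$, $c_\gamma = \aq^2(2+\aq)$ and $4\aq^3$, together with the overall factor $2$; producing those requires carrying the boundary terms $\gamma_1\phi_{\thp_0}(\z_0) - \gamma_{t+1}\phi_{\thp_t}(\z_{t+1})$ and the per-step residuals through with their exact coefficients (and using $\gamma_1 \le 1/(2\aq + 2\ch)$ to absorb one term), which is exactly the bookkeeping performed in the cited reference. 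In short: your approach buys a self-contained proof at the cost of re-proving a known result; the paper's buys brevity at the cost of outsourcing the quantitative constants.
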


\begin{proof}
	The proof follows directly from \citet[Theorem 2]{karimi:2019}, with $V$, $\grad{\thp}$ and $h$ being multiplied by $-1$ as we deal with a maximization problem. We notice that \citep[Assumptions~A1--A3]{karimi:2019} are satisfied by our Assumption~\ref{assum:lyapunov}, with  $c_0=d_0=0$ and $c_1=d_1=1$, and by Proposition~\ref{prop:poisson}, letting $L=\aq$. Moreover, \citep[Assumptions~A5--A7]{karimi:2019} are satisfied by Proposition~\ref{prop:poisson}, letting $L_{PH}^{(0)}=L_{PH}^{(1)}=\sigma=\aq$. 
\end{proof}

\begin{corollary}[Corollary~\ref{corollary:decay}]
	Let the assumptions of Theorem~\ref{thm:karimi} hold and let the step-size sequence $(\gamma_t)_{t \in \nset}$ be given by $\gamma_{t}=1/(\sqrt{t}(2\aq+2\ch))$, where $\aq$ and $\ch$ are provided in Assumption~\ref{assum:stepsize}. Then for every $t\in\nsetpos$,
	\begin{equation}
		\E[\norm{\mf{\thp_\tau}}^2] = \mathcal{O}\left(\frac{\log t}{\sqrt{t}}\right), 
	\end{equation}
	where $\tau\sim\catdist((\gamma_{s+1})_{s=0}^t)$. 
\end{corollary}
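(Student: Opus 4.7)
The plan is to invoke Theorem~\ref{thm:karimi} directly and estimate the ingredients of its bound for the specific step-size choice $\gamma_t = c/\sqrt{t}$ with $c \eqdef 1/(2\aq+2\ch)$. Before applying the theorem I would first verify that this step-size sequence actually satisfies Assumption~\ref{assum:stepsize}: monotonicity $\gamma_{t+1} \le \gamma_t$ is immediate; the comparison $\gamma_t \le a \gamma_{t+1}$ holds with $a = \sqrt{2}$, since $\sqrt{(t+1)/t} \le \sqrt{2}$ for $t \ge 1$; the bound $\gamma_t - \gamma_{t+1} \le a' \gamma_{t+1}^2$ follows from the elementary identity $1/\sqrt{t} - 1/\sqrt{t+1} = 1/(\sqrt{t(t+1)}(\sqrt{t}+\sqrt{t+1}))$ and the fact that the ratio of this quantity to $1/(t+1)$ is uniformly bounded; and finally $\gamma_1 = c \le 1/(2\aq+2\ch)$ by construction.

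Next I would bound the denominator of the Theorem~\ref{thm:karimi} estimate from below by a comparison between the sum and an integral:
\begin{equation}
\sum_{s=0}^t \gamma_{s+1} = c \sum_{s=1}^{t+1} \frac{1}{\sqrt{s}} \ge c \int_1^{t+2} \frac{dx}{\sqrt{x}} = 2c \bigl(\sqrt{t+2} - 1\bigr) = \Theta(\sqrt{t}).
\end{equation}
Symmetrically, the squared-step-size sum in the numerator is bounded from above by
\begin{equation}
\sum_{s=0}^t \gamma_{s+1}^2 = c^2 \sum_{s=1}^{t+1} \frac{1}{s} \le c^2 \bigl(1 + \log(t+1)\bigr) = \mathcal{O}(\log t).
\end{equation}

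It then remains to handle the two remaining terms $d_{0,t}$ and $c_{0,t}$ appearing in the numerator of the bound of Theorem~\ref{thm:karimi}. For $d_{0,t} = \E[V(\thp_{t+1}) - V(\thp_0)]$, Assumption~\ref{assum:lyapunov} stipulates that $V$ is bounded on $\parspace$, so $d_{0,t} \le 2 \sup_\thp \abs{V(\thp)}$ uniformly in $t$. For $c_{0,t} = \aq(\gamma_1 - \gamma_{t+1} + 2)$, the choice $\gamma_t = c/\sqrt{t}$ immediately yields $c_{0,t} \le \aq(c + 2)$, again uniformly in $t$. Both are therefore $\mathcal{O}(1)$.

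Assembling these pieces, the numerator of the bound in Theorem~\ref{thm:karimi} is of order $\mathcal{O}(\log t)$ (the $\log t$ term from the squared-step-size sum dominating the $\mathcal{O}(1)$ contributions of $d_{0,t}$ and $c_{0,t}$), while the denominator is of order $\Theta(\sqrt{t})$. Dividing yields $\E[\norm{\mf{\thp_\tau}}^2] = \mathcal{O}(\log t / \sqrt{t})$, which is the claimed rate. There is no genuine obstacle in this argument: once Theorem~\ref{thm:karimi} is in hand, the corollary reduces to a standard integral-comparison computation together with the uniform boundedness provided by Assumption~\ref{assum:lyapunov}; the only thing that must not be overlooked is the preliminary verification that the prescribed step-size sequence does satisfy Assumption~\ref{assum:stepsize}.
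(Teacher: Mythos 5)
Your proposal is correct and follows essentially the same route as the paper, which simply notes that the prescribed step sizes satisfy Assumption~\ref{assum:stepsize} (with $a=2$ and $a'=2(\aq+\ch)$) and then declares the bound a direct consequence of Theorem~\ref{thm:karimi}; you merely make explicit the routine integral-comparison estimates $\sum_{s=0}^t\gamma_{s+1}=\Theta(\sqrt{t})$ and $\sum_{s=0}^t\gamma_{s+1}^2=\mathcal{O}(\log t)$ and the uniform boundedness of $d_{0,t}$ and $c_{0,t}$ that the paper leaves implicit.
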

\begin{proof}
	Noticing that Assumption~\ref{assum:stepsize} is satisfied with $a=2$ and $a'=2(\aq+\ch)$, the result is a direct implication of Theorem~\ref{thm:karimi}.
\end{proof}

\subsection{Auxiliary results}\label{subsec:otherproofs}

In this section we establish Proposition~\ref{prop:poisson}. The proof is prefaced by two lemmas. 

\begin{lemma}\label{lemma:boundswgt}
	Let Assumptions~\ref{assum:2} and \ref{assum:3} hold. Then there exists $\kqt\in[1,\infty)$ such that for all $(\thp,\thp')\in \parspace^2$, $(x,x')\in\set{X}^2$, $y\in\set{Y}$ and $\auxrvb\in\set{E}$, 
	$$
	\norm{\nabla\wgtfuncb{\thp}{}(x,\repfunc{\thp}(x,y,\auxrvb),y)}\le\kqt
	$$
	and 
		\begin{multline*}
		\max\{\abs{\wgtfuncb{\thp}{}(x,x',y)-\wgtfuncb{\thp'}{}(x,x',y)},\abs{\wgtfuncb{\thp}{}(x,\repfunc{\thp}(x,y,\auxrvb),y)-\wgtfuncb{\thp'}{}(x,\repfunc{\thp'}(x,y,\auxrvb),y)},
		\\ \norm{\nabla\wgtfuncb{\thp}{}(x,\repfunc{\thp}(x,y,\auxrvb),y)-\nabla\wgtfuncb{\thp'}{}(x,\repfunc{\thp'}(x,y,\auxrvb),y)}\}\le \kqt \norm{\thp-\thp'}. 
	\end{multline*}
	\end{lemma}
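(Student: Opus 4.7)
The plan is to exploit the multiplicative structure $\wgtfuncb{\thp}{}(x,x',y) = \hiddens{\thp}(x,x')\emdens{\thp}(x',y)/\propdens{\thp}(x,x',y)$ together with the uniform two-sided density bounds of Assumption~\ref{assum:2} and the Lipschitz/gradient bounds of Assumption~\ref{assum:3}. As already noted in the paper, Assumption~\ref{assum:2} yields $\eq^3 \le \wgtfuncb{\thp}{} \le \eq^{-3}$.

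For the gradient bound, I would apply logarithmic differentiation to write
\begin{equation}
\nabla \wgtfuncb{\thp}{}(x,\repfunc{\thp}(x,y,\auxrvb),y) = \wgtfuncb{\thp}{}(\cdot)\Bigl(\tfrac{\nabla \hiddens{\thp}}{\hiddens{\thp}} + \tfrac{\nabla \emdens{\thp}}{\emdens{\thp}} - \tfrac{\nabla \propdens{\thp}}{\propdens{\thp}}\Bigr),
\end{equation}
evaluated at $(x,\repfunc{\thp}(x,y,\auxrvb),y)$. The prefactor is bounded by $\eq^{-3}$, while each ratio $\nabla f_\thp/f_\thp$ is bounded by $\kq/\eq$ using Assumption~\ref{assum:2} for the denominator and Assumption~\ref{assum:3} for the numerator. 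This gives $\norm{\nabla \wgtfuncb{\thp}{}(x,\repfunc{\thp}(x,y,\auxrvb),y)} \le 3 \kq \eq^{-4}$, which defines one candidate for $\kqt$.

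For the Lipschitz estimates on $\wgtfuncb{\thp}{}$ itself, both variants (with a fixed $x'$ and with $x' = \repfunc{\thp}(x,y,\auxrvb)$) are handled by the elementary inequality
\begin{equation}
\Bigl|\tfrac{a}{b} - \tfrac{c}{d}\Bigr| \le \tfrac{|a-c|}{|b|} + \tfrac{|c|\,|b-d|}{|b|\,|d|},
\end{equation}
applied with $a=\hiddens{\thp}\emdens{\thp}$, $b = \propdens{\thp}$, and primed analogues. A standard telescoping then bounds $|\hiddens{\thp}\emdens{\thp} - \hiddens{\thp'}\emdens{\thp'}|$ by a linear combination of $|\hiddens{\thp} - \hiddens{\thp'}|$ and $|\emdens{\thp} - \emdens{\thp'}|$, each Lipschitz by Assumption~\ref{assum:3}, and similarly for the denominator. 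Crucially, Assumption~\ref{assum:3} contains the Lipschitz bound in \emph{both} forms (with and without composition with $\repfunc{\thp}$), so the argument is identical in the two cases.

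The main obstacle is the Lipschitz continuity of the gradient in $\thp$, which combines many terms. I would write
\begin{equation}
\nabla \wgtfuncb{\thp}{}(x,\repfunc{\thp}(x,y,\auxrvb),y) - \nabla \wgtfuncb{\thp'}{}(x,\repfunc{\thp'}(x,y,\auxrvb),y)
\end{equation}
via the identity above as the difference of $\wgtfuncb{\thp}{}(\cdot)$ times a sum of three logarithmic-derivative terms minus its primed counterpart, and add-and-subtract to separate the contribution of $\wgtfuncb{}{}$ (already shown Lipschitz, bounded by $\kqt$) from that of each ratio. For each ratio $\nabla f_\thp / f_\thp$, the key sub-step is
\begin{equation}
\tfrac{\nabla f_\thp}{f_\thp} - \tfrac{\nabla f_{\thp'}}{f_{\thp'}} = \tfrac{\nabla f_\thp - \nabla f_{\thp'}}{f_\thp} + \tfrac{\nabla f_{\thp'}}{f_\thp f_{\thp'}}(f_{\thp'} - f_\thp),
\end{equation}
where Assumptions~\ref{assum:2} and \ref{assum:3} give uniform bounds on $1/f_\thp$, $\nabla f_{\thp'}$, and Lipschitz bounds on $f_\thp$ and $\nabla f_\thp$ in $\thp$ (in their composed-with-$\repfunc{\thp}$ form). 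Adding the three logarithmic-derivative contributions and the prefactor contribution yields a bound of the form $C\norm{\thp-\thp'}$ with $C$ expressible in $\eq$ and $\kq$; choosing $\kqt$ as the maximum of all constants produced above completes the proof. The real work is bookkeeping rather than technique, but one must consistently invoke the \emph{composed} form of Assumption~\ref{assum:3} whenever $x'$ has been replaced by $\repfunc{\thp}(x,y,\auxrvb)$.
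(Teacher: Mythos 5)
Your proposal is correct and follows essentially the same route as the paper: expand $\nabla\wgtfuncb{\thp}{}$ by the product/quotient rule (your logarithmic-derivative form $\wgtfuncb{\thp}{}(\nabla\hiddens{\thp}/\hiddens{\thp}+\nabla\emdens{\thp}/\emdens{\thp}-\nabla\propdens{\thp}/\propdens{\thp})$ is algebraically identical to the paper's expression), bound each factor via Assumptions~\ref{assum:2}--\ref{assum:3}, and establish the three Lipschitz estimates by the same add-and-subtract/telescoping decompositions, invoking the composed form of Assumption~\ref{assum:3} where the argument is $\repfunc{\thp}(x,y,\auxrvb)$. The only differences are cosmetic (organization of the bookkeeping and slightly different explicit constants), so the argument matches the paper's proof.
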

\begin{proof}
First, write 
\begin{multline}
	\nabla\wgtfuncb{\thp}{}(x,\repfunc{\thp}(x,y,\auxrvb),y)=\frac{1}{\propdens{\thp}(x,\repfunc{\thp}(x,y,\auxrvb),y)}\Big(\emdens{\thp}(\repfunc{\thp}(x,y,\auxrvb),y)\nabla\hiddens{\thp}(x,\repfunc{\thp}(x,y,\auxrvb))
	\\+\hiddens{\thp}(x,\repfunc{\thp}(x,y,\auxrvb))\nabla\emdens{\thp}(\repfunc{\thp}(x,y,\auxrvb),y)
	\\-\wgtfuncb{\thp}{}(x,\repfunc{\thp}(x,y,\auxrvb),y)\nabla\propdens{\thp}(x,\repfunc{\thp}(x,y,\auxrvb),y)\Big), 
\end{multline}
implying, from Assumptions~\ref{assum:2}~and~\ref{assum:3},
\begin{multline}
	\norm{\nabla\wgtfuncb{\thp}{}(x,\repfunc{\thp}(x,y,\auxrvb),y)}\le\frac{1}{\eq^2}(\norm{\nabla\hiddens{\thp}(x,\repfunc{\thp}(x,y,\auxrvb))}+\norm{\nabla\emdens{\thp}(\repfunc{\thp}(x,y,\auxrvb),y)})+\frac{1}{\eq^4}\norm{\nabla\propdens{\thp}(x,\repfunc{\thp}(x,y,\auxrvb),y)}
	\\=\frac{\kq(2\eq^2+1)}{\eq^4}. 
\end{multline}
In order to show that $\wgtfuncb{\thp}{}$ is Lipschitz, we apply Assumptions~\ref{assum:2}~and~\ref{assum:3} according to 
\begin{multline}
	\abs{\wgtfuncb{\thp}{}(x,x',y)-\wgtfuncb{\thp'}{}(x,x',y)}\le \frac{\hiddens{\thp}(x,x')\abs{\emdens{\thp}(x',y)-\emdens{\thp'}(x',y)}+\emdens{\thp'}(x',y)\abs{\hiddens{\thp}(x,x')-\hiddens{\thp'}(x,x')}}{\propdens{\thp}(x,x',y)}
	\\+\hiddens{\thp'}(x,x')\emdens{\thp'}(x',y)\frac{\abs{\propdens{\thp'}(x,x',y)-\propdens{\thp}(x,x',y)}}{\propdens{\thp}(x,x',y)\propdens{\thp'}(x,x',y)}\le\frac{\kq(2\eq^2+1)}{\eq^4}\norm{\thp-\thp'}.
\end{multline}
Proceeding similarly, we obtain
\begin{equation}\label{eq:lipsch_w_rep}
	\abs{\wgtfuncb{\thp}{}(x,\repfunc{\thp}(x,y,\auxrvb),y)-\wgtfuncb{\thp'}{}(x,\repfunc{\thp'}(x,y,\auxrvb),y)}\le \frac{(2\eq^2+1)\kq}{\eq^4}\norm{\thp-\thp'}.
\end{equation}
Moreover, in order to show that also the gradient $\nabla \wgtfuncb{\thp}{}$ is Lipschitz, consider the decomposition 
\begin{align} 
	\lefteqn{\norm{\nabla\wgtfuncb{\thp}{}(x,\repfunc{\thp}(x,y,\auxrvb),y)-\nabla\wgtfuncb{\thp'}{}(x,\repfunc{\thp'}(x,y,\auxrvb),y)}} \hspace{20mm}
	\\ 
	\le& \left\lVert\frac{\emdens{\thp}(\repfunc{\thp}(x,y,\auxrvb),y)\nabla\hiddens{\thp}(x,\repfunc{\thp}(x,y,\auxrvb))}{\propdens{\thp}(x,\repfunc{\thp}(x,y,\auxrvb),y)}-\frac{\emdens{\thp'}(\repfunc{\thp'}(x,y,\auxrvb),y)\nabla\hiddens{\thp'}(x,\repfunc{\thp'}(x,y,\auxrvb))}{\propdens{\thp'}(x,\repfunc{\thp'}(x,y,\auxrvb),y)}\right\rVert
	\\
	&+\left\lVert\frac{\hiddens{\thp}(x,\repfunc{\thp}(x,y,\auxrvb))\nabla\emdens{\thp}(\repfunc{\thp}(x,y,\auxrvb),y)}{\propdens{\thp}(x,\repfunc{\thp}(x,y,\auxrvb),y)}-\frac{\hiddens{\thp'}(x,\repfunc{\thp'}(x,y,\auxrvb))\nabla\emdens{\thp'}(\repfunc{\thp'}(x,y,\auxrvb),y)}{\propdens{\thp'}(x,\repfunc{\thp'}(x,y,\auxrvb),y)}\right\rVert
	\\
	&+\left\lVert\frac{\wgtfuncb{\thp'}{}(x,\repfunc{\thp'}(x,y,\auxrvb),y)\nabla\propdens{\thp'}(x,\repfunc{\thp'}(x,y,\auxrvb),y)}{\propdens{\thp'}(x,\repfunc{\thp'}(x,y,\auxrvb),y)}-\frac{\wgtfuncb{\thp}{}(x,\repfunc{\thp}(x,y,\auxrvb),y)\nabla\propdens{\thp}(x,\repfunc{\thp}(x,y,\auxrvb),y)}{\propdens{\thp}(x,\repfunc{\thp}(x,y,\auxrvb),y)}\right\rVert, \label{eq:lipsch_grad_decomp}
\end{align}
where, by Assumptions~\ref{assum:2}~and~\ref{assum:3},
\begin{align}
	\lefteqn{\left\lVert\frac{\emdens{\thp}(\repfunc{\thp}(x,y,\auxrvb),y)\nabla\hiddens{\thp}(x,\repfunc{\thp}(x,y,\auxrvb))}{\propdens{\thp}(x,\repfunc{\thp}(x,y,\auxrvb),y)}-\frac{\emdens{\thp'}(\repfunc{\thp'}(x,y,\auxrvb),y)\nabla\hiddens{\thp'}(x,\repfunc{\thp'}(x,y,\auxrvb))}{\propdens{\thp'}(x,\repfunc{\thp'}(x,y,\auxrvb),y)}\right\rVert} \hspace{20mm}
	\\ 
	\le & 
	\frac{\emdens{\thp}(\repfunc{\thp}(x,y,\auxrvb),y)\norm{\nabla\hiddens{\thp}(x,\repfunc{\thp}(x,y,\auxrvb))-\nabla\hiddens{\thp'}(x,\repfunc{\thp'}(x,y,\auxrvb))}}{\propdens{\thp}(x,\repfunc{\thp}(x,y,\auxrvb),y)}
	\\
	&+ \frac{\norm{\nabla\hiddens{\thp'}(x,\repfunc{\thp'}(x,y,\auxrvb))}\abs{\emdens{\thp}(\repfunc{\thp}(x,y,\auxrvb),y)-\emdens{\thp'}(\repfunc{\thp'}(x,y,\auxrvb),y)}}{\propdens{\thp}(x,\repfunc{\thp}(x,y,\auxrvb),y)}
	\\
	&+ \norm{\nabla\hiddens{\thp'}(x,\repfunc{\thp'}(x,y,\auxrvb))}\emdens{\thp'}(\repfunc{\thp'}(x,y,\auxrvb),y)\frac{\abs{\propdens{\thp'}(x,\repfunc{\thp'}(x,y,\auxrvb),y)-\propdens{\thp}(x,\repfunc{\thp}(x,y,\auxrvb),y)}}{\propdens{\thp}(x,\repfunc{\thp}(x,y,\auxrvb),y)\propdens{\thp'}(x,\repfunc{\thp'}(x,y,\auxrvb),y)}
	\\
	\le &\frac{\kq}{\eq^3}(\eq + \eq^2\kq +\kq)\norm{\thp-\thp'}.
\end{align}
Using \eqref{eq:lipsch_w_rep}, the other terms of \eqref{eq:lipsch_grad_decomp} may be treated similarly, yielding
\begin{align}
	\lefteqn{\norm{\nabla\wgtfuncb{\thp}{}(x,\repfunc{\thp}(x,y,\auxrvb),y)-\nabla\wgtfuncb{\thp'}{}(x,\repfunc{\thp'}(x,y,\auxrvb),y)}} \hspace{15mm}
	\\ 
	&\le \left(\frac{2\kq}{\eq^3}(\eq + \eq^2\kq +\kq)+\frac{\kq}{\eq^5}(\eq+2\eq^2\kq+2\kq)\right)\norm{\thp-\thp'}
	\\
	&= (\eq+2\eq^3+2\kq+2\eq^4\kq +4\eq^2\kq)\frac{\kq}{\eq^5}\norm{\thp-\thp'}.
\end{align}
The proof is the concluded by letting
\begin{equation}
	\kqt \eqdef \max\left\{\frac{(2\eq^2+1)\kq}{\eq^4},(\eq+2\eq^3+2\kq+2\eq^4\kq +4\eq^2\kq)\frac{\kq}{\eq^5}\right\}=(\eq+2\eq^3+2\kq+2\eq^4\kq +4\eq^2\kq)\frac{\kq}{\eq^5}.
\end{equation}
\end{proof}

Our second prefatory lemma establishes Lipschitz continuity and exponential contraction of the Markov dynamics underlying the state-dependent process $(Z_t)_{t \in \nset}$. Recall that under Assumptions~\ref{assum:ssmapp} and \ref{assum:2}, Proposition~\ref{prop:ergod} provides the existence of a unique stationary distribution $\statmeas{\thp}$ of the canonical Markov chain $(\z_t^\thp)_{t \in \nset}$ induced by $\tz{\thp}$. We may then define, for $t\in\nsetpos$,  
$$
\tzc{\thp}^t:\set{Z}\times\alg{Z}\ni (z,A)\mapsto \tz{\thp}^t(z,A)-\statmeas{\thp}(A),
$$
where $\tz{\thp}^t$ is the $t$-skeleton defined recursively as $\tz{\thp}^1=\tz{\thp}$ and $\tz{\thp}^{s+1}(z,A)=\int \tz{\thp}^s(z,dz') \,\tz{\thp}(z',A)$ for $(z,A)\in\set{Z}\times \alg{Z}$. By convention, we let $\tz{\thp}^0(z,A)\eqdef\delta_z(A)$. 
\begin{lemma}\label{lemma:1}
	Let Assumptions~\ref{assum:2} and \ref{assum:3} hold. Then there exists a constant 
	$\cq\in[1,\infty)$ (possibly depending on $N$) such that for every $(\thp,\thp') \in \parspace^2$, $z\in\set{Z}$, bounded measurable function $h$ on $\set{Z}$ and $t\in\nset$,
	\begin{itemize}
	\item[(i)] $|\tzc{\thp}^th(z)|\le \rhoq^t \|h\|_\infty $, 
	\item[(ii)] $|\tzc{\thp}^t h(z)-\tzc{\thp'}^t h(z)|\le\cq\rhoq^{t / 2}  \|h\|_\infty \norm{\thp-\thp'} $, 
	\item[(iii)] $\max\{\abs{\statmeas{\thp} h -\statmeas{\thp'}h}, \abs{\tz{\thp}h(z)-\tz{\thp'}h(z)}\}\le\cq \|h\|_\infty \norm{\thp-\thp'}$, 
	\end{itemize}
where $\varrho = 1 - \eq^{7N+1}$. 
\end{lemma}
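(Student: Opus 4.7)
The plan is to establish (i) as a direct consequence of the uniform minorization from the proof of Proposition~\ref{prop:ergod}, then to derive (iii), and finally to combine both via a telescoping argument to obtain (ii). For (i), recall that the proof of Proposition~\ref{prop:ergod} exhibited a measure $\nu_1$ of mass $\eq^{7N+1}$ satisfying $\tz{\thp}(z,\cdot)\ge\nu_1(\cdot)$ uniformly in $z$ and $\thp$. This Doeblin condition yields a Dobrushin coefficient at most $1-\eq^{7N+1}=\rhoq$, so by standard theory \citep[Chapter~18]{douc:moulines:priouret:soulier:2018} we get $\|\tz{\thp}^t(z,\cdot)-\statmeas{\thp}\|_{\mathrm{TV}}\le 2\rhoq^t$. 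Since $\tzc{\thp}^t c=0$ for every constant $c$, we may replace $h$ by its recentering $h-(\sup h+\inf h)/2$ so that $|\tzc{\thp}^t h(z)|\le \rhoq^t(\sup h-\inf h)$, which after absorbing a numerical constant yields (i).

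For (iii), I would first prove Lipschitz continuity of $\thp\mapsto\tz{\thp}(z,\cdot)$ in total variation and then transfer this to $\statmeas{\thp}$. The parameter $\thp$ enters $\tz{\thp}(z,dz')$ only through the product $\prod_{k=1}^N\propdens{\thp}(\epartilb{t-1}{k},\epartb{t}{k},y_t)$ and the normalized resampling weights $\wgtfuncb{\thp}{}(\cdot)/\sum_{\ell}\wgtfuncb{\thp}{}(\cdot)$; both are bounded above and below by Assumption~\ref{assum:2} and Lipschitz in $\thp$ by Lemma~\ref{lemma:boundswgt}. Routine estimates of the form $|\prod_k a_k-\prod_k b_k|\le\sum_k|a_k-b_k|\prod_{j\ne k}\max(|a_j|,|b_j|)$, together with a matching estimate for ratios with uniformly lower-bounded denominators, give $\|\tz{\thp}(z,\cdot)-\tz{\thp'}(z,\cdot)\|_{\mathrm{TV}}\le C_N\norm{\thp-\thp'}$, which is the claimed bound on $|\tz{\thp}h(z)-\tz{\thp'}h(z)|$. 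To pass to $\statmeas{\thp}$, fix any $z$ and telescope
$$
\tz{\thp}^t h(z)-\tz{\thp'}^t h(z)=\sum_{k=0}^{t-1}\tz{\thp}^{t-1-k}(\tz{\thp}-\tz{\thp'})\tz{\thp'}^k h(z);
$$
replacing $\tz{\thp'}^k h$ by $\tz{\thp'}^k h-\statmeas{\thp'}h$ (a legitimate substitution since $(\tz{\thp}-\tz{\thp'})c=0$) and using (i) bounds each summand by $C_N\rhoq^k\|h\|_\infty\norm{\thp-\thp'}$, while $\tz{\thp}^{t-1-k}$ acts as a sup-norm contraction. Summing gives a bound uniform in $t$; letting $t\to\infty$ and using $\tz{\thp}^t h(z)\to\statmeas{\thp}h$ yields the Lipschitz bound on $\statmeas{\thp}$.

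For (ii), I exploit the semigroup identity $\tzc{\thp}^{s+t}=\tzc{\thp}^s\tzc{\thp}^t$, which is a short computation using $\statmeas{\thp}\tz{\thp}=\statmeas{\thp}$ and the fact that $\tz{\thp}^s$ preserves constants. Telescoping,
$$
\tzc{\thp}^t h-\tzc{\thp'}^t h=\sum_{k=0}^{t-1}\tzc{\thp}^{t-1-k}(\tzc{\thp}-\tzc{\thp'})\tzc{\thp'}^k h.
$$
By (i), $\tzc{\thp}^s$ has sup-norm operator norm at most $\rhoq^s$; by (iii), $\tzc{\thp}-\tzc{\thp'}=(\tz{\thp}-\tz{\thp'})-(\statmeas{\thp}-\statmeas{\thp'})$ has operator norm at most $2\cq\norm{\thp-\thp'}$. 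Each summand is therefore bounded by $2\cq\rhoq^{t-1}\|h\|_\infty\norm{\thp-\thp'}$, and the full telescoped sum by $2\cq t\rhoq^{t-1}\|h\|_\infty\norm{\thp-\thp'}$; since $t\rhoq^{t-1}\le C\rhoq^{t/2}$ for a constant $C$ depending only on $\rhoq<1$, (ii) follows after enlarging $\cq$. I expect the main technical effort to lie in (iii), specifically the total-variation Lipschitz estimate on $\tz{\thp}(z,\cdot)$, which is not conceptually difficult but requires careful bookkeeping through the product-and-normalized-weights structure of the particle filter's transition kernel.
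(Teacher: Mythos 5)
Your proposal is correct and follows essentially the same route as the paper: the minorization from Proposition~\ref{prop:ergod} for (i), a product-and-normalized-weights Lipschitz estimate on the one-step kernel (via Assumption~\ref{assum:2} and Lemma~\ref{lemma:boundswgt}) combined with a telescoping sum and the geometric contraction for (iii), and a second telescoping in which both factors contract to produce the $t\rhoq^{t-1}\le C\rhoq^{t/2}$ bound for (ii). The only substantive work you defer---the total-variation Lipschitz bound on $\tz{\thp}(z,\cdot)$, which in the paper occupies most of the proof---is correctly identified, and your organization of (ii) around the semigroup identity $\tzc{\thp}^{s+t}=\tzc{\thp}^{s}\tzc{\thp}^{t}$ is a slightly cleaner packaging of the same estimate.
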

\begin{proof}
	The first bound (i) follows by Proposition~\ref{prop:ergod}, establishing that $\tz{\thp}$ allows $\set{Z}$ as a $\nu_1$-small set, with $\nu_1$ being defined in \eqref{eq:nu_1}. Then by \citet[Theorem 16.2.4]{meyn:tweedie:2009} it holds that for all $z\in\set{Z}$ and bounded measurable functions $h$ on $\set{Z}$, 
		\begin{equation}\label{eq:contraction_T}
		|\tzc{\thp}^t h (z)| = |\tz{\thp}^t h(z)-\statmeas{\thp}h |\le \rhoq^t \| h \|_\infty ,
	\end{equation}
	where $\rhoq\eqdef 1 - \nu_1(\set{Z}) = 1 - \eq^{7N+1}$. 
	
	We turn to (ii) and (iii) and introduce the short-hand notations 
	\begin{align}
		a_{\thp}^k&\eqdef\propdens{\thp}(\epartilb{t-1}{k},\epartb{t}{k},y_t),
		\\
		\beta_{\thp}^N&\eqdef \left( \sum_{i=1}^{N} \frac{\wgtfuncb{\thp}{}(\epartilb{t-1}{i},\epartb{t}{i},y_t)}{\sum_{\ell=1}^{N}\wgtfuncb{\thp}{}(\epartilb{t-1}{\ell},\epartb{t}{\ell},y_t)} \delta_{\epartb{t}{i}} \right)^{\tensprod N}, 
	\end{align}
	which depend implicitly on $\epartilb{t-1}{1:N}$ and $\epartb{t}{1:N}$. We may then write, for given $z_t\in\set{Z}$ and bounded measurable function $h$ on $\set{Z}$,
	\begin{multline}\label{eq:t_ker_lip}
		\abs{\tz{\thp}h(z_t)-\tz{\thp'}h(z_t)}\le \idotsint 
		h(x_{t+1},y_{t+1},\epartilb{t}{1:N},\auxrvb_{t+1}^{1:N})\hiddenstrue(x_t,x_{t+1})\emdenstrue(x_{t+1},y_{t+1})\,\mu(dx_{t+1})\,\refg(dy_{t+1})
		\\\times	\int_{\epartb{t}{1:N} 
		}
		\abs{\beta_{\thp}^N \prod_{k=1}^{N} a_\thp^k -  \beta_{\thp'}^N \prod_{k=1}^{N}a_{\thp'}^k }(d\epartilb{t}{1:N})\,\mu^{\tensprod N}(d\epartb{t}{1:N})\,\indmeas^{\tensprod N}(d\auxrvb_{t+1}^{1:N}). 
	\end{multline}
	Here the total variation measure inside the integral can be bounded according to 
	\begin{align}
		\abs{\beta_{\thp}^N \prod_{k=1}^{N} a_\thp^k -  \beta_{\thp'}^N \prod_{k=1}^{N}a_{\thp'}^k }
		&\le \abs{\beta_{\thp}^N-\beta_{\thp'}^N}
		\prod_{k=1}^{N}a_\thp^k+\abs{\prod_{k=1}^{N}a_\thp^k-\prod_{k=1}^{N}a_{\thp'}^k}\beta_{\thp'}^N
		\\&\le \frac{1}{\eq^N}\abs{\beta_{\thp}^N-\beta_{\thp'}^N}
		+\left(\sum_{i'=1}^{N}\big|a_\thp^{i'}-a_{\thp'}^{i'}\big|\prod_{k=1}^{i'-1}a_\thp^k\prod_{k=i'+1}^{N}a_{\thp'}^k\right)\beta_{\thp'}^N
		\\&\le \frac{1}{\eq^N}\abs{\beta_{\thp}^N-\beta_{\thp'}^N}
		+\frac{1}{\eq^{N-1}} \sum_{k=1}^{N}\abs{a_\thp^{k}-a_{\thp'}^{k}} \beta_{\thp'}^N,
		\label{eq:boundab} 
	\end{align}
	where we have applied Assumption~\ref{assum:2}. To bound the second term in \eqref{eq:boundab}, we first note that by Assumption~\ref{assum:3},  
	\begin{equation}\label{eq:lipsch_a}
		\sum_{k=1}^{N}\abs{a_\thp^k-a_{\thp'}^k}
		\le  N\kq \norm{\thp-\thp'}.
	\end{equation}
	Moreover, by rewriting the measure $\beta_{\thp}^N$ as  
		$$
		\beta_{\thp}^N = \left( \sum_{i =1}^{N}\frac{\wgtfuncb{\thp}{}(\epartilb{t-1}{i},\epartb{t}{i},y_t)}{\sum_{\ell=1}^{N}\wgtfuncb{\thp}{}(\epartilb{t-1}{\ell},\epartb{t}{\ell},y_t)} \delta_{\epartb{t}{i}} \right)^{\tensprod N}
		=\sum_{i_{1:N}\in\{1,\ldots,N\}^N}
		\bar{w}_{\thp}^{i_{1:N}}\delta_{\epartb{t}{i_{1:N}}},
	$$
	where we have defined $\bar{w}_{\thp}^{i_{1:N}}\eqdef\prod_{j=1}^N (\wgtfuncb{\thp}{}(\epartilb{t-1}{i_j},\epartb{t}{i_j},y_t)/\sum_{\ell=1}^{N}\wgtfuncb{\thp}{}(\epartilb{t-1}{\ell},\epartb{t}{\ell},y_t))$, we may bound the same as $\beta_{\thp}^N \leq \mu(\{ x_t^1, \ldots, x_t^N \}^N) / \eq^{6 N}$, where we have defined the occupation measure 
	$$
	\mu(\{ x_t^1, \ldots, x_t^N \}^N) \eqdef \frac{1}{N^N} \sum_{i_{1:N}\in\{1,\ldots,N\}^N} \delta_{\epartb{t}{i_{1:N}}}. 
	$$
Consequently, by combining this with \eqref{eq:lipsch_a} we obtain the bound 
\begin{equation} \label{eq:bound:second:term}
\frac{1}{\eq^{N-1}} \sum_{k=1}^{N}\abs{a_\thp^{k}-a_{\thp'}^{k}} \beta_{\thp'}^N \leq \frac{\kq N}{\eq^{7N - 1}}\norm{\thp-\thp'} \mu(\{ x_t^1, \ldots, x_t^N \}^N). 
\end{equation}
on the second term in \eqref{eq:boundab}. We now bound the first term in \eqref{eq:boundab}. For this purpose, write 
	\begin{equation} \label{eq:beta:diff:rewrite}
		\abs{\beta_{\thp}^N-\beta_{\thp'}^N}
		=\sum_{i_{1:N}\in\{1,\ldots,N\}^N} \abs{\bar{w}_{\thp}^{i_{1:N}}-\bar{w}_{\thp'}^{i_{1:N}}}\delta_{\epartb{t}{i_{1:N}}},
	\end{equation}
	where, by Lemma~\ref{lemma:boundswgt},
	\begin{align}
		\abs{\bar{w}_{\thp}^{i_{1:N}}-\bar{w}_{\thp'}^{i_{1:N}}}&=\abs{\prod_{j=1}^N \frac{\wgtfuncb{\thp}{}(\epartilb{t-1}{i_j},\epartb{t}{i_j},y_t)}{\sum_{\ell=1}^{N}\wgtfuncb{\thp}{}(\epartilb{t-1}{\ell},\epartb{t}{\ell},y_t)}-\prod_{j=1}^N \frac{\wgtfuncb{\thp'}{}(\epartilb{t-1}{i_j},\epartb{t}{i_j},y_t)}{\sum_{\ell=1}^{N}\wgtfuncb{\thp'}{}(\epartilb{t-1}{\ell},\epartb{t}{\ell},y_t)}} \\
&\le \sum_{j=1}^N\left(\prod_{j'=1}^{j-1} \frac{\wgtfuncb{\thp}{}(\epartilb{t-1}{i_{j'}},\epartb{t}{i_{j'}},y_t)}{\sum_{\ell=1}^{N}\wgtfuncb{\thp}{}(\epartilb{t-1}{\ell},\epartb{t}{\ell},y_t)}\prod_{j'=j+1}^{N} \frac{\wgtfuncb{\thp'}{}(\epartilb{t-1}{i_{j'}},\epartb{t}{i_{j'}},y_t)}{\sum_{\ell=1}^{N}\wgtfuncb{\thp'}{}(\epartilb{t-1}{\ell},\epartb{t}{\ell},y_t)} \right.\\
&\left. \hspace{40mm} \times \abs{\frac{\wgtfuncb{\thp}{}(\epartilb{t-1}{i_j},\epartb{t}{i_j},y_t)}{\sum_{\ell=1}^{N}\wgtfuncb{\thp}{}(\epartilb{t-1}{\ell},\epartb{t}{\ell},y_t)}- \frac{\wgtfuncb{\thp'}{}(\epartilb{t-1}{i_j},\epartb{t}{i_j},y_t)}{\sum_{\ell=1}^{N}\wgtfuncb{\thp'}{}(\epartilb{t-1}{\ell},\epartb{t}{\ell},y_t)}} \vphantom{\prod_{j'=j+1}^{N}} \right) \\
&\le\frac{1}{(\eq^6N)^{N-1}}\sum_{j=1}^{N} \left(\frac{|\wgtfuncb{\thp}{}(\epartilb{t-1}{i_j},\epartb{t}{i_j},y_t)-\wgtfuncb{\thp'}{}(\epartilb{t-1}{i_j},\epartb{t}{i_j},y_t)|}{\sum_{\ell=1}^{N}\wgtfuncb{\thp}{}(\epartilb{t-1}{\ell},\epartb{t}{\ell},y_t)} \right.\\
&\left. \hspace{30mm} +\wgtfuncb{\thp'}{}(\epartilb{t-1}{i_j},\epartb{t}{i_j},y_t)\frac{\sum_{\ell=1}^{N}\abs{\wgtfuncb{\thp'}{}(\epartilb{t-1}{\ell},\epartb{t}{\ell},y_t)-\wgtfuncb{\thp}{}(\epartilb{t-1}{\ell},\epartb{t}{\ell},y_t)}}{\sum_{\ell'=1}^{N}\wgtfuncb{\thp}{}(\epartilb{t-1}{\ell'},\epartb{t}{\ell'},y_t)\sum_{\ell''=1}^{N}\wgtfuncb{\thp'}{}(\epartilb{t-1}{\ell''},\epartb{t}{\ell''},y_t)} \vphantom{\frac{|\wgtfuncb{\thp}{}(\epartilb{t-1}{i_j},\epartb{t}{i_j},y_t)-\wgtfuncb{\thp'}{}(\epartilb{t-1}{i_j},\epartb{t}{i_j},y_t)|}{\sum_{\ell=1}^{N}\wgtfuncb{\thp}{}(\epartilb{t-1}{\ell},\epartb{t}{\ell},y_t)}} \right) \\
&\le\frac{1}{(\eq^6N)^{N-1}} \sum_{j=1}^{N}\left(\frac{\kqt}{N\eq^3}\norm{\thp-\thp'} + \frac{\kqt }{N\eq^{9}}\norm{\thp-\thp'} \right) \\
&=\frac{\kqt N}{N^N\eq^{6N-3}}\left(1+\frac{1}{\eq^6}\right)\norm{\thp-\thp'},  \label{eq:lipsch_wbar}
 	\end{align}
	implying, via, \eqref{eq:beta:diff:rewrite}, that the first term in \eqref{eq:boundab} can be bounded as 
	\begin{equation} \label{eq:bound:first:term}
	 \frac{1}{\eq^N}\abs{\beta_{\thp}^N-\beta_{\thp'}^N} \leq 
	 \frac{\kqt N}{\eq^{7N-3}}\left(1+\frac{1}{\eq^6}\right)\norm{\thp-\thp'}
	 \mu(\{ x_t^1, \ldots, x_t^N \}^N). 
	\end{equation}
	Thus, combining \eqref{eq:boundab}, \eqref{eq:bound:second:term} and 
 \eqref{eq:bound:first:term} yields 
 \begin{equation} \label{eq:lipsch_wgtbeta}
	\abs{\beta_{\thp}^N \prod_{k=1}^{N} a_\thp^k -  \beta_{\thp'}^N \prod_{k=1}^{N}a_{\thp'}^k } \leq (\eq^6\kqt +\kqt  + \eq^4\kq )\frac{N}{\eq^{7N+3}}\norm{\thp-\thp'}
	 \mu(\{ x_t^1, \ldots, x_t^N \}^N). 
\end{equation}
	Now, by plugging the bound \eqref{eq:lipsch_wgtbeta} into \eqref{eq:t_ker_lip} we obtain
	\begin{multline}\label{eq:lipsch_T}
		\abs{\tz{\thp}h(z_t)-\tz{\thp'}h(z_t)}\le \| h \|_\infty \idotsint\hiddenstrue(x_t,x_{t+1})\emdenstrue(x_{t+1},y_{t+1})\,\mu(dx_{t+1})\,\refg(dy_{t+1})
		\\ \times(\eq^6\kqt +\kqt  + \eq^4\kq )\frac{ N}{\eq^{7N+3}}\norm{\thp-\thp'}\int_{\epartb{t}{1:N}}
		 \mu(\{ x_t^1, \ldots, x_t^N \}^N)
		(d\epartilb{t}{1:N})
		\, \mu^{\tensprod N}(d\epartb{t}{1:N})\,\indmeas^{\tensprod N}(d\auxrvb_{t+1}^{1:N})
		\\
		=   (\eq^6\kqt +\kqt  + \eq^4\kq )\frac{ N}{\eq^{7N+3}} \| h \|_\infty \norm{\thp-\thp'}.
	\end{multline}
	Now, using the decomposition, for $t \in \nsetpos$,  
		$$
		\tz{\thp}^{t+1} -\tz{\thp'}^{t+1} = \sum_{s=0}^{t} \left( \tz{\thp'}^{t-s} \tz{\thp}^{s+1} - \tz{\thp'}^{t-s+1} \tz{\thp}^{s} \right)
		=\sum_{s=0}^{t} \left( \tz{\thp'}^{t-s} \tz{\thp} \tzc{\thp}^{s} - \tz{\thp'}^{t-s} \tz{\thp'} \tzc{\thp}^{s} \right) 
		=\sum_{s=0}^{t}  \tz{\thp'}^{t-s}(\tz{\thp}-\tz{\thp'})\tzc{\thp}^s
	$$
	we obtain, using \eqref{eq:lipsch_T} and (i), the bound 
	\begin{align}
	\abs{\tz{\thp}^{t+1}h(z)-\tz{\thp'}^{t+1}h(z)} &\leq \sum_{s=0}^{t} \int \tz{\thp'}^{t-s}(z,dz') |\tz{\thp} \tzc{\thp}^s h(z') -\tz{\thp'}  \tzc{\thp}^s h(z') | 
	\\
	&\leq  (\eq^6\kqt +\kqt  + \eq^4\kq )\frac{ N}{\eq^{7N+3}}\| h \|_\infty\norm{\thp-\thp'}\sum_{s=0}^{t}\rhoq^s \\
	 &\leq (\eq^6\kqt +\kqt  + \eq^4\kq )\frac{ N}{\eq^{7N+3}(1-\rhoq)}\| h \|_\infty\norm{\thp-\thp'}.
	\end{align}
Similarly,
	\begin{align}
	  	|\tzc{\thp}^{t+1}h(z)-\tzc{\thp'}^{t+1}h(z)|&=\abs{\sum_{s=0}^{t}\iint \tzc{\thp}^s h(z'')(\tz{\thp}-\tz{\thp'})(z',dz'')\tzc{\thp'}^{t-s}(z,dz')}
		\\
		&\le \sum_{s=0}^{t} \left| \int \tzc{\thp'}^{t-s}(z, dz') (\tz{\thp} \tzc{\thp}^s h(z') -\tz{\thp'} \tzc{\thp}^s h(z')) \right|
		\\
		&\le  (\eq^6\kqt +\kqt  + \eq^4\kq )\frac{ N}{\eq^{7N+3}}\norm{\thp-\thp'}\sum_{s=0}^{t}\rhoq^{t - s} \|\tzc{\thp'}^s h(\cdot) \|_\infty 
		\\
		&\leq  (\eq^6\kqt +\kqt  + \eq^4\kq )\frac{ N}{\eq^{7N+3}}(t+1)\rhoq^t \| h \|_\infty \norm{\thp-\thp'}.
	\end{align}
	Finally, we can write, for arbitrary $t\in\nsetpos$ and $z\in\set{Z}$, 
 	\begin{align}
 		\abs{\statmeas{\thp} h -\statmeas{\thp'} h} &\le |\tz{\thp}^t h(z)-\tz{\thp'}^t h(z)|+|\tzc{\thp}^t h(z)|+|\tzc{\thp'}^t h(z)| \\
		&\le(\eq^6\kqt +\kqt  + \eq^4\kq )\frac{ N}{\eq^{7N+3}(1-\rhoq)} \| h \|_\infty  \norm{\thp-\thp'}+2\rhoq^t \| h \|_\infty, \label{eq:lipscmeas}
 	\end{align}
	implying that 
 	\begin{equation}
 		\abs{\statmeas{\thp} h -\statmeas{\thp'} h}\le (\eq^6\kqt +\kqt  + \eq^4\kq )\frac{ N}{\eq^{7N+3}(1-\rhoq)}   \| h \|_\infty \norm{\thp-\thp'}.
 	\end{equation}
 	The proof of (ii) and (iii) is now concluded by simply noting that 
	$$
	|\tzc{\thp}^{t}h(z)-\tzc{\thp'}^{t}h(z) | \le  (\eq^6\kqt +\kqt  + \eq^4\kq )\frac{ N}{\eq^{7N+3}}t\rhoq^{t/2 -1}\rhoq^{t / 2}  \| h \|_\infty  \norm{\thp-\thp'},
	$$
 	and letting 
 	\begin{equation}
 		\cq\coloneqq (\eq^6\kqt +\kqt  + \eq^4\kq )\frac{ N}{\eq^{7N+3}}\max\left\{ \sup_{t \in \nset} t\rhoq^{t / 2 - 1},\frac{1}{1-\rhoq}\right\}.
 	\end{equation}
\end{proof}

We are now ready to prove Proposition~\ref{prop:poisson}.
\begin{proof}[Proof of Proposition~\ref{prop:poisson}]
	Using Assumptions \ref{assum:2} and \ref{assum:3} and Lemma~\ref{lemma:boundswgt} we conclude that for all $(\thp,\thp') \in \parspace^2$ and $z\in\set{Z}$,  
	\begin{equation}\label{eq:bound_grad}
		\norm{\grad{\thp}(z)}\le \frac{\sum_{i=1}^{N}\norm{\nabla\wgtfuncb{\thp}{}(\epartilb{}{i},\repfunc{\thp}(\epartilb{}{i},y,\auxrvb_{}^i),y)}}{\sum_{\ell=1}^{N}\wgtfuncb{\thp}{}(\epartilb{}{\ell},\repfunc{\thp}(\epartilb{}{\ell},y, \auxrvb_{}^\ell),y)}\le \frac{\kqt}{\eq^3}, 
	\end{equation}
	from which \ref{lemmapart:1}  immediately follows. 
	
	We turn to \ref{lemmapart:2}. Using Lemma~\ref{lemma:1}(i) and \eqref{eq:bound_grad}, for every $t \in \nset$ and $z\in\set{Z}$,
		 \begin{equation}
			\norm{\mathbb{E}_\thp[\grad{\thp}(\z_t^\thp) \mid \z_0^\thp=z]-\mf{\thp}}=\norm{\tz{\thp}^{t}\grad{\thp}(z)-\mf{\thp}}=\norm{\tzc{\thp}^t\grad{\thp}(z)}
			\le \frac{\kqt}{\eq^3}\rhoq^t.
		\end{equation}
	Thus, for every $\thp\in\parspace$,
	\begin{multline*}
	0 \leq  \liminf_{t\to \infty}\norm{\mathbb{E}_\thp[ \grad{\thp}(\z_t^\thp) ] -\mf{\thp}} \leq \limsup_{t\to \infty}\norm{\mathbb{E}_\thp[ \grad{\thp}(\z_t^\thp) ] -\mf{\thp}}\\
	=\limsup_{t\to\infty}\norm{\mathbb{E}_\thp[ \mathbb{E}_\thp[\grad{\thp}(\z_t^\thp)\mid \z_0^\thp]-\mf{\thp}]}\le\limsup_{t\to \infty}\frac{\kqt}{\eq^3}\rhoq^t=0,
	\end{multline*}
	which proves \ref{lemmapart:2}. 
	
	In order to establish \ref{lemmapart:3}, 
	let
	\begin{equation}
		\htil{\thp}(z)\coloneqq\sum_{s=0}^{\infty}(\tz{\thp}^{s}\grad{\thp}(z)-\mf{\thp}), \quad z \in \set{Z}. 
	\end{equation}
	Indeed, again by Lemma~\ref{lemma:1}(i), for every $z \in \set{Z}$,  
	\begin{equation}
		\|\htil{\thp}(z)\| \leq \sum_{s=0}^{\infty}\norm{\tz{\thp}^{s}\grad{\thp}(z)-\mf{\thp}}\le \frac{\kqt}{\eq^3} \sum_{s=0}^{\infty}\rhoq^{s}= \frac{\kqt}{\eq^3(1-\rhoq)}.
	\end{equation}
	which implies that $\htil{\thp}$ and $\tz{\thp}\htil{\thp}$ are well defined and bounded. Moreover, by the dominated convergence theorem, for every $z \in \set{Z}$, 
	\begin{equation}
		\tz{\thp}\htil{\thp}(z)=\sum_{s=1}^{\infty}(\tz{\thp}^{s}\grad{\thp}(z)-\mf{\thp}), 
	\end{equation}
	implying \ref{lemmapart:3}. 
	
	To prove \ref{lemmapart:4}, write, using Lemma~\ref{lemma:boundswgt}, 
	\begin{align}
		\norm{\grad{\thp}(z)-\grad{\thp'}(z)} &\le \frac{\sum_{i=1}^{N}\norm{\nabla\wgtfuncb{\thp}{}(\epartilb{}{i},\repfunc{\thp}(\epartilb{}{i},y,\auxrvb_{}^i),y)-\nabla\wgtfuncb{\thp'}{}(\epartilb{}{i},\repfunc{\thp'}(\epartilb{}{i},y,\auxrvb_{}^i),y)}}{\sum_{\ell=1}^{N}\wgtfuncb{\thp}{}(\epartilb{}{\ell},\repfunc{\thp}(\epartilb{}{\ell},y, \auxrvb_{}^\ell),y)}
		\\
		&\hspace{20mm} +\frac{\norm{\grad{\thp'}(z)}\sum_{i=1}^{N}\abs{\wgtfuncb{\thp}{}(\epartilb{}{i},\repfunc{\thp}(\epartilb{}{i},y,\auxrvb_{}^i),y)-\wgtfuncb{\thp'}{}(\epartilb{}{i},\repfunc{\thp'}(\epartilb{}{i},y,\auxrvb_{}^i),y)}}{\sum_{\ell=1}^{N}\wgtfuncb{\thp}{}(\epartilb{}{\ell},\repfunc{\thp}(\epartilb{}{\ell},y,\auxrvb_{}^\ell),y)} \\
		&\le\left(\frac{\kqt}{\eq^3}+\frac{\kqt^2}{\eq^6}\right)\norm{\thp-\thp'}, \label{eq:lipsch_grad}
	\end{align}
	and by \eqref{eq:lipsch_grad} and Lemma~\ref{lemma:1} it holds that for every $z \in \set{Z}$, 
	\begin{align}
		\norm{\tzc{\thp}^t\grad{\thp}(z)-\tzc{\thp'}^t\grad{\thp'}(z)} &\le  \norm{\tzc{\thp}^t \grad{\thp}(z')- \tzc{\thp}^t \grad{\thp'}(z')} 
		- \norm{\tzc{\thp}^t\grad{\thp'}(z)-\tzc{\thp'}^t\grad{\thp'}(z)} 
		\\
		&\le \left(\frac{\kqt}{\eq^3}+\frac{\kqt^2}{\eq^6}\right) \rhoq^t \norm{\thp-\thp'}+\frac{\kqt}{\eq^3} \cq\rhoq^{t / 2} \norm{\thp-\thp'} \\
		&\le 2 \left(\frac{\kqt}{\eq^3}+\frac{\kqt^2}{\eq^6}\right) \cq\rhoq^{t / 2}\norm{\thp-\thp'}.
	\end{align}
	Thus, we may write
	\begin{align}
	\norm{\tz{\thp}\htil{\thp}(z)-\tz{\thp'}\htil{\thp'}(z)} &= \left \| \sum_{s=1}^{\infty}(\tz{\thp}^{s}\grad{\thp}(z)-\mf{\thp})-\sum_{s=1}^{\infty}(\tz{\thp'}^{s}\grad{\thp'}(z)-\mf{\thp'}) \right\| \\
	&\leq \sum_{s=0}^{\infty}\norm{\tzc{\thp}^{s}\grad{\thp}(z)-\tzc{\thp'}^{s}\grad{\thp'}(z)} \\
	&\le \left(\frac{\kqt}{\eq^3}+\frac{\kqt^2}{\eq^6}\right) \frac{2\cq}{1-\sqrt{\rhoq}}\norm{\thp-\thp'}. 
	\end{align}
	Finally, by Lemma~\ref{lemma:1} and \eqref{eq:lipsch_grad} again, we have
	\begin{equation}
		\norm{\mf{\thp}-\mf{\thp'}} \le\int\norm{ \grad{\thp}(z)-\grad{\thp'}(z)}\, \statmeas{\thp}(dz)+ 
		\|\statmeas{\thp} \grad{\thp'} -\statmeas{\thp'} \grad{\thp'} \| 
		\le \left(\frac{\kqt}{\eq^3}+\frac{\kqt^2}{\eq^6}\right)(1+\cq)\norm{\thp-\thp'}, 
	\end{equation}
	which allows us, by letting 
	$$\aq\eqdef \left(\frac{\kqt}{\eq^3}+\frac{\kqt^2}{\eq^6}\right) \frac{2\cq}{1-\sqrt{\rhoq}},
	$$
	to conclude the proof of \ref{lemmapart:4}.
\end{proof}

\section{ESS improvement for the model in Section~\ref{subsec:lg}}\label{sec:ess}
Figure~\ref{fig:lg_essall} shows how the effective sample size \citep[ESS,][]{liu:1996} of the particle cloud improves while $\propdens{\thprop}$ is being learned in univariate linear Gaussian model of Section~\ref{subsec:lg}, to finally reach the performance of the optimal proposal. This is evident for $S_v=0.2$; on the other hand, when $S_v=1.2$, the particles are well propagated into regions of non-negligible likelihood even with the bootstrap proposal, resulting in the normalized ESS being close to one regardless. 
\begin{figure}[htb]
	\centering
		\includegraphics[width=\foraistats{0.49}\forarxiv{0.49}\columnwidth]{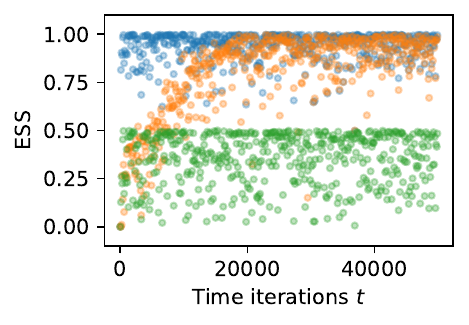}
		\includegraphics[width=\foraistats{0.49}\forarxiv{0.49}\columnwidth]{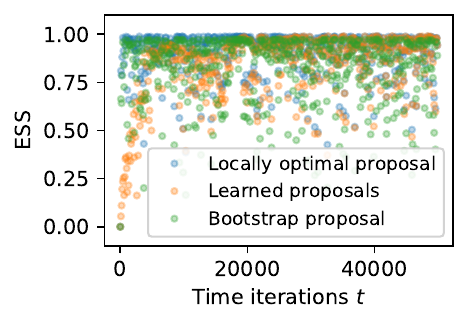}
	\caption{Evolution of (every $100^{\tiny{\mbox{th}}}$) normalized ESS for the one-dimensional linear Gaussian SSM in Section~\ref{subsec:lg},  for $S_v=0.2$ (left) and $S_v=1.2$ (right).}\label{fig:lg_essall}
\end{figure}

\section{Details on the deep generative model of Section~\ref{subsec:ball}}\label{appendix:b}
In this section we provide more details on the model presented in Section~\ref{subsec:ball}.

\subsection{Data generation}\label{subsec:datagenball}
First, we describe how the frames of the video are generated. The movement can shortly be described as a partially observed Gaussian random walk confined to a rectangle. More specifically, the agent moves on the rectangle $[0,1]\times [0,5]\subset\rset^2$, starting from a uniformly sampled point on $[0,1]\times\{5\}$. It then moves according to a bivariate Gaussian random walk with covariance matrix $0.004I$ and vertical drift being initially $-0.15$ and changing sign every time that either bottom or top edges are hit. In practice, the agent bounces every time it hits any edge. Then each frame is created by projecting the rectangle into a $32\times 32$ array and giving the agent an approximately round shape by overlapping, as a plus sign, two $3\times 5$ rectangles, one vertical and one horizontal. In the created arrays, the background has value zero, while the pixels representing the agent are equal to one. All the frames have an horizontal $16\times 30$ rectangle in the center, which (partially) occludes the view of the agent every time it (partially) falls in that area. This area is represented by $0.5$-valued pixels.
\subsection{Model architecture}\label{subsec:vrnnball}
 Inspired by \citet[Section~C.1]{le:2018}, the model is constructed on the basis of the \emph{variational recurrent neural networks} \citep[VRNN, ][]{chung:2015} framework. More precisely, the generative model is represented by the process $(X_{t+1}, H_t, Y_{t+1})_{t\in\nsetpos}$, with joint density 
\begin{equation}
	p(x_{1:T}, h_{0:T}, y_{1:T})= p_{0}(h_0)\prod_{t=0}^{T-1}\hiddens{\thp}(x_{t+1}\mid h_t)\emdens{\thp}(y_{t+1}\mid h_t, x_{t+1})p_{\thprop,\thp}(h_{t+1}\mid h_t,x_{t+1},y_{t+1}),
\end{equation}
where $T$ is a fixed time horizon, $y_{1:T}$ represent the frames of the video, $x_{1:T}$ some lower-dimensional latent states and $h_{0:T}$ are the so-called \emph{hidden states} of the \textit{gated recurrent unit} ({\texttt{GRU}}), which is the specific \emph{recurrent neural network} (RNN) used within the whole architecture. The initial and transition distributions of the generative model are 
\begin{align}
	H_0&\sim \Norm (0,I),
	\\X_{t+1}\mid H_{t}=h_{t}&\sim\Norm(\mu_\thp^x(h_{t}), \sigma_\thp^x(h_t)^2),
	\\Y_{t+1}\mid H_t=h_t,X_{t+1}=x_{t+1}&\sim \operatorname{Bernoulli}(\mu_\thp^y(\varphi_\thp^x(x_{t+1}),h_t)),
	\\H_{t+1}\mid H_t=h_t,X_{t+1}=x_{t+1},Y_{t+1}=y_{t+1}&\sim \delta_{\texttt{GRU}_\thprop(h_t,\varphi_\thp^x(x_{t+1}),\varphi_\thprop^y(y_{t+1}))},
\end{align}
for $t\in\nset$, while the proposal $\propdens{\thprop}(x_{t+1}\mid y_{t+1}, h_t)$ is given by 
\begin{equation}
	X_{t+1}\mid Y_{t+1}=y_{t+1}, H_{t}=h_t\sim \Norm(\mu_\lambda^p(\varphi_\thprop^y(y_{t+1}),h_t), \sigma_\lambda^p(\varphi_\thprop^y(y_{t+1}),h_t)^2).
\end{equation}
More in detail, the model comprises the following neural networks.
\begin{itemize}
	\item $\mu_\thp^x$ and $\sigma_\thp^x$ have two dense layers, the second one being the output, with each $128$ nodes, corresponding then to the size of the latent states, whose first layer is shared. The activations of the first layers are ReLU functions, while the activations of the output layer are linear and softplus, respectively.
	\item $\varphi_\thp^x$ is a single dense output layer with 128 nodes and the ReLU activation function.
	\item $\varphi_\thprop^y$ is the encoder of the frames and is represented by a sequential architecture with four convolutional layers, all with $4\times 4$ filters, stride two and padding one, except the last one which has stride one and zero padding; the numbers of filters are, in order, 32, 128, 64 and 32. The activation functions of the convolutions are leaky ReLUs with slope $0.2$ and we add batch normalization layers after each of them (except the last one which simply has linear activation). In the end the output is flattened to obtain a tensor in $\rset^{32}$.
	\item $\mu_\thp^y$ is the decoder, which is modeled by a sequential architecture with transposed convolutions. In particular, the first layer has 128 $4\times 4$ filters with stride one and zero padding. Then we have two layers with 64 and 32  $4\times 4$ filters, respectively, while the last one has a single $4\times 4$ filter; all these have stride two and padding one. We use again leaky ReLUs as activations with slope 0.2 and batch normalization layers. The activation function of the output layer is instead a sigmoid, in order to obtain an output in $(0,1)^{32\times 32}$.
	\item $\texttt{GRU}_\thprop$ is a \emph{gated recurrent unit} RNN, which takes as input the concatenated outputs of $\varphi_\thp^x$ and $\varphi_\thprop^y$ to produce deterministically the next hidden state $h_{t+1}$ of the RNN. Here $\texttt{GRU}_\thprop$ takes $h_t$ as an additional input to model the recurrence, but in practice, during training we need to input a time series of (functions of) latent states and frames. Since we deal with streaming data, it would be infeasible to input the whole history at every iteration, and we hence include only the 40 most recent latent states and frames when learning the $\texttt{GRU}_\thprop$.
	\item $\mu_\lambda^p$ and $ \sigma_\lambda^p$ have three dense layers of size 128 including the output and share the first two layers, with ReLUs activations except the last layers, which has linear and softplus functions, respectively.
\end{itemize}
We note that the subscripts to the neural networks defined above indicate which optimizer---the one for the generative model or the one for the proposal---that is used. Even if the {\texttt{GRU}} is supposed to be part of the generative model, we noticed a learning improvement by considering it part of the proposal, motivated by the fact that it is the first (deterministic) sampling operation in the propagation of the particles. In order to describe more clearly our procedure, we have displayed its pseudocode in Algorithm~\ref{algo:ovpf_vrnn}. In our notation, $(\epart{t-39:t}{x,i})_{i=1}^N$ are the particles representing the latent states of the process, while $(\epart{t-40:t-1}{h,i})_{i=1}^N$ refer to the hidden states of the {\texttt{GRU}}.
Note that for most of the variables involved, direct dependencies on the parameters are omitted for a less cumbersome notation. We remark that for iterations $t< 40$, the starting time of the vectors of particles must be considered one for $x$ and zero for $h$.

\begin{algorithm}[htb]
	\caption{{\OVSMC} for deep generative model of moving agent of Section~\ref{subsec:ball}.}\label{algo:ovpf_vrnn}
	\begin{algorithmic}[1]
		\STATE {\bfseries Input:} $(\epart{t-39:t}{x,i},\epart{t-40:t-1}{h,i},\wgt{t}{i})_{i=1}^N, y_{t-39:t+1}, \thp_t,\thprop_t$
		\FOR{$i \gets 1,\dots, L$}
		\STATE draw $\I{t+1}{i}\sim \catdist((\wgt{t}{\ell})_{\ell=1}^N)$;
		\STATE set $\epart{t-39:t}{h,i}\gets \texttt{GRU}_{\thprop_t}(\texttt{initial\_state}=\epart{t-40}{h,\I{t+1}{i}},\varphi_{\thp_t}^x(\epart{t-39:t}{x,\I{t+1}{i}}), \varphi_{\thprop_t}^y(y_{t-39:t}))$;
		\STATE draw $\auxrv_{t+1}^i\sim \Norm(0,I_{128})$;
		\STATE set  $\epart{t+1}{x,i}\gets \mu_{\thprop_t}^x(\varphi_{\thprop_t}^y(y_{t+1}),\epart{t}{h,i})+\sigma_{\thprop_t}^x(\varphi_{\thprop_t}^y(y_{t+1}),\epart{t}{h,i}) \auxrv_{t+1}^i$;
		\STATE set $\wgtfunc{t+1}{i}(\thprop_t,\thp_t)\gets \dfrac{\hiddens{\thp_t}(\epart{t+1}{x,i}\mid \epart{t}{h,i})\emdens{\thp_{t}}(y_{t+1}\mid \epart{t}{h,i},\epart{t+1}{x,i})}{\propdens{\thprop_t}(\epart{t+1}{x,i}\mid y_{t+1},\epart{t}{h,i})}$;
		\ENDFOR
		\STATE set $\thprop_{t+1}\gets\thprop_t+\gamma_{t+1}^\thprop\nabla_{\thprop}\log\left(\sum_{i=1}^N \wgtfunc{t+1}{i}(\thprop_t,\thp_t)\right)$;
		\FOR{$i \gets 1,\dots,N$}
		\STATE draw $\I{t+1}{i}\sim \catdist((\wgt{t}{\ell})_{\ell=1}^N)$;
		\STATE set $\epart{t-39:t}{h,i}\gets \texttt{GRU}_{\thprop_{t+1}}(\texttt{initial\_state}=\epart{t-40}{h,\I{t+1}{i}},\varphi_{\thp_t}^x(\epart{t-39:t}{x,\I{t+1}{i}}), \varphi_{\thprop_{t+1}}^y(y_{t-39:t}))$;
		\STATE draw $\auxrv_{t+1}^i\sim \Norm(0,I_{128})$;
		\STATE set  $\epart{t+1}{x,i}\gets \mu_{\thprop_{t+1}}^x(\varphi_{\thprop_{t+1}}^y(y_{t+1}),\epart{t}{h,i})+\sigma_{\thprop_{t+1}}^x(\varphi_{\thprop_{t+1}}^y(y_{t+1}),\epart{t}{h,i})\auxrv_{t+1}^i$;
		\STATE set $\wgtfunc{t+1}{i}(\thprop_{t+1},\thp_t)\gets \dfrac{\hiddens{\thp_t}(\epart{t+1}{x,i}\mid \epart{t}{h,i})\emdens{\thp_{t}}(y_{t+1}\mid \epart{t}{h,i},\epart{t+1}{x,i})}{\propdens{\thprop_{t+1}}(\epart{t+1}{x,i}\mid y_{t+1},\epart{t}{h,i})}$;
		\STATE set $\epart{t-38:t+1}{x,i}\gets(\epart{t-38:t}{x,\I{t+1}{i}},\epart{t+1}{x,i})$;
		\ENDFOR
		\STATE set $\thp_{t+1}\gets\thp_t+\gamma_{t+1}^\thp\nabla_\thp\log\left(\sum_{i=1}^N \wgtfunc{t+1}{i}(\thprop_{t+1},\thp_t)\right)$;
		\STATE {\bfseries return} $(\epart{t-38:t+1}{x,i},\epart{t-39:t}{h,i},\wgt{t+1}{i})_{i=1}^N,\thp_{t+1},\thprop_{t+1}$.
	\end{algorithmic}
\end{algorithm}

\end{document}